\begin{document}

\title{Deeply Learned Robust Matrix Completion for Large-scale Low-rank Data Recovery}

\author{HanQin Cai, Chandra Kundu, Jialin Liu, and Wotao Yin
\thanks{Some preliminary results were previously presented in NeurIPS \cite{cai2021learned}.}
\thanks{
H.Q.~Cai and J.~Liu are with the School of Data, Mathematical, and Statistical Sciences and the Department of Computer Science, University of Central Florida, Orlando, FL 32816, USA. (e-mail: hqcai@ucf.edu; jialin.liu@ucf.edu)}
\thanks{C.~Kundu is with the School of Data, Mathematical, and Statistical Sciences, University of Central Florida, Orlando, FL 32816, USA. (e-mail: chandra.kundu@ucf.edu)}
\thanks{
W.~Yin is with Damo Academy, Alibaba US, Bellevue, WA 98004, USA. (e-mail: wotao.yin@alibaba-inc.com)
}
\thanks{
This work was partially supported by NSF DMS 2304489. The authors contributed equally. Corresponding author: HanQin Cai.
}
}


\maketitle

\begin{abstract}
Robust matrix completion (RMC) is a widely used machine learning tool that simultaneously tackles two critical issues in low-rank data analysis: missing data entries and extreme outliers. This paper proposes a novel \textit{scalable} and \textit{learnable} non-convex approach, coined Learned Robust Matrix Completion (LRMC), for large-scale RMC problems. LRMC enjoys low computational complexity with linear convergence. Motivated by the proposed theorem, the free parameters of LRMC can be effectively learned via deep unfolding to achieve optimum performance. Furthermore, this paper proposes a \textit{flexible} feedforward-recurrent-mixed neural network framework that extends deep unfolding from fixed-number iterations to infinite iterations. The superior empirical performance of LRMC is verified with extensive experiments against state-of-the-art on synthetic datasets and real applications, including video background subtraction, ultrasound imaging, face modeling, and cloud removal from satellite imagery. 
\end{abstract}

\begin{IEEEkeywords}
robust matrix completion, outlier detection, deep unfolding, learning to optimize, video background subtraction, ultrasound imaging, face modeling, cloud removal
\end{IEEEkeywords}

\section{Introduction} \label{sec:introduction}
Dimension reduction is one of the foundation tools for modern data science and machine learning. 
In real applications, one major challenge for large-scale dimension reduction is \textit{missing data entries}, which often occurs due to environmental, hardware, or time constraints. Given the data in the form of low-rank matrices, the complex task of inferring these missing data entries from the available data is known as matrix completion \cite{candes2009exact, candes2010power}, whose significance extends across diverse machine learning tasks, including collaborative filtering \cite{bennett2007kdd,mongia2021matrix,cai2023ccs}, image processing \cite{chen2004recovering,hu2012fast,chen2022color}, signal processing \cite{qu2015accelerated,cai2019fast,cai2023structured,cai2025hsnld}, sensor localization \cite{singer2010uniqueness,tasissa2018exact,smith2025riemannian,kundu2025redg} and traffic time series \cite{chen2024Laplacian,chen2024forecasting}. 
The complication of matrix completion is further amplified when some of the partially observed data are corrupted by \textit{impulse noise} or \textit{extreme outliers}. To address these challenges, Robust Matrix Completion (RMC) emerges as a powerful solution, tackling low-rank data recovery problems with both missing entries and outlier corruptions simultaneously \cite{chen2013low,cherapanamjeri2017nearly,zeng2017outlier,cai2021robust,wang2024robust,cai2024rccs}. 

Suppose we partially observe some entries of a rank-$r$ matrix $\Xs \in \reals^{n_1 \times n_2}$, and some of the observations are grossly corrupted by sparse outliers, namely $\Ss$. Mathematically, the task of RMC aims to simultaneously recover the rank-$r$ matrix $\Xs$ and sparse matrix $\Ss$  from the observations
\[
\Pi_\Omega\Y := \Po{\Xs + \Ss},
\] 
where $\Po{\cdot}$ is a sampling operator that sets  $[\Po{\Y}]_{i,j} = [\Y]_{i,j}$ if $(i,j) \in \Omega$ and $[\Po{\Y}]_{i,j} = 0$ otherwise. In this study, the set of observed locations $\Omega$ is sampled independently according to the Bernoulli model with a probability $p \in (0,1]$.


RMC problems have been extensively studied in recent years, and vast classic-style algorithms have been proposed. While many of them solve the RMC problems effectively, they are often computationally too expensive, especially when the problem scales get larger and larger in this era of big data. Inspired by the recent success of deep unfolding in sparse coding \cite{gregor2010learning}, one can naturally extend such techniques to RMC algorithms. 
In particular, a classic iterative RMC algorithm can be parameterized and unfolded as a feedforward
neural network (FNN). Then, certain parameters of the algorithms, which are now also the parameters of the FNN, can be learned through backpropagation. 
Given a specific application of RMC, the problem instances often share similar key
properties, e.g., observation rate, rank, incoherence, and outlier density. Thus, the algorithm with learned parameters can have superior performance than its baseline. 
However, the existing learning-based RMC methods have two common issues. Firstly, they often invoke an expensive step of singular value decomposition (SVD) iteratively, in both the training and inference stages. Note that SVT costs as much as $\cO(n^3)$ flops. Hence, their scalability is questionable for large-scale RMC problems. Secondly, the existing unfolding framework learns the parameters for only a finite number of iterations. In case a user desires a better accuracy than the current learned algorithm can reach, the algorithm has to be relearned with more unfolded iterations. 
To overcome these issues, under some theoretical guidelines, this paper proposes a \textit{scalable}, \textit{learnable}, and \textit{flexible} non-convex framework for solving large-scale RMC problems in a highly efficient fashion.



\vspace{-0.1in}
\subsection{Related work and main contributions}
Robust matrix completion problems are known as robust principal component analysis (RPCA) with partial observations in the earlier works \cite{lin2010augmented,candes2011robust,chen2013low}. With convex relaxed models of RMC, the recovery can be guaranteed with at least $\cO(\mu r n \log^6(n))$ observations and no more than $\cO(1/\mu r)$ portion of outliers, where the incoherence parameter $\mu$ will later be formally defined in \Cref{sec:theoretical results}, along with assumptions of the observation and outlier patterns. However, the convex approaches suffer from theoretical sublinear convergence, and the typical per-iteration computational complexity is as high as $\cO(n^3)$ flops.
Many efficient non-convex approaches have been proposed for RMC problems later \cite{yi2016fast,zeng2017outlier, cherapanamjeri2017nearly,zhang2018robust,tong2021accelerating}. Under general settings, the non-convex algorithms typically provide linear convergence with $p\geq\cO(\mathrm{poly}(\mu r) \mathrm{polylog}(n)/n)$ random observations and $\alpha\leq\cO({1}/{\mathrm{poly}(\mu r)})$ portion of outliers. The computational complexities are as low as $\cO(p n^2 r+n r^2 + \alpha p n^2 \log(pn))$ flops \cite{yi2016fast,tong2021accelerating}. Note that the iteration complexity of GD \cite{yi2016fast} depends on the condition number $\kappa$ of $\Xs$, i.e., $\cO(\kappa\log(1/\varepsilon))$ iterations to find an $\varepsilon$-solution, which is improved to $\cO(\log(1/\varepsilon))$ iterations by ScaledGD \cite{tong2021accelerating}.

Deep unfolding has emerged as a powerful technique for accelerating iterative algorithms in various low-rank problems. Originating from the parameterization of the Iterative Shrinkage-Thresholding Algorithm (ISTA) for LASSO in LISTA \cite{gregor2010learning}, this approach has been extended to numerous problems and network architectures \cite{xin2016maximal, yang2016deep, metzler2017learned, zhang2018ista, adler2018learned, liu2019alista, monga2021algorithm, han2024wtdun, shah2024optimization, zou2024proximal, karan2024unrolled}. Another related technique, ``learning to learn,'' explores recurrent neural networks for parameterizing iterative algorithms, demonstrating promising results \cite{Andrychowicz2016, Wichrowska2017learned, Metz2019understanding, harrison2022closer, liu2023ms4l2o, he2024mathematics}.

While deep unfolding has been successfully applied to RPCA \cite{cohen2019deep, solomon2019deep, luong2021a, markowitz2022multimodal, joukovsky2023interpretable}, its application to the more challenging RMC problem remains limited. Existing RPCA methods often rely on computationally expensive singular value thresholding (SVT) \cite{cai2010singular} for low-rank approximation [12, 13], hindering scalability. Moreover, they typically focus on specific problem domains, such as ultrasound imaging or video processing, limiting their generalizability. Denise \cite{herrera2020denise} proposes a deep unfolding approach for positive semidefinite low-rank matrices, achieving significant speedups but with limited applicability to general RMC problems.

To address these limitations, this paper proposes a novel learning-based method, coined Learned Robust Matrix Completion (LRMC) and summarized in \Cref{algo:LRMC}, for large-scale RMC problems. Our main contributions are fourfold:
\begin{enumerate}[leftmargin=0.2in]
\item LRMC is \textit{scalable} and \textit{learnable}. It costs merely $\cO(pn^2r)$ flops and avoids the costly iterative SVD and partial sorting.
All operators of LRMC are differentiable with respect to the parameters, thus learnable through backpropagation.


\item By establishing an exact recovery guarantee for a special case of LRMC in Theorem~\ref{thm:main_theorem}, 
we theoretically reveal the superior potential of LRMC compared to its baseline. Specifically, it confirms that there exists a set of parameters for LRMC to outperform the baseline.


\item We introduce a novel feedforward-recurrent-mixed neural network (FRMNN) model for deep unfolding. FRMNN first unfolds a finite number of significant iterations into an FNN and learns the parameters layer-wise, then it learns an RNN for parameter updating rules for the
subsequent iterations. As later shown in Fig.~\ref{fig:nn-structure}, 
FRMNN extends the deep unfolding models to a \textit{flexible} number of iterations without retraining or losing performance.

\item The empirical advantages of LRMC are confirmed with extensive numerical experiments in Section~\ref{sec:numerical}. Notably, LRMC shows superior performance on large-scale real applications, e.g., video background subtraction and cloud removal in satellite imagery, that are forbiddingly expensive for existing learning-based RMC approaches.
\end{enumerate}

\vspace{-0.1in}
\subsection{Notation}
For any matrix $\BM \in \mathbb{R}^{n_1 \times n_2}$, the $(i,j)$-th entry is denoted by $[\BM]_{i,j}$. For any index set $\Omega \subseteq [n_1] \times [n_2]$, we define $\Omega_{(i,:)} \coloneqq \{ (i,j) \in \Omega \mid j \in [n_2] \}$ and $\Omega_{(:,j)} \coloneqq \{ (i,j) \in \Omega \mid i \in [n_1] \}$. We define the projection operator $\Po{\BM}$ such that $[\Po{\BM}]_{i,j} = [\BM]_{i,j}$ if $(i,j) \in \Omega$ and $[\Po{\BM}]_{i,j} = 0$ otherwise. The $i$-th singular value of a matrix is denoted by $\sigma_i(\BM)$, entrywise $\ell_1$-norm is denoted by $\|\BM\|_1 := \sum_{i,j} |[\BM]_{i,j}|$, spectral norm is denoted by $\|\BM\|_{2}=\sigma_1(\BM)$, Frobenius norm is denoted by $\|\BM\|_\fro := ( \sum_{i,j} [\BM]_{i,j}^2 )^{1/2}$, and nuclear norm is denoted by $\|\BM\|_* := \sum_i \sigma_i(\BM)$. The largest magnitude entry is denoted by $\|\BM\|_\infty := \max_{i,j} |[\BM]_{i,j}|$, the largest row-wise $\ell_2$-norm is denoted by $\|\BM\|_{2,\infty} := \max_i ( \sum_j [\BM]_{i,j}^2 )^{1/2}$, and the largest row-wise $\ell_1$-norm is denoted by $\|\BM\|_{1,\infty}:= \max_i \sum_j |[\BM]_{i,j}|$. Let $(\cdot)^\top$ denote the transpose operator, and $\lor$ denote logical disjunction, which takes the max of two operands. The condition number of $\Xs$ is denoted by $\kappa := \frac{\sigma_1(\Xs)}{\sigma_r(\Xs)}$.

\section{Proposed Approach}
Consider the following RMC objective function: 
\begin{equation} \label{eq:obj_func}
    \begin{split}
        &\minimize_{\L,\R,\S}~~f(\BL,\BR;\BS)=\frac{1}{2p}\|\Pi_{\Omega}(\L\R^\top+\S-\Y)\|_\fro^2 \cr
&\subject  \supp (\S) \subseteq \supp(\Ss) ,
    \end{split}
\end{equation}
where $p:=\frac{|\Omega|}{n_1n_2}$ is the sampling rate, $\L \in \reals ^ {n_1 \times r}$ and $\R \in \reals ^ {n_2 \times r}$ are two tall matrices, and $\S \in \reals ^ {n_1 \times n_2}$ is sparse. 
This formulation rewrites $\X = \L \R \T$ to avoid the non-convex low-rank constraint on $\X$. Note that $\BS=\Po{\BS}$ since the outliers are only present in the observed entries. 
To solve problem \eqref{eq:obj_func}, we propose a novel scalable and learnable algorithm, coined Learned Robust Matrix Completion (LRMC).  
It operates in two phases: initialization and iterative updates. The initialization phase leverages a modified spectral initialization, specifically tailored to the RMC problem with learnable parameters. In the phase of iterative updates, we alternatively update the sparse and low-rank components in the fashion of scaled gradient descent, inspired by ScaledGD \cite{tong2021accelerating}; however, we redesign the operators so that all parameters are differentiable and thus become learnable. 
LRMC is summarized as Algorithm~\ref{algo:LRMC} and is detailed in the following subsections.
For ease of presentation, the discussion starts with the iterative updates.

\begin{algorithm}
\caption{Learned Robust Matrix Completion (LRMC)} \label{algo:LRMC}
\begin{algorithmic}[1]
    \State \textbf{Input:} $\Pi_\Omega\Y := \Po{\Xs + \Ss}$:  observed data matrix; $p:=|\Omega|/n_1n_2$:  sampling rate; $r$: the rank of underlying low-rank matrix; $\{\zeta_k\}$: a set of \textit{learnable} thresholding parameters; $\{\eta_k\}$: a set of \textit{learnable} step sizes. 
    \State \algcomment{\# Initialization:}
    \State $\S_0 = \fcS_{\zeta_0}(\Pi_\Omega\Y)$
    \State $[\U_0, \Sigmab_0, \V_0] =  \SVD_r(\pinv\Po{\Y-\S_0})$
    \State $\L_0 = \U_0 \Sigmab_0^{1 / 2}$ and $\R_0 = \V_0 \Sigmab_0^{1 / 2}$
    \\
    \algcomment{\# Iterative updates:}
    \While{ Not\texttt{(Stopping Condition)} }
        \State $\S_{k+1} = \fcS_{\zeta_{k+1}}\left(\Po{\Y-\L_k \R_k\T}\right)$
        \State $\L_{k+1} = \L_k - \eta_{k+1}  \grad_{\L} f_k \cdot (\R_k^\top \R_k)^{-1}$     
        \State $\R_{k+1} = \R_k - \eta_{k+1} \grad_{\R} f_k \cdot (\L_k^\top \L_k)^{-1}$   
    \EndWhile
    \\
    \textbf{Output:} $\X=\L_K \R_K^{\top}$: the recovered low-rank matrix.
\end{algorithmic}
\end{algorithm}

\vspace{-0.1in}
\subsection{Iterative updates}
\noindent \textbf{Updating sparse component.} 
In the baseline algorithms, ScaledGD \cite{tong2021accelerating} and GD \cite{yi2016fast}, the sparse outlier matrix is updated by $\BS_{k+1} = \cT_{\widetilde{\alpha}} (\Pi_\Omega(\BY - \BL_k\BR_k^\top))$, where the sparsification operator $\cT_{\widetilde{\alpha}}$ retains only the $\widetilde{\alpha}$-fraction largest-in-magnitude  entries in each row and column: 
    \[
    &  [\mathcal{T}_{\widetilde{\alpha}}(\BM)]_{i,j}
    = \begin{cases} 
    [\BM]_{i,j}, & \text{if } |[\BM]_{i,j}| \geq |[\BM]_{i,:}^{(\widetilde{\alpha}n_2)}|  \\
     & ~~\text{ and   }  |[\BM]_{i,j}| \geq |[\BM]_{:,j}^{(\widetilde{\alpha}n_1)}|; \\
    0, & \text{otherwise}.
    \end{cases} 
    \]
Although it ensures the sparsity in  $\BS$ updating, $\cT_{\widetilde{\alpha}}$ requires an accurate estimate of $\alpha$ and its execution involves expensive partial sorting on each row and column. Note that partial sorting becomes even more costly when $\alpha$ is large or $\widetilde{\alpha}$ is much overestimated. Moreover, deep unfolding and parameter learning cannot be applied to $\cT_{\widetilde{\alpha}}$ since it is not differentiable with respect to its parameter. 
Hence, we aim to find an efficient and effectively learnable operator to replace $\cT_{\widetilde{\alpha}}$.

The well-known soft-thresholding $\cS_\zeta$: 
\begin{align}  \label{eq:soft-thresholding}
    [\cS_\zeta(\BM)]_{i,j} = \sign([\BM]_{i,j}) \cdot \max\{0,|[\BM]_{i,j}| - \zeta\}
\end{align}
catches our attention. Soft-thresholding has been applied as a proximal operator of vectorized $\ell_1$-norm in some RMC literature \cite{lin2010augmented}. However, a fixed threshold, which is determined by the regularization parameter, leads to only sublinear convergence of the algorithm. Inspired by AltProj, LISTA, and their followup works \cite{netrapalli2014non,accaltproj,gregor2010learning,xin2016maximal,yang2016deep,metzler2017learned,zhang2018ista,adler2018learned,liu2019alista,cai2020rapid,monga2021algorithm,hamm2022riemannian}, we seek for a set of thresholding parameters $\{ \zeta_k \}$ that let our algorithm outperform the baseline ScaledGD.

In fact, we find that the simple soft-thresholding:
\begin{align}  \label{eq:S updating}
    \BS_{k+1}=\cS_{\zeta_{k+1}}(\Pi_\Omega(\BY-\BL_k\BR_k^\top))
\end{align}
can provide a linear convergence if we carefully choose the thresholding parameters, and the selected $\{\zeta_k\}$ also ensure $\supp(\BS_k)\subseteq\supp(\BS_\star)$ at every iteration, i.e., no false-positive outliers. Moreover, the proposed algorithm with selected thresholds can converge even faster than ScaledGD under the exact same setting (see Theorem~\ref{thm:main_theorem} in the next section for a formal statement.) Nevertheless, the theoretical selection of $\{\zeta_k\}$ relies on some knowledge of $\BX_\star$, which is usually unknown to the user. Fortunately, these thresholds can be reliably learned using deep unfolding techniques.

\vspace{0.05in}
\noindent \textbf{Updating low-rank component.} 
By parameterizing the low-rank component as the product of a tall matrix and a fat matrix, the low rankness of $\BX$ is enforced automatically; thus the expensive SVD is not required in every iteration of the algorithm. 
Denote the loss function as:
\[
f_k := f(\L_k,\R_k; \S_k) = \frac{1}{2p}\| \Pi_\Omega(\L_k\R_k\T + \S_k - \Y) \|_\fro^2.
\] 
The gradients, with respect to $\BL$ and $\BR$, can be easily computed:
\begin{equation}
        \begin{split}   
    \grad_{\L} f_k &= \pinv \Pi_\Omega(\L_k\R_k\T + \S_k - \Y)\R_k, \\ 
    \grad_{\R} f_k &= \pinv \Pi_\Omega (\L_k\R_k\T + \S_k - \Y)\T\L_k. 
            \end{split}
\end{equation} 
We could apply a step of gradient descent on each of $\BL$ and $\BR$. However, \cite{tong2021accelerating,cai2025hsnld,giampouras2025opsa} finds the vanilla gradient descent approaches (e.g., \cite{yi2016fast,cai2023hsgd}) suffer from ill-conditioning and thus introduces the scaled terms $(\BR_k^\top\BR_k)^{-1}$ and $(\BL_k^\top \BL_k)^{-1}$ to overcome this weakness. In particular, we follow ScaledGD and update the low-rank component with:
\begin{equation} \label{eq:X updating}
\begin{split}
    \BL_{k+1}&=\BL_k-\eta_{k+1}\nabla_\BL f_k \cdot (\BR_k^\top\BR_k)^{-1}, \cr
    \BR_{k+1}&=\BR_k-\eta_{k+1}\nabla_\BR f_k \cdot (\BL_k^\top \BL_k)^{-1},
\end{split}
\end{equation}
where $\eta_{k+1}$ is the learnable step size at the $(k+1)$-th iteration. 
While ScaledGD and GD use a fixed step size through all iterations, \cite{takabe2020theoretical} suggests that a set of optimal step sizes $\{\eta_k\}$ can be learned for gradient descent methods. 
\cite[Theorem~III.4]{takabe2020theoretical} suggests that the optimal step sizes $\{\eta_k\}$ for finite-step gradient descents should be in the form of decayed cosine oscillation, which can be learned via deep unfolding.

\begin{remark}
LRMC assumes the prior knowledge of $\rank(\BX_\star)$, which is commonly used in many non-convex matrix recovery approaches. In some applications, the rank of $\BX_\star$ can be easily obtained from prior knowledge, e.g., Euclidean distance matrices are rank-$(d+2)$ for the system of $d$-dimensional points \cite{tasissa2018exact}. However, in other settings, the true rank may be difficult to determine, and LRMC is not designed to learn the true rank from data. A learnable framework that is robust to rank misestimation will be investigated in a follow-up project.
\end{remark}

\vspace{-0.1in}
\subsection{Initialization}
LRMC uses a modified spectral approximation 
with learnable soft-thresholding for the initialization. That is, we first initialize the sparse matrix by $\BS_0=\cS_{\zeta_0}(\Po{\BY})$, which should detect the obvious, large outliers. Next, for the low-rank component, we take $\BL_0=\BU_0\BSigma_0^{1/2}$ and $\BR_0=\BV_0\BSigma_0^{1/2}$, where $\BU_0\BSigma_0\BV_0^\top$ is the best rank-$r$ approximation (via truncated SVD, denoted by $\SVD_r$) of $\pinv\Po{\BY-\BS_0}$. Note that $\pinv$ is needed here to reweight the expectation of sampling operator \cite[Theorem~7]{recht2011simpler}. 
Clearly, the initial thresholding step is crucial for the quality of initialization. Thanks to soft-thresholding, its parameter $\zeta_0$ can be optimized through learning. 

For huge-scale problems where even a single truncated SVD is computationally prohibitive, one may replace the SVD step in initialization with some sketch-based low-rank approximations, e.g., CUR decomposition, whose computational costs are as low as $\cO(r^2n\log^2n)$ flops. While its stability lacks theoretical support when outliers appear, the empirical evidence shows that a single step of robust CUR decomposition can provide sufficiently good initialization \cite{cai2020rapid}.

\vspace{-0.1in}
\subsection{Computational complexity} \label{sec:complexity}

To compute \eqref{eq:S updating}, $\Po{\L_k\R_k^\top}$ involves $\cO(pn^2r)$ flops and $\Po{\Y-\L_k \R_k^\top}$ requires $pn^2$ flops, followed by another $pn^2$ flops for soft-thresholding. Therefore, the total complexity for this step is $\cO(pn^2r)$ flops.

To compute \eqref{eq:X updating}, \( \Po{\L_k \R_k^\top + \S_{k+1} - \Y} \) requires $2pn^2$ flops, \( \R_k^\top \R_k \) takes \( \mathcal{O}(n r^2) \) flops, inverting \( \R_k^\top \R_k \) requires \( \mathcal{O}(r^3) \) flops, the multiplication of \( \R_k (\R_k^\top \R_k)^{-1} \) involves \( \mathcal{O}(n r^2) \) flops, and thus calculating \( \Po{\L_k \R_k^\top + \S_{k+1} - \Y} \R_k (\R_k^\top \R_k)^{-1} \) requires \( \mathcal{O}(p n^2 r) \) flops. Finally, updating \( \L_{k+1} \) takes another \( 2nr \) flops for the step size and then subtraction. Similarly for updating \( \BR_{k+1} \). In total, computing \eqref{eq:X updating} takes 
$\cO(p n^2 r + nr^2)$ flops, provided \( r \ll n \).

To summarize, the per-iteration cost of LRMC is 
$
\cO(p n^2 r+n r^2)
$
flops which is slightly better than the state-of-the-art ScaledGD and GD under the same setting. Unlike ScaledGD and GD, the computational complexity of LRMC does not depend on the outlier sparsity parameter $\alpha$ which will be empirically verified in Fig.~\ref{fig:step_runtime} later. When the sampling rate $p$ is near the information-theoretic lower bound $\cO(r/n)$, the computational complexity becomes $\cO(n r^2)$ flops per iteration. 
In addition, LRMC enjoys a linear convergence which will be detailed in the next section.

\section{Theoretical results} \label{sec:theoretical results}
In this section, we present the recovery guarantee of LRMC. Since the baseline ScaledGD only provides the convergence guarantee for RPCA case, i.e., fully observed RMC, we use the same setting in the theorem for an apple-to-apple comparison. 
The results show that when the parameters are selected correctly, LRMC provably outperforms ScaledGD in the exact setting.

We start with some common assumptions for RMC/RPCA:
\setcounter{assumption}{0}
\setcounter{remark}{0}
\setcounter{theorem}{0}
\setcounter{lemma}{0}

\bassump
[Bernoulli sampling]
Each of the observed locations in $\Omega$ is drawn independently according to the Bernoulli model with probability $p\in(0,1]$. 
\eassump

\bassump
[$\mu$-incoherence of low-rank component] \label{as:incoherence}
$\BX_\star\in\mathbb{R}^{n_1 \times n_2}$ is a rank-$r$ matrix with $\mu$-incoherence, i.e.,
\[
    \|\BU_\star\|_{2,\infty}\leq \sqrt{\mu r/n_1} 
    \qquad\textnormal{and}\qquad
    \|\BV_\star\|_{2,\infty}\leq \sqrt{\mu r/n_2}
\]
for some constant $1\leq\mu\leq n$, where $\BU_\star\BSigma_\star\BV_\star^\top$ is the compact SVD of $\BX_\star$. 
\eassump

\bassump
[$\alpha$-sparsity of outlier component] \label{as:sparsity}
$\BS_\star\in\mathbb{R}^{n_1 \times n_2}$ is an $\alpha$-sparse matrix, i.e., there are at most $\alpha$ fraction of non-zero elements in each row and column of $\BS_\star$. In particular, we require $\alpha \lesssim \cO(\frac{1}{\mu r^{3/2}\kappa})$ for the guaranteed recovery, which matches the requirement for ScaledGD. We further assume the problem is well-posed, i.e., $\mu,r,\kappa \ll n$.
\eassump

\begin{remark}
Note that some applications may have a more specific structure for outliers, which may lead to a more suitable operator for $\BS$ updating in the particular applications. This work aims to solve the most common RMC model with a generic sparsity assumption. Nevertheless, our learning framework can adapt to another operator as long as it is differentiable. 
\end{remark}

By the rank-sparsity uncertainty principle, a matrix cannot be incoherent and sparse simultaneously \cite{chandrasekaran2011rank}. The above two assumptions ensure the uniqueness of the solution in RMC/RPCA. 
Now, we are ready to present the main theorem:

\begin{theorem}[Guaranteed recovery] \label{thm:main_theorem}
Suppose that $\BX_\star$ is a rank-$r$ matrix with $\mu$-incoherence and $\BS_\star$ is an $\alpha$-sparse matrix with $\alpha\leq\frac{1}{10^4\mu r^{3/2}\kappa}$. Let $p=1$, i.e., $\Omega=[n_1]\times[n_2]$. If we set the thresholding values $\zeta_0=\|\BX_\star\|_\infty$ and $\zeta_k=\|\BL_{k-1}\BR_{k-1}^\top-\BX_\star\|_\infty$ for $k\geq 1$, the iterates of LRMC satisfy
\begin{gather}
\|\BL_k\BR_k^\top- \BX_\star\|_\fro\leq 0.03 (1-0.6\eta)^k \sigma_r(\BX_\star), \\ 
\supp(\BS_k) \subseteq \supp(\BS_\star),
\end{gather}
with the step sizes $\eta_k=\eta\in[\frac{1}{4},\frac{8}{9}]$.
\end{theorem}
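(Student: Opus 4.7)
The plan is to prove Theorem~\ref{thm:main_theorem} by induction on $k$, simultaneously tracking three properties: (i) the support inclusion $\supp(\BS_k)\subseteq\supp(\BS_\star)$; (ii) a Frobenius-level scaled-distance bound between $(\BL_k,\BR_k)$ and the ground-truth factors $(\BU_\star\BSigma_\star^{1/2},\BV_\star\BSigma_\star^{1/2})$, in the spirit of the ScaledGD analysis of \cite{tong2021accelerating}; and (iii) an incoherence-type condition on $\BL_k,\BR_k$ that, together with (ii), yields the entrywise control $\|\BL_k\BR_k^\top-\BX_\star\|_\infty \le C\mu r\sigma_1(\BX_\star)/n$. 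Condition (iii) is what couples the low-rank error at step $k$ to the sparse error at step $k+1$.

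The key observation about the chosen thresholds is that $\zeta_{k+1}=\|\BL_k\BR_k^\top-\BX_\star\|_\infty$ acts as an oracle cutoff. With $p=1$ we have $\BY-\BL_k\BR_k^\top=(\BX_\star-\BL_k\BR_k^\top)+\BS_\star$, so outside $\supp(\BS_\star)$ every entry has magnitude at most $\zeta_{k+1}$ and is zeroed by $\cS_{\zeta_{k+1}}$, giving $\supp(\BS_{k+1})\subseteq\supp(\BS_\star)$; on the support, the soft shrinkage leaves a residual with $\|\BS_{k+1}-\BS_\star\|_\infty\le 2\zeta_{k+1}$. Combined with the $\alpha$-row/column sparsity, this yields $\|\BS_{k+1}-\BS_\star\|_2\le 2\alpha n\|\BL_k\BR_k^\top-\BX_\star\|_\infty$, the quantitative handle on the residual sparse error that will feed the low-rank update. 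For the base case, $\BS_0=\cS_{\|\BX_\star\|_\infty}(\BX_\star+\BS_\star)$ satisfies $\supp(\BS_0)\subseteq\supp(\BS_\star)$ and $\|\BS_0-\BS_\star\|_\infty\le 2\|\BX_\star\|_\infty\le 2\mu r\sigma_1(\BX_\star)/n$ by incoherence; the hypothesis $\alpha\le 1/(10^4\mu r^{3/2}\kappa)$ then gives $\|\BS_0-\BS_\star\|_2\ll\sigma_r(\BX_\star)$. Hence $\BY-\BS_0=\BX_\star+(\BS_\star-\BS_0)$ is a small-spectral-norm perturbation of $\BX_\star$, and standard Weyl and Wedin-type bounds applied to $\SVD_r(\BY-\BS_0)$ establish the initial estimate $\|\BL_0\BR_0^\top-\BX_\star\|_\fro\le 0.03\,\sigma_r(\BX_\star)$ along with the required $\mu$-incoherence of $\BL_0,\BR_0$ up to a constant.

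For the inductive step, writing $\Delta_{k+1}:=\BS_{k+1}-\BS_\star$, the scaled-gradient update specializes with $p=1$ to $\BL_{k+1}=\BL_k-\eta(\BL_k\BR_k^\top-\BX_\star)\BR_k(\BR_k^\top\BR_k)^{-1}-\eta\,\Delta_{k+1}\BR_k(\BR_k^\top\BR_k)^{-1}$, and symmetrically for $\BR_{k+1}$. The first two terms form exactly the RPCA ScaledGD update, which contracts the scaled distance by $1-0.6\eta$ for $\eta\in[\tfrac14,\tfrac89]$. The perturbation term is controlled via $\|\Delta_{k+1}\|_2\le 2\alpha n\|\BL_k\BR_k^\top-\BX_\star\|_\infty$; combining with (iii) and the bound on $\alpha$ makes it a small fraction of the scaled distance at step $k$, so the contraction rate survives and the Frobenius bound propagates with the same prefactor $0.03$.

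The main obstacle will be propagating (iii) alongside (ii): Frobenius-only control is not enough to bound $\|\Delta_{k+1}\|_2$ through the $\alpha$-sparse argument, so I would adapt the incoherence-preservation lemma underlying the ScaledGD analysis to our setting, verifying that the extra factor $\eta\,\Delta_{k+1}\BR_k(\BR_k^\top\BR_k)^{-1}$ does not destroy the $\mu$-incoherence of $\BL_{k+1}$ (and symmetrically for $\BR_{k+1}$). The delicate part is bookkeeping: the perturbation absorbed into the contraction, the row-wise norms of $\BL_k,\BR_k$ fed into the $\ell_\infty$ bound, and the constant $10^4$ in the sparsity hypothesis must all fit together so that $1-0.6\eta$ holds on the whole interval $\eta\in[\tfrac14,\tfrac89]$ and the advertised prefactor $0.03$ is preserved across iterations.
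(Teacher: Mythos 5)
Your plan follows the paper's overall architecture (induction; the threshold $\zeta_{k+1}=\|\BL_k\BR_k^\top-\BX_\star\|_\infty$ acting as an oracle cutoff so that $\supp(\BS_{k+1})\subseteq\supp(\BS_\star)$ and $\|\BS_{k+1}-\BS_\star\|_\infty\le 2\zeta_{k+1}$; a ScaledGD-type contraction plus a sparse-perturbation term; spectral initialization), but it has a genuine gap at the decisive step. Your induction hypothesis (iii) is a \emph{static} entrywise bound $\|\BL_k\BR_k^\top-\BX_\star\|_\infty\le C\mu r\,\sigma_1(\BX_\star)/n$ obtained from incoherence preservation of the iterates. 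That only yields $\|\BS_{k+1}-\BS_\star\|_2\le 2\alpha n\|\BL_k\BR_k^\top-\BX_\star\|_\infty\lesssim \alpha\mu r\kappa\,\sigma_r(\BX_\star)$, a quantity that does \emph{not} decay in $k$. Such a perturbation is not ``a small fraction of the scaled distance at step $k$'' once $(1-0.6\eta)^k$ is small; feeding it into the contraction gives convergence only to an error floor of order $\alpha\mu r\kappa\,\sigma_r(\BX_\star)$, not the claimed $0.03(1-0.6\eta)^k\sigma_r(\BX_\star)$ for all $k$. Nor can you substitute Frobenius control: from $\dist(\BL_k,\BR_k;\BL_\star,\BR_\star)\le \varepsilon_0\tau^k\sigma_r(\BX_\star)$ and incoherence of the factors alone, the best entrywise bound is of order $\varepsilon_0\sqrt{\mu r/n}\,\tau^k\sigma_r(\BX_\star)$, and after multiplying by $\alpha n$ the perturbation scales like $\alpha\sqrt{\mu r n}\,\tau^k\sigma_r(\BX_\star)$, which for $\alpha\asymp 1/(\mu r^{3/2}\kappa)$ grows like $\sqrt{n}$ relative to the distance and destroys the contraction for large $n$.

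What is actually needed---and what the paper tracks as its third induction hypothesis---is a \emph{geometrically decaying} row-wise error bound on the aligned factors, $\|(\BL_k\BQ_k-\BL_\star)\BSigma_\star^{1/2}\|_{2,\infty}\lor\|(\BR_k\BQ_k^{-\top}-\BR_\star)\BSigma_\star^{1/2}\|_{2,\infty}\le\sqrt{\mu r/n}\,\tau^k\sigma_r(\BX_\star)$, which gives $\|\BL_k\BR_k^\top-\BX_\star\|_\infty\le 3(\mu r/n)\tau^k\sigma_r(\BX_\star)$ and hence $\|\BS_{k+1}-\BS_\star\|_2\le 6\alpha\mu r\,\tau^k\sigma_r(\BX_\star)$, i.e.\ a perturbation proportional to the current error. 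Propagating this decaying $\ell_{2,\infty}$ metric is the hard part your sketch does not address: the optimal alignment matrix changes from $\BQ_k$ to $\BQ_{k+1}$, and since $\BQ_{k+1}$ is optimal only in the Frobenius sense, one must separately control $\|\BSigma_\star^{1/2}\BQ_k^{-1}(\BQ_{k+1}-\BQ_k)\BSigma_\star^{1/2}\|_2$ to transfer the $\ell_{2,\infty}$ bound from $\BQ_k$ to $\BQ_{k+1}$; mere ``incoherence preservation'' of $\BL_{k+1},\BR_{k+1}$ does not deliver this. Two smaller points: you assert the ScaledGD core contracts at rate $1-0.6\eta$ for all $\eta\in[\tfrac14,\tfrac89]$, but ScaledGD's own theorem covers only $\eta\le\tfrac23$, so the extended range must itself be proved (the paper re-derives the perturbation bounds with $\varepsilon_0=0.02$ and $\alpha\le 10^{-4}/(\mu r^{3/2}\kappa)$ to get it); and the initialization needs not only the spectral-norm/Frobenius estimate but also its own $\ell_{2,\infty}$ argument (the paper bounds it via the identity $\BL_0=(\BX_\star-(\BS_0-\BS_\star))\BR_0(\BR_0^\top\BR_0)^{-1}$ and a self-bounding step), which Weyl/Wedin alone do not give.
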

\begin{proof}
The proof of Theorem~\ref{thm:main_theorem} is deferred to \Cref{sec:proofs}.
\end{proof}

Essentially, Theorem~\ref{thm:main_theorem} states that there exists a set of selected thresholding values $\{\zeta_k\}$ that allows one to replace the sparsification operator $\cT_{\widetilde{\alpha}}$ in ScaledGD with the simpler soft-thresholding operator $\cS_\zeta$ and maintains the same linear convergence rate $1-0.6\eta$ under the exact same assumptions. Note that the theoretical choice of parameters relies on the knowledge of $\BX_\star$---an unknown factor. Thus, Theorem~\ref{thm:main_theorem} can be read as a proof of the existence of the appropriate parameters.

Moreover, Theorem~\ref{thm:main_theorem} shows two advantages of LRMC: 
\begin{enumerate}[leftmargin=0.2in]
    \item Under the very same assumptions and constants, we allow the step sizes $\eta_k$ to be as large as $\frac{8}{9}$; in contrast, ScaledGD has the step sizes no larger than $\frac{2}{3}$ \cite[Theorem~2]{tong2021accelerating}. That is, LRMC can provide faster convergence under the same sparsity condition by allowing larger step sizes. 
    \item With the selected thresholding values, $\cS_\zeta$ in LRMC is effectively a projection onto $\supp(\BS_\star)$, which matches our sparsity constraint in objective \eqref{eq:obj_func}. That is, $\cS_\zeta$ takes out the larger outliers, leaves the smaller outliers, and preserves all good entries---no false-positive outlier in $\BS_k$. In contrast, $\cT_{\widetilde{\alpha}}$ necessarily yields some false-positive outliers in the earlier stages of ScaledGD, which drag the algorithm when outliers are relatively small.  
\end{enumerate}

\vspace{0.05in}
\noindent\textbf{Technical innovation.} The main challenge to our analysis is to show both the distance error metric (i.e., $\dist(\BL_k,\BR_k;\BL_\star,\BR_\star)$, later defined in Section~\ref{sec:proofs}) and $\ell_\infty$ error metric (i.e., $\|\BL_k\BR_k^\top-\BX_\star\|_\infty$) are linearly decreasing. While the former takes some minor modifications from the proof of ScaledGD, the latter is rather challenging and utilizes several new technical lemmas. Note that ScaledGD shows $\|\BL_k\BR_k^\top-\BX_\star\|_\infty$ is always bounded 
but not necessarily decreasing, which is insufficient for LRMC. 

\section{Parameter learning}
Theorem~\ref{thm:main_theorem} shows the existence of ``good" parameters $\{\zeta_k\},\{\eta_k\}$ and, in this section, we describe how to obtain such parameters via machine learning techniques.

\vspace{0.05in}
\noindent\textbf{Feed-forward neural network.} Classic deep unfolding methods unroll an iterative algorithm and truncate it into a fixed number, say $K$, iterations. Applying such an idea to our model, we regard each iteration of Algorithm~\ref{algo:LRMC} as a layer of a neural network and regard the variables $\BL_{k}, \BR_{k}, \BS_{k} $ as the units of the $k$-th hidden layer. The top part of Fig.~\ref{fig:nn-structure} demonstrates the structure of such feed-forward neural network (FNN). The $k$-th layer is denoted as $\mathcal{L}_{k}$.  Based on (\ref{eq:S updating}) and (\ref{eq:X updating}), it takes $\BY$ as an input and has two parameters $\zeta_k, \eta_k$ when $k \geq 1$. The initial layer $\mathcal{L}_0$ is special, it takes $\BY$ and $r$ as inputs, and it has only one parameter $\zeta_0$. For simplicity, we use $\Theta = \{ \{\zeta_k\}_{k=0}^K, \{\eta_k\}_{k=1}^K \}$ to represent all parameters in this neural network.

\vspace{0.05in}
\noindent\textbf{Training.} Given a training data set $\mathcal{D}_{\mathrm{train}}$ consisting of ($\BY, \BX_\star$) pairs (observation, groundtruth), one can train the neural network and obtain parameters $\Theta$ by minimizing:
\[
    \minimize_{\Theta} \mathbb{E}_{(\BY, \BX_\star) \sim \mathcal{D}_{\mathrm{train}}}\|\BL_K(\BY,\Theta)\BR_K(\BY,\Theta)^\top - \BX_\star\|^2_\fro.
\]
Directly minimizing this loss function is called end-to-end training, and it can be easily implemented on deep learning platforms nowadays. In this paper, we adopt a more advanced training technique named \emph{layer-wise training} or \emph{curriculum learning}, which has been proven as a powerful tool for training deep unfolding models \cite{chen2018theoretical,chen2020training}. The process of layer-wise training is divided into $K+1$ stages:
\begin{itemize}[leftmargin=0.2in]
    \item Training $0$-th layer with $\minimize_{\Theta}\mathbb{E}\|\BL_0\BR_0^\top-\BX_\star\|_\fro^2$.
    \item Training $1$-st layer with $\minimize_{\Theta}\mathbb{E}\|\BL_1\BR_1^\top-\BX_\star\|_\fro^2$.
    \item $\cdots\cdots$
    \item Training final layer with  $\minimize_{\Theta}\mathbb{E}\|\BL_K\BR_K^\top-\BX_\star\|_\fro^2$.
\end{itemize}

\begin{figure}[t]
\centering
\resizebox{\columnwidth}{!}{
\input{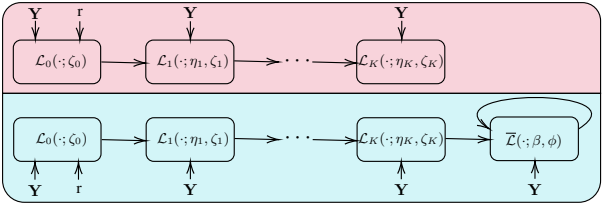}
}
\vspace{-0.1in}
\caption{A high-level structure comparison between classic FNN-based deep unfolding (top) and proposed FRMNN-based deep unfolding (bottom). In the diagrams, $\mathcal{L}_k$ denotes the $k$-th layer of FNN and $\overline{\mathcal{L}}$ is a layer of RNN.
}
\label{fig:nn-structure}
\end{figure}




\vspace{0.05in}
\noindent\textbf{Feedforward-recurrent-mixed neural network.} One disadvantage of the aforementioned FNN model is its fixed number of layers. If one wants to take further steps and obtain higher accuracy after training a neural network, one may have to retrain the neural network once more. Recurrent neural network (RNN) has tied parameters over different layers and, consequently, is extendable to infinite layers. However, we find that the starting iterations play significant roles in the convergence (validated in Section~\ref{sec:numerical} later) and their parameters should be trained individually. Thus, we propose a hybrid model that is demonstrated in the bottom part of Fig.~\ref{fig:nn-structure}, named as Feedforward-recurrent-mixed neural network (FRMNN). We use an RNN appended after a $K$-layer FNN. When $k \geq K$, in the $(k-K)$-th loop of the RNN layer, we follow the same calculation procedures with Algorithm~\ref{algo:LRMC} and determine the parameters $\zeta_k$ and $\eta_k$ by
\begin{equation}
    \eta_k = \beta \eta_{k-1} \quad\textnormal{ and }\quad \zeta_k = \phi \zeta_{k-1}.
\end{equation}
Thus, all RNN layers share the common parameters $\beta$ and $\phi$.

The training of FRMNN follows in two phases:
\begin{itemize}[leftmargin=0.2in]
    \item Training the $K$-layer FNN with layer-wise training.
    \item Fixing the FNN and searching RNN parameters $\beta$ and $\phi$ to minimize the convergence metric at the $(\overline{K}-K)$-th layer of RNN for some $\overline{K}>K$:
    \begin{equation}
    \hspace{-0.1in}
    \minimize_{\beta,\phi}  \mathbb{E}_{(\BY, \BX_\star) \sim \mathcal{D}_{\mathrm{train}}}\|\BL_{\overline{K}}(\beta,\phi)(\BR_{\overline{K}}(\beta,\phi))^\top - \BX_\star\|^2_\fro.
    \end{equation}
\end{itemize}
Our new model provides the flexibility of training a neural network with finite $\overline{K}$ layers and testing it with infinite layers. Consequently, the stop condition of LRMC has to be \texttt{(Run $K$ iterations)} if the parameters are trained via FNN model; and the stop condition can be \texttt{(Error $<$ Tolerance)} if our FRMNN model is used.

In this paper, we use stochastic gradient descent (SGD) in the layer-wise training stage and use grid search to obtain $\beta$ and $\phi$ since there are only two parameters. Specifically, we search over $\beta, \phi \in \{0.1,0.2, \cdots, 0.9 \}$ and select the pair that minimizes the reconstruction error. Moreover, we find that picking $K+3\leq\overline{K}\leq K+5$ is empirically good.

\vspace{0.05in}
\noindent\textbf{Training hyperparameters.}  Hyperparameters in our training procedure involve the initial thresholding parameters $\{\zeta_k\}_{k=0}^{K-1}$, initial step size parameters $\{\eta_k\}_{k=0}^{K-1}$, their learning rates, and the number of training epochs. We initialize the first threshold parameter from the scale of the training data as $\zeta_0 = \frac{\|\BX_\star\|_\infty}{n}$. Then set $\zeta_k = c \zeta_{k-1}$ with a fixed decay factor $c \in (0,1)$ for $k\geq 1$. The step size parameters are initialized as $\eta_k = 1/p$. We train all parameters using stochastic gradient descent (SGD) with adaptive learning rates. For the threshold parameters, the learning rate for the first layer is set as $\lr_{\zeta_0} = \gamma_0/\ell_\gamma$, and for deeper layers we use $\lr_{\zeta_k} = \zeta_{k-1}/\ell_\gamma$ for $k\geq 1$. For the synthetic dataset, we set the base learning rate $\gamma_0 = 0.1$ and the scaling constants $\ell_\gamma = 5000$. For the step size parameters, we use a fixed learning rate $\lr_{\eta_k} = 0.1$ for all $k$. Each layer $k$ is first pre-trained for 1000 epochs by updating only $(\zeta_k, \eta_k)$, followed by 1000 epochs of joint training in which all parameters up to layer $k$ are updated. To avoid numerical instability, if a threshold $\zeta_k$ becomes negative during training, we reset it to its initial positive value.

\section{Numerical experiments} \label{sec:numerical}
In this section, we verify the empirical performance of LRMC against state-of-the-art methods such as classic-style ScaledGD \cite{tong2021accelerating} and learning-based CORONA \cite{cohen2019deep}, with synthetic and real datasets.  For the classic-style algorithms, the parameters are carefully hand-tuned for their optimal results. 
All learning-based methods are trained on a workstation equipped with an Intel i9-9940X CPU, 128GB RAM, and two Nvidia A6000 GPUs. 
All runtime comparisons are conducted with Matlab on a workstation equipped with an Intel i9-12900H CPU and 32GB RAM. The code for LRMC is available
online at \url{https://github.com/chandrakundu/LRMC}.

\vspace{-0.1in}
\subsection{Synthetic datasets}
For both the learning and inference stages of LRMC, we generate the underlying low-rank matrix $\Xs = \Ls \Rs^\top$ as a product of two Gaussian random matrices $\Ls,\Rs \in \reals^{n \times r}$. The location of the observation $\Omega$ is sampled according to the Bernoulli model with a probability $p$. The support of the sparse matrix $\Ss$ are sampled uniformly without replacement from $\Omega$ and the magnitudes of these non-zero entries obey the uniform distribution bounded by $[-\mathbb{E}|[\Xs]_{i,j}|, \mathbb{E}|[\Xs]_{i,j}|]$. The input for RMC algorithms is then $\Pi_\Omega\Y = \Pi_\Omega\Xs + \Ss \in \reals^{n \times n}$. 
All inference runtime results presented for synthetic data are averaged over 20 randomly generated instances. 
For the two phases of algorithm learning: During the layer-wise training phase (i.e., FNN training), we use SGD with a batch size of 1. For the recurrent parameter search phase (i.e., RNN training), we employ a grid search with a step size of 0.1. A new training instance $(\BY, \Xs)$ is generated for each step of SGD, and 20 instances are generated for each step of the grid search. 

\begin{figure}[ht]
  \centering
    \includegraphics[width=0.38\textwidth]{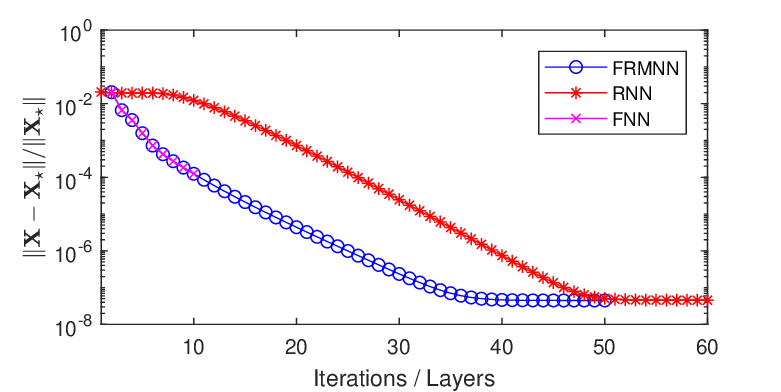}

    \vspace{-0.08in}
    \caption{Convergence comparison for FNN-based, RNN-based, and FRMNN-based learning.}
    \vspace{-1em}
    \label{fig:fpn}
\end{figure}

\vspace{0.05in}
\noindent\textbf{Unfolding models.} 
We compare the performance of three different unfolding models with LRMC: classic FNN, RNN, and the proposed feedforward-recurrent-mixed neural network (FRMNN). The FNN model unrolls $K=10$ iterations of LRMC and RNN model directly starts the training on the second phase of FRMNN, i.e., $K=0$. FRMNN starts with $K=10$ layers of FNN and is followed by an RNN trained with $\overline{K}=15$. Full observation problems, i.e., $p=1$, are used for this experiment. The test results are summarized in Fig.~\ref{fig:fpn}. 
One can see FRMNN model extends the classic FNN model to infinite layers without performance reduction, while the RNN model drops the convergence performance significantly. 
Note that the convergence curves of both FRMNN and RNN stall at the accuracy $\cO(10^{-8})$, which is the machine precision since single precision is used in this experiment.

\begin{figure}[ht]
  \centering
  \begin{minipage}{0.49\columnwidth}
    \centering
    \includegraphics[width=\columnwidth]{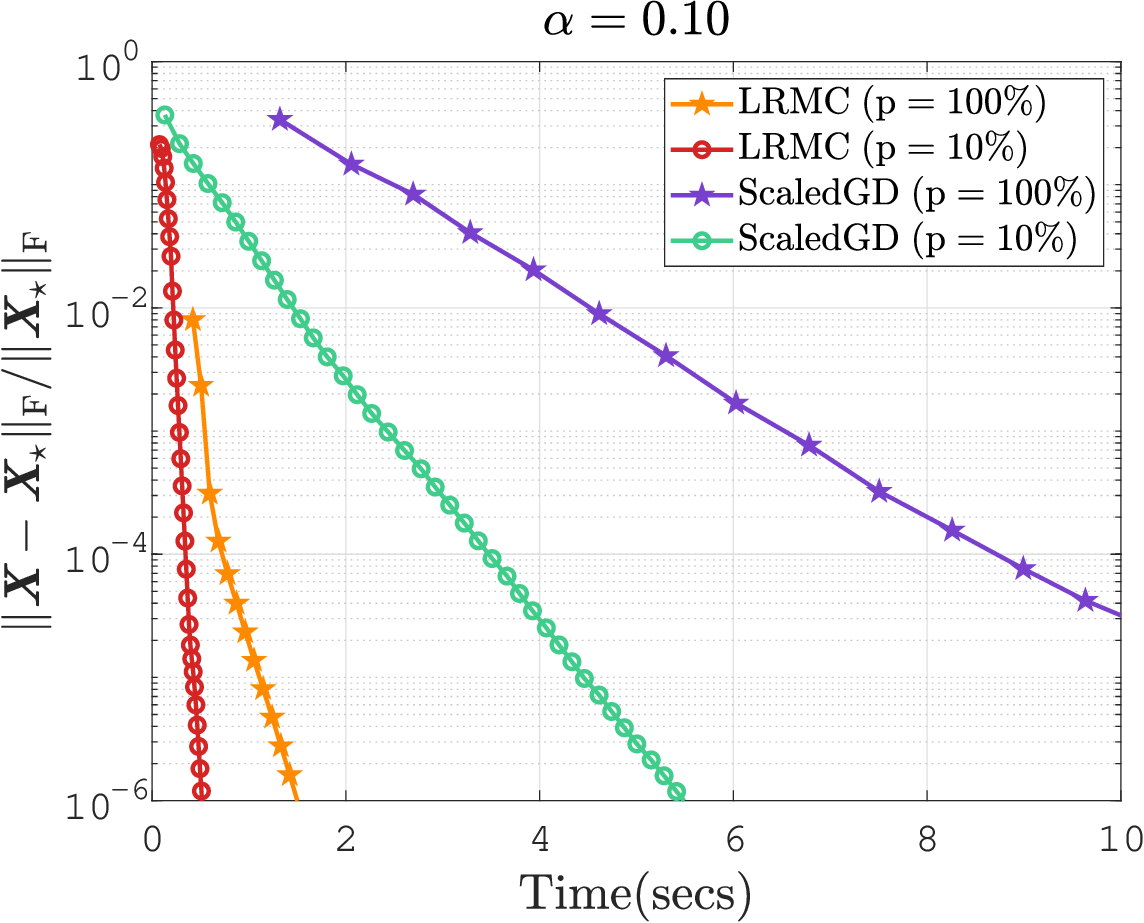}
    \label{fig:conv_alpha0.1}
  \end{minipage}%
  \hfill
  \begin{minipage}{0.49\columnwidth}
    \centering
    \includegraphics[width=\columnwidth]{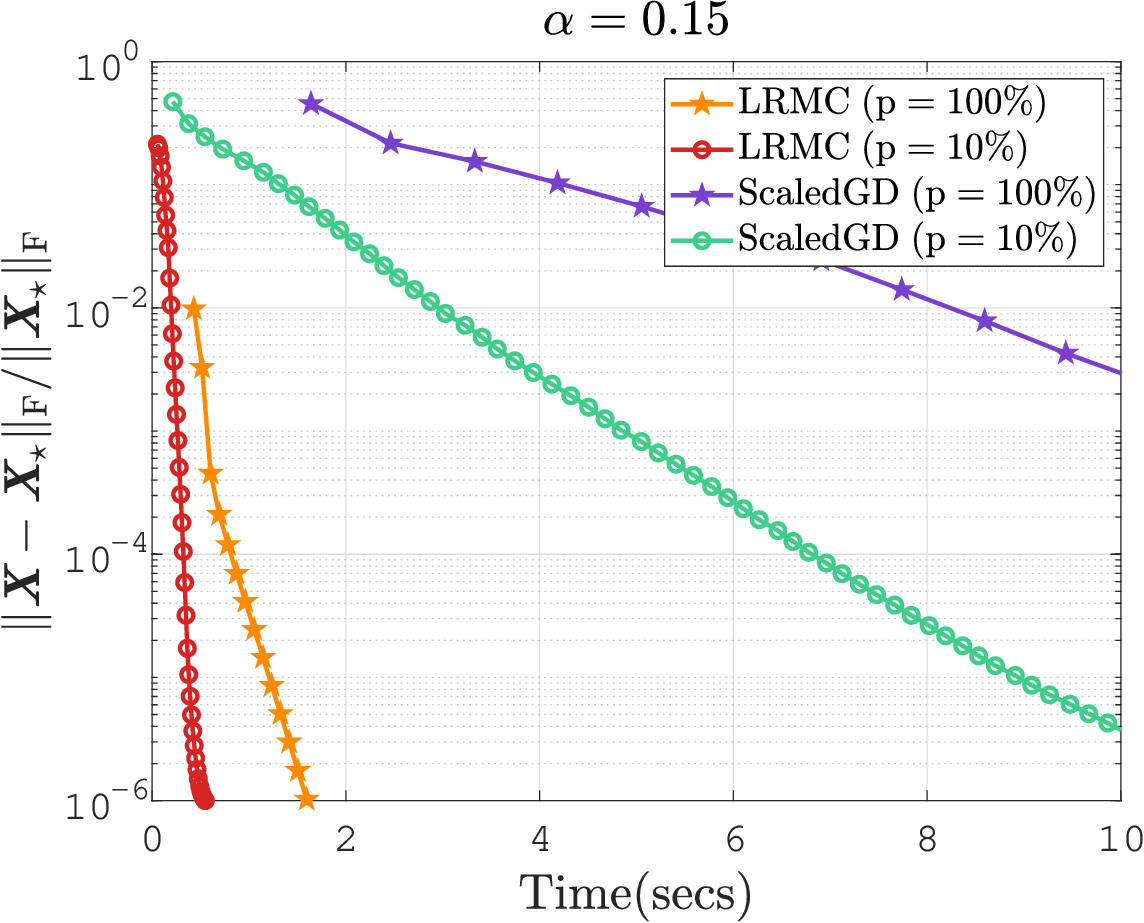}
    \label{fig:conv_alpha0.15}
  \end{minipage}%

  \begin{minipage}{0.49\columnwidth}
    \centering
    \includegraphics[width=\columnwidth]{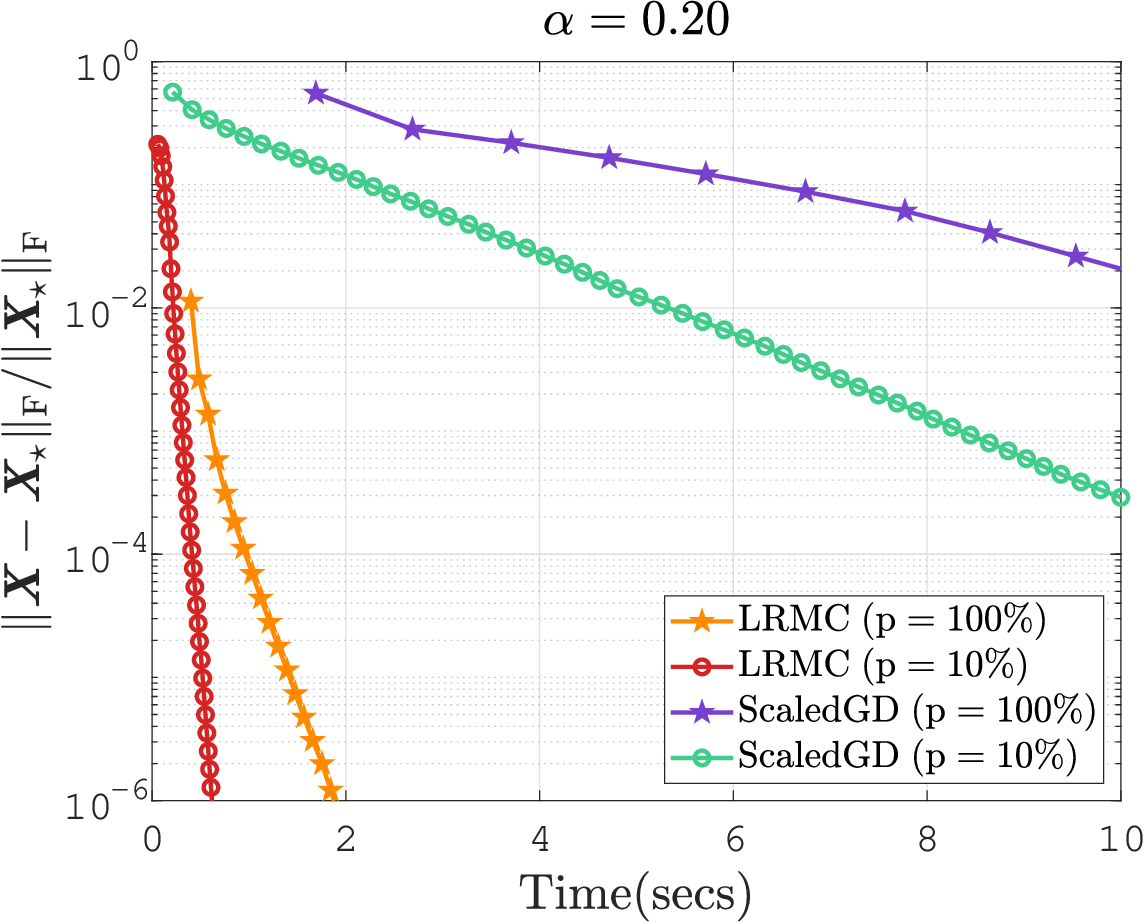}
    \label{fig:conv_alpha0.2}
  \end{minipage}%
  \hfill
  \begin{minipage}{0.49\columnwidth}
    \centering
    \includegraphics[width=\columnwidth]{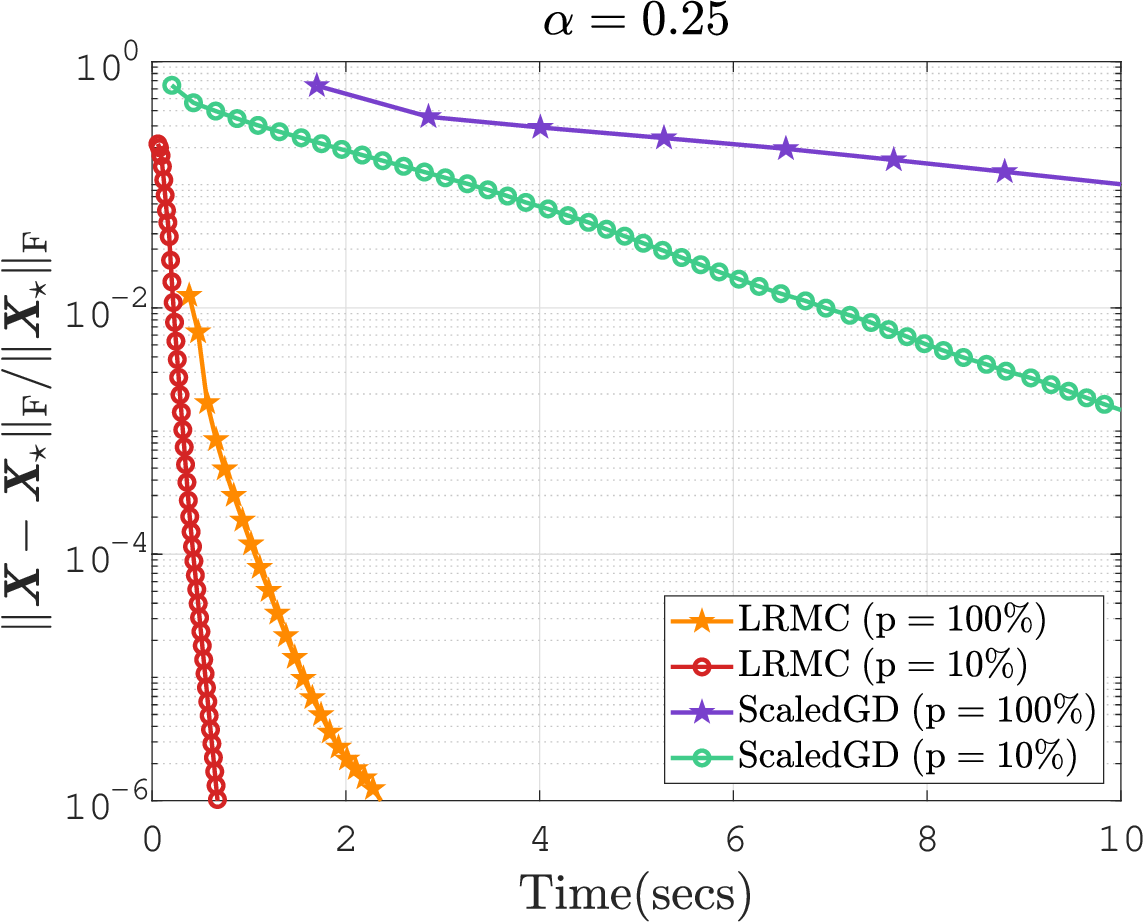}
    \label{fig:conv_alpha0.25}
  \end{minipage}%

\vspace{-0.1in}
  \caption{Convergence comparison for LRMC and ScaledGD with varying outlier sparsity $\alpha$. Problem dimension $n=3000$ and rank $r=5$.}
  \label{fig:conv-iter}
\end{figure}

\vspace{0.05in}
\noindent\textbf{Computational efficiency.} We present the speed advantage of LRMC at the inference stage, provided appropriate training. 
Fig.~\ref{fig:conv-iter} compares the convergence behavior of LRMC and the baseline ScaledGD. Both algorithms achieve linear convergence, which matches Theorem~\ref{thm:main_theorem}. LRMC consistently demonstrates faster convergence than ScaledGD accross the observation rates. When the outlier sparsity is higher, the advantage of LRMC is even more significant. This is further verified with Fig.~\ref{fig:step_runtime}. On the left, we see the per-iteration runtime of ScaledGD is slower when $\alpha$ becomes larger. In contrast, the per-iteration runtime of LRMC is insensitive to $\alpha$ and is significantly faster than ScaledGD. On the right, we find the total runtime of LRMC is substantially faster than ScaledGD. 
Note that when the observation rate $p$ is smaller, the per-iteration runtime is faster, which matches the complexity analysis in Section~\ref{sec:complexity}; however, it will require more iterations to achieve the same accuracy. Table~\ref{tab:varying p} thoroughly tests and reports the runtime and number of iterations to achieve the same accuracy with varying observation rate.



\begin{figure}[ht]
  \centering
  \begin{minipage}{0.49\columnwidth}
    \centering
    \includegraphics[width=\columnwidth]{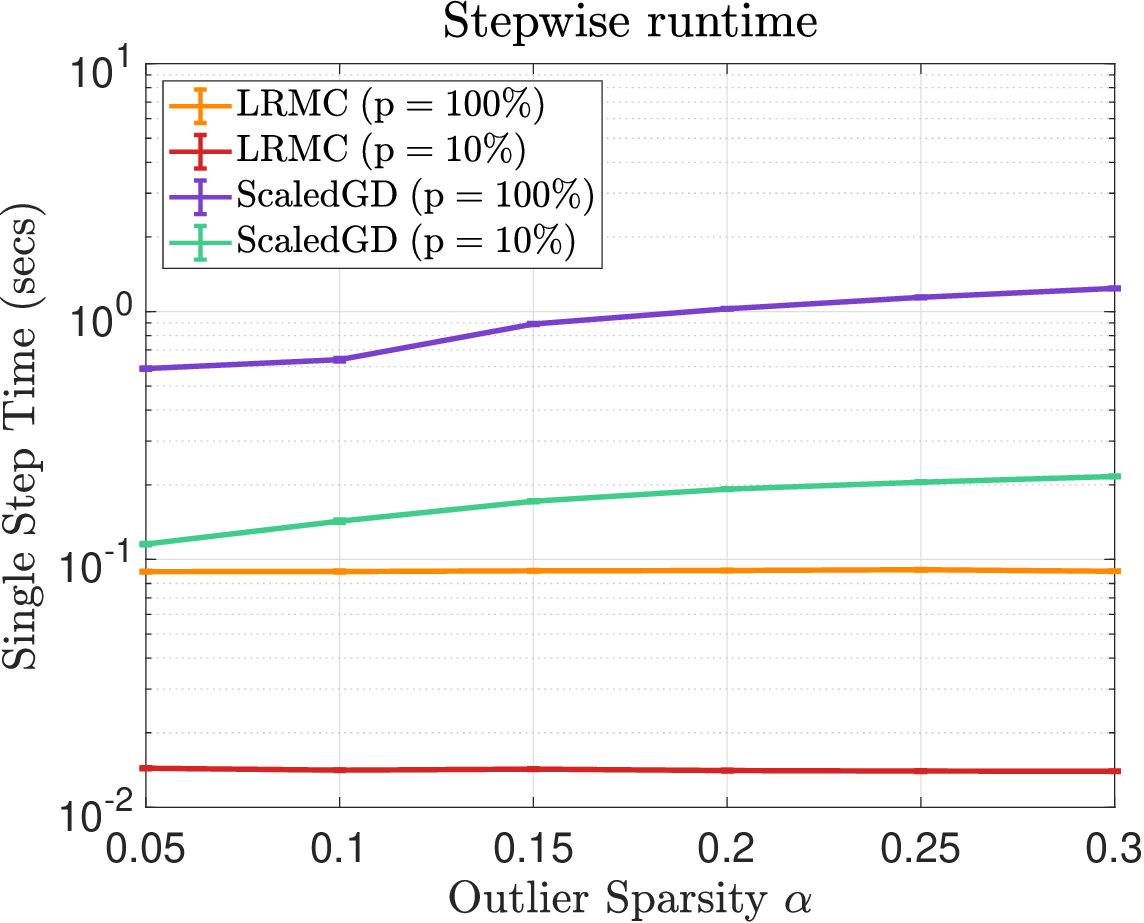}

  \end{minipage}%
  \hfill
  \begin{minipage}{0.48\columnwidth}
    \centering
    \includegraphics[width=\columnwidth]{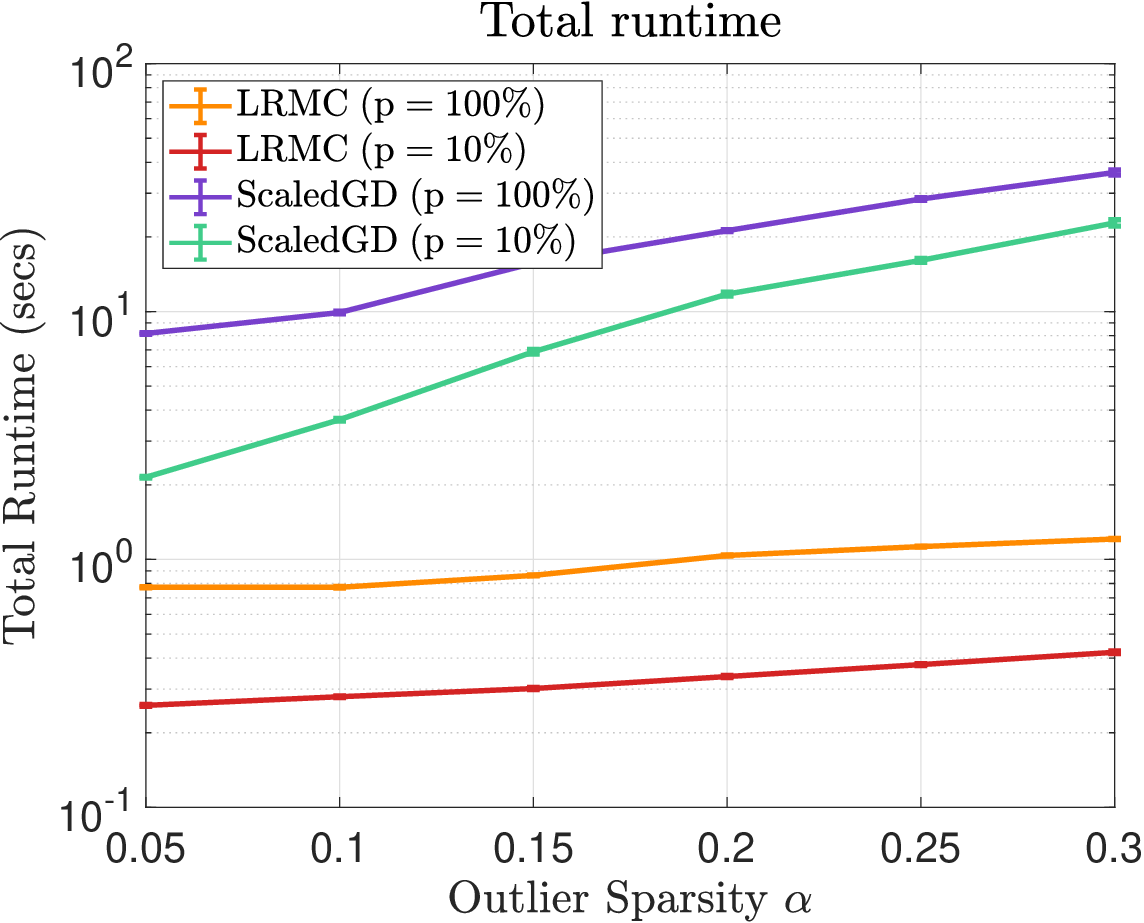}
   
  \end{minipage}%
  \caption{Runtime comparison for LRMC and ScaledGD for varying outlier sparsity $\alpha$. Problem dimension $n=3000$ and rank $r=5$. \textbf{Left:} Single step runtime averaged over 100 iterations, excluding initialization. \textbf{Right:} Total runtime. All algorithms halt when $\|\X - \Xs\|_\text{F} / \|\Xs\|_\text{F} < 10^{-6}$.}
  \label{fig:step_runtime}
\end{figure}

\begin{table}[ht]
\caption{Runtime and number of iterations comparisons for LRMC and ScaledGD with varying observation rate $p$. All algorithms halt when $\|\X - \Xs\|_\text{F} / \|\Xs\|_\text{F} < 10^{-6}$.} \label{tab:varying p} 
\centering
\begin{tabular}{c|c|c|c|c}
\toprule
$p$  & \multicolumn{2}{c|}{LRMC} & \multicolumn{2}{c}{ScaledGD} \\
(\%) & Time (secs) & Iterations & Time (secs) & Iterations \\
\midrule
10  & 0.47  & 28  & 5.37  & 36 \\
20  & 0.63  & 19  & 5.80  & 23 \\
30  & 0.68  & 15  & 7.12  & 21 \\
40  & 0.81  & 15  & 8.85  & 20 \\
50  & 0.90  & 14  & 11.00 & 20 \\
60  & 1.01  & 13  & 12.79 & 20 \\
70  & 1.23  & 13  & 14.53 & 20 \\
80  & 1.29  & 13  & 16.18 & 19 \\
90  & 1.36  & 12  & 17.49 & 19 \\
100 & 1.48  & 12  & 16.40 & 18 \\
\bottomrule
\end{tabular}

\end{table}

\vspace{0.05in}
\noindent\textbf{Recoverability.} 
To evaluate LRMC's robustness against outliers, we generated $20$ problem instances with varying outlier density levels and compared its recoverability to ScaledGD. Table~\ref{tab:robustness} shows that LRMC has superior recoverability to ScaledGD against high-density outliers, despite $10\%$ or $100\%$ observation cases. However, more observation offers more redundancy, thus better robustness for the same algorithm.

\begin{table}[ht]
\centering
\caption{Recoverability with varying outlier sparsity $\alpha$.} \label{tab:robustness}
\tabcolsep=0.18cm
\begin{tabular}{l|c|c|c|c|c|c}
\toprule
$\alpha$  & 0.35 & 0.4   & 0.45  & 0.5   & 0.55 & 0.6  \cr \midrule
LRMC ($p = 100\%$)& \textbf{20}/20   & \textbf{20}/20 & \textbf{20}/20 & \textbf{20}/20 & \textbf{19}/20 & \textbf{17}/20 \cr
LRMC ($p = 10\%$)& \textbf{20}/20  & \textbf{20}/20 & \textbf{20}/20 & 14/20 & 12/20 & 0/20 \cr
ScaledGD ($p = 100\%$)& \textbf{20}/20 & \textbf{20}/20 & \textbf{20}/20 & 0/20  & 0/20 & 0/20 \cr
ScaledGD ($p = 10\%$)& \textbf{20}/20 & 0/20 & 0/20 & 0/20  & 0/20 & 0/20 \cr 
\bottomrule
\end{tabular}
\end{table}

\vspace{0.05in}
\noindent\textbf{Robustness with additive noise.} 
We further evaluate LRMC when observations are corrupted by both sparse outliers and additive dense Gaussian noise. Fig.~\ref{fig:noise_robustness} reports the recovered SNR, defined as $\mathrm{SNR}=10 \log_{10}(\|\BX_\star\|_\fro^2 / \|\BX_\star - \BX\|_\fro^2)$,  against the input SNR of the noisy matrix before adding $10\%$ of outliers. The recovery quality scales linearly with the input SNR, even in the presence of these sparse outliers. Notably, training with noise yields a consistent improvement in recovery quality when the observation is full ($p=100\%$). However, under a lower observation rate ($p=10\%$), the benefit of noise augmentation during training becomes negligible, as the recovery difficulty is dominated by the missing entries rather than the additive Gaussian noise.

\begin{figure}[ht]
    \centering
    \includegraphics[width=0.7\columnwidth]{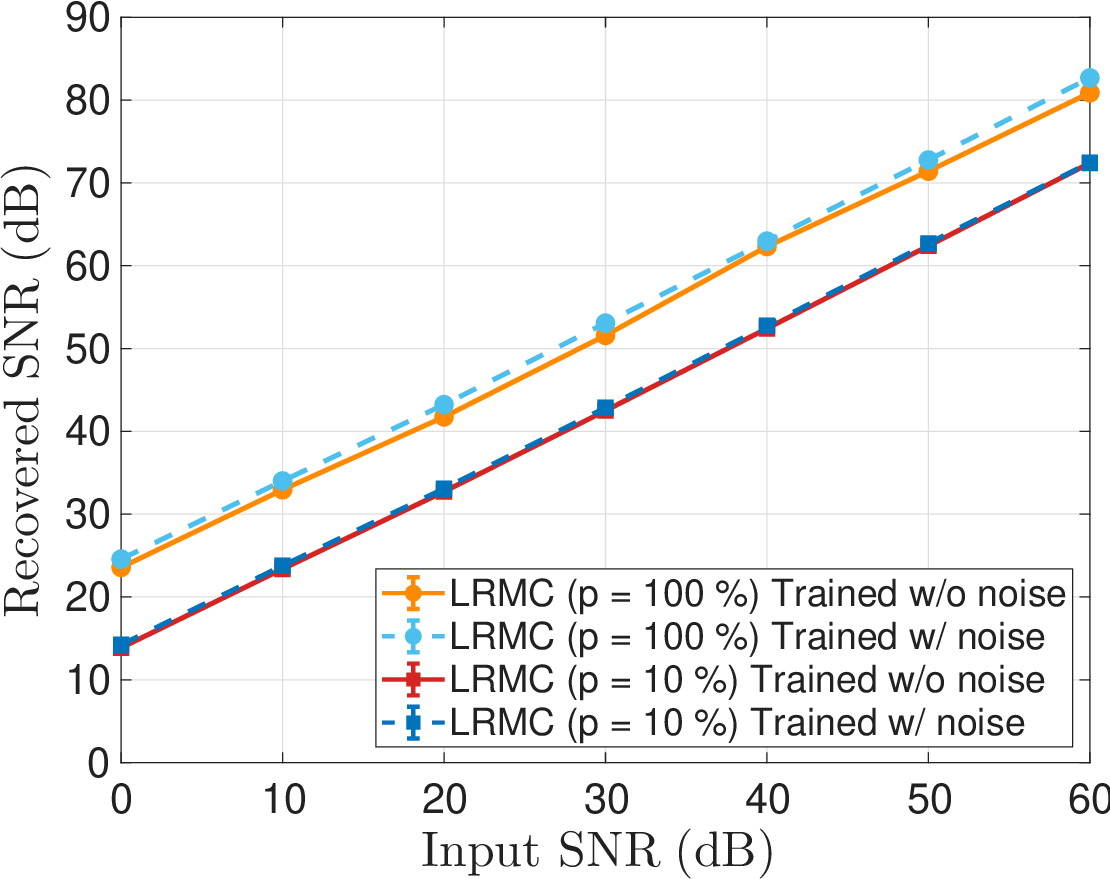}
    \caption{Comparison of recovered SNR against input SNR for additive Gaussian noise. In addition, $10\%$ outliers are added. 
    Models are trained with or without noise under full ($p=100\%$) or partial ($p=10\%$) observation.}
    \label{fig:noise_robustness}
\end{figure}

\vspace{-0.1in}
\subsection{Real datasets}
\noindent\textbf{Video background subtraction.} 
In this section, we implement LRMC for video background subtraction, employing the VIRAT video dataset \cite{oh2011large} as our benchmark dataset. VIRAT comprises a diverse collection of videos characterized by colorful scenes with static backgrounds. The videos have been separated into training and testing sets, with $305$ videos for training and $5$ videos for verification\footnote{
Available at \url{https://viratdata.org}. 
The tested videos correspond to the original video numbers in VIRAT: 1: S\_000203\_04\_000903\_001086. 2: S\_010202\_01\_000055\_000147. 3: S\_040000\_05\_000668\_000703. 4: S\_050000\_04\_000640\_000690. 5: S\_050201\_10\_001992\_002056.}. 
All videos have the same frame size $320\times180$ but different frame numbers, which are reported in \Cref{tab:video}. We first convert all videos
to grayscale. Next, each frame of a video is vectorized and becomes a matrix column, and all
frames of a video form a data matrix. The static backgrounds are the low-rank component of the data matrices and moving objects can be viewed as outliers, thus we can separate the backgrounds and foregrounds via RMC. Given partial observation, $\BS$ can not be fully recovered; thus the foreground is defined as $\BY-\BX_\mathrm{output}$ in this case. The runtime results are provided in \Cref{tab:video}, with selected visual results presented in Fig.~\ref{fig:visualization_virat}. The visual results of LRMC look crispy for both observation rates and LRMC with $p=10\%$ achieves over $10\times$ speedup than ScaledGD.

\begin{table}[ht]
\centering
\caption{Video details and runtime results for video background subtraction. All algorithms halt when $\|\X_{k}-\BX_{k-1}\|_\fro/\|\BX_{k-1}\|_\fro <10^{-2}$ or after 32 iterations. 
} 
\label{tab:video}

\tabcolsep=0.1cm
 \begin{tabular}{c|c|cc|cc} 
\toprule
 \textsc{Video} &\textsc{Frame} & \multicolumn{4}{c}{\textsc{Runtime} (secs)} \cr
 \textsc{Name} & \textsc{Count} &  \multicolumn{2}{c|}{LRMC} & \multicolumn{2}{c}{ScaledGD } \cr
 \textsc{} & \textsc{}  & $p = 100\%$ & $p = 10\%$  & $p = 100\%$ & $p = 10\%$ \cr
\midrule
 Video 1  & $5482$    &   $26.94$ &  $\textbf{10.39}$    & $193.65$   & $93.29$ \cr
 Video 2  & $2024$  &   $8.76$   &  $\textbf{3.45}$    & $66.63$ & $32.21$ \cr
 Video 3   & $1038$  &   $4.03$ &  $\textbf{1.68}$    & $31.30$  & $17.61$  \cr
 Video 4  & $1492$  &   $6.53$  &  $\textbf{1.99}$    & $57.48$ & $27.86$ \cr
 Video 5   & $1907$  &   $8.04$ &  $\textbf{2.10}$    & $68.87$  & $31.77$  \cr 
\bottomrule
\end{tabular}
\end{table}

\begin{figure}[ht]
    \centering
    \begin{minipage}[b]{0.19\columnwidth}
      \centering
      \phantom{blank} \vspace{1 mm}
      \includegraphics[width=\columnwidth]{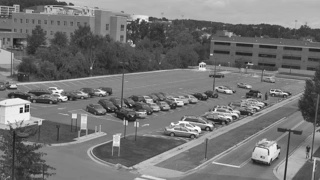} \vspace{1 mm}  
      \includegraphics[width=\columnwidth]{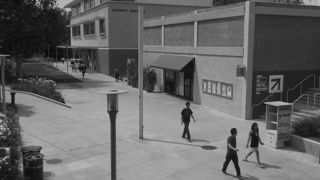} \vspace{1 mm}
        \includegraphics[width=\columnwidth]{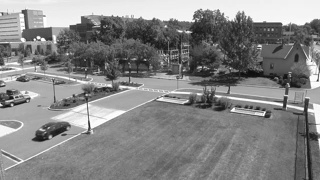} \vspace{1 mm}
        \includegraphics[width=\columnwidth]{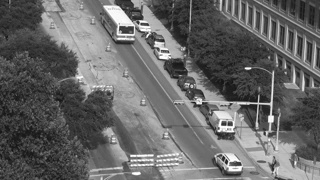} \vspace{1 mm}
        \includegraphics[width=\columnwidth]{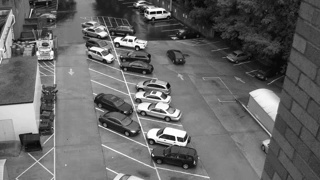} 
    \end{minipage}
    \hfill
    \begin{minipage}[b]{0.19\columnwidth}
        \centering
        \phantom{blank} \vspace{1 mm}
        \includegraphics[width=\columnwidth]{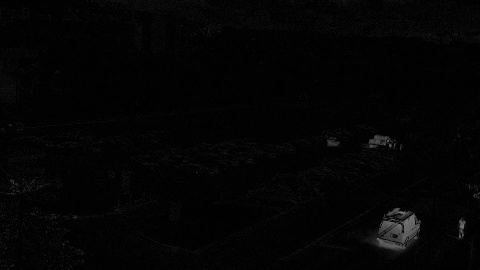} \vspace{1 mm}
        \includegraphics[width=\columnwidth]{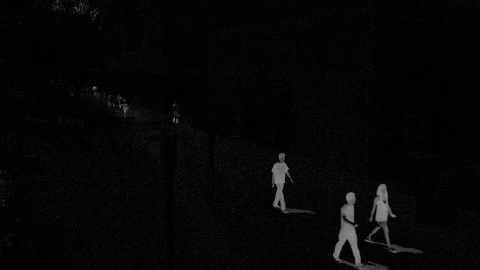} \vspace{1 mm}
        \includegraphics[width=\columnwidth]{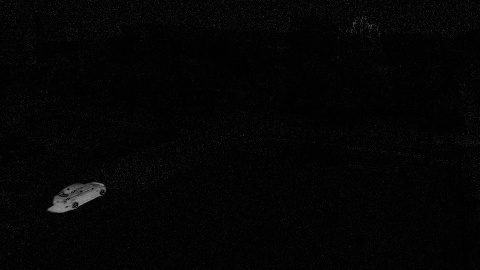} \vspace{1 mm}
        \includegraphics[width=\columnwidth]{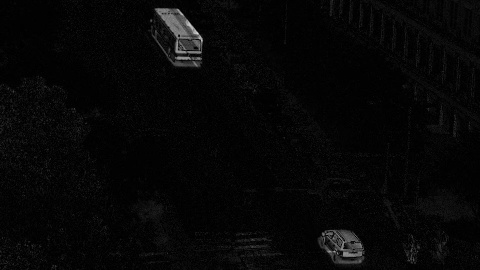} \vspace{1 mm}
        \includegraphics[width=\columnwidth]{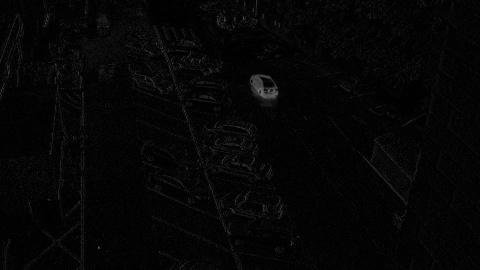} 
    \end{minipage}
    \hfill
    \begin{minipage}[b]{0.19\columnwidth}
        \centering
        \phantom{blank} \vspace{1 mm}
        \includegraphics[width=\columnwidth]{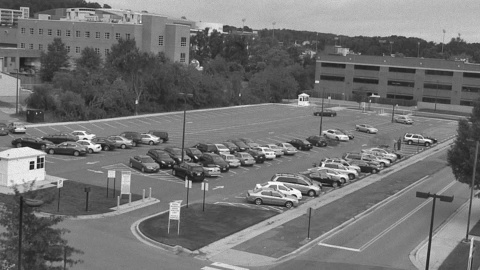} \vspace{1 mm}
        \includegraphics[width=\columnwidth]{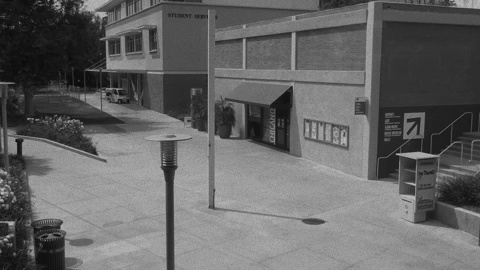} \vspace{1 mm}
        \includegraphics[width=\columnwidth]{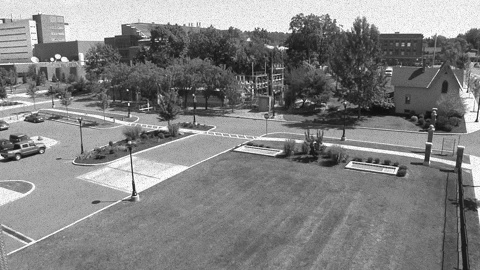} \vspace{1 mm}
        \includegraphics[width=\columnwidth]{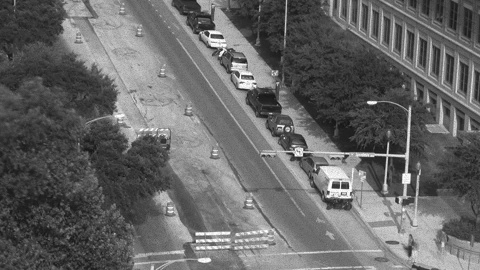} \vspace{1 mm}
        \includegraphics[width=\columnwidth]{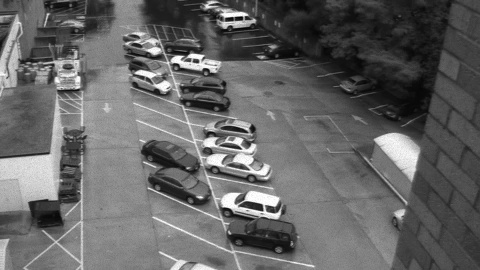} 
    \end{minipage}
    \hfill
    \begin{minipage}[b]{0.19\columnwidth}
        \centering
        \phantom{blank} \vspace{1 mm}
        \includegraphics[width=\columnwidth]{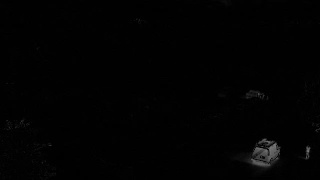} \vspace{1 mm}
        \includegraphics[width=\columnwidth]{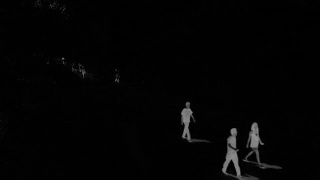} \vspace{1 mm}
        \includegraphics[width=\columnwidth]{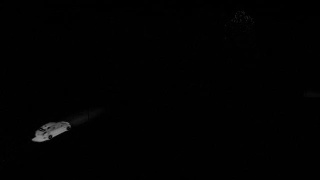} \vspace{1 mm}
        \includegraphics[width=\columnwidth]{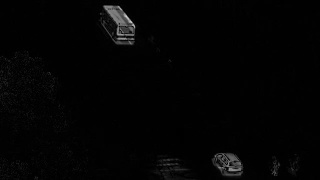} \vspace{1 mm}
        \includegraphics[width=\columnwidth]{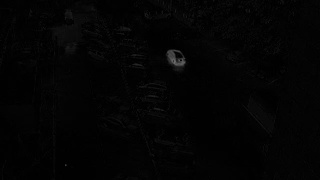} 
    \end{minipage}
    \hfill
    \begin{minipage}[b]{0.19\columnwidth}
        \centering
        \phantom{blank} \vspace{1 mm}
        \includegraphics[width=\columnwidth]{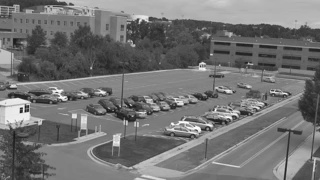} \vspace{1 mm}
        \includegraphics[width=\columnwidth]{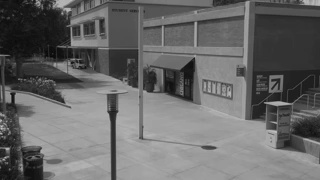} \vspace{1 mm}
        \includegraphics[width=\columnwidth]{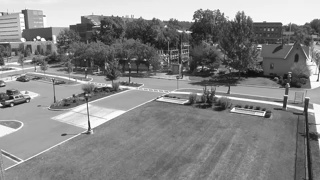} \vspace{1 mm}
        \includegraphics[width=\columnwidth]{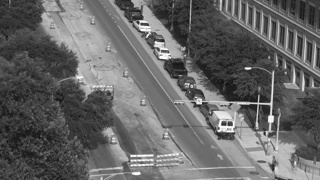} \vspace{1 mm}
        \includegraphics[width=\columnwidth]{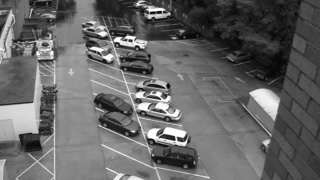} 
    \end{minipage}
    
    \vspace{-0.08in}
    \caption{Visual results for video background subtraction. The first column is the original frames. The second and third columns are the foreground and background output by LRMC with $p = 10\%$. The fourth and fifth columns are the outputs by LRMC with $p = 100\%$.}    \label{fig:visualization_virat}
\end{figure}

\vspace{0.05in}
\noindent\textbf{Ultrasound imaging.} 
We compare LRMC with CORONA \cite{cohen2019deep}, a state-of-the-art learning-based RPCA method (i.e., RMC with $p=1$), on ultrasound imaging benchmark\footnote{Available at \url{https://www.wisdom.weizmann.ac.il/~yonina}.} that consists of $2400$ training examples and $800$ validation examples. Each example is of size $1024 \times 40$. The target rank of the low-rank matrix is set to be $r=1$. Recovery accuracy is measured by 
$\mathrm{loss} = \mathrm{MSE}(X_\mathrm{output}, X_\star)$, where $\mathrm{MSE}$ stands for mean square error. The test results are reported in \Cref{tab:ultrasound}.

Note that CORONA is specifically designed for fully observed ultrasound imaging. Thus, it is not a surprise that our recovery accuracy is slightly worse than CORONA. However, the average runtime of LRMC is substantially faster than CORONA. We believe that our speed advantage will be even more significant on larger-scale examples, which is indeed one of our main contributions: scalability. Using partial observation will further improve the scalability of LRMC; however, the advantage is not obvious on this relatively small dataset.

\begin{table}[h]
\centering
\caption{Runtime and loss results for ultrasound imaging. All algorithms halt when $\|\X_{k}-\BX_{k-1}\|_\fro/\|\BX_{k-1}\|_\fro <10^{-4}$ or after 32 iterations. } \label{tab:ultrasound}
\begin{tabular}{l|c|c}
\toprule
\textsc{Algorithm} & \textsc{Average inference time} & \textsc{Average} $\mathrm{loss}$ \cr
\midrule
LRMC ($p = 10\%$) & \textbf{0.0036} secs & $4.68\times 10^{-3}$ \cr
LRMC ($p = 100\%$)   & \underline{0.0056} secs & \underline{$9.97\times 10^{-4}$}       \cr
CORONA   & 0.9225 secs  & \textbf{4.88}\,$\mathbf{\times\,10^{-4}}$       \cr
\bottomrule
\end{tabular}
\end{table}

\vspace{0.05in}
\noindent\textbf{Face modeling.} 
We use AR Face Database \cite{martinez1998ar} as the face modeling benchmark, which consists of 3,276 images of 126 human subjects captured under diverse conditions. Each human subject has 26 frontal-view images encompassing varying facial expressions, illumination settings, and accessories such as sunglasses and scarves. The face images are vectorized, and those belonging to the same subject are grouped into one data matrix. For our experiments, subjects 1–10 are designated as the test set, while the remaining subjects form the training set. To enhance training, we employ various image augmentation techniques, including flipping, rotation, scaling, shearing, shifting, and cropping to create more training samples.

The runtime test results are summarized in Table~\ref{tab:face}, and selected visual results are presented in Fig.~\ref{fig:visualization_faces}. One can observe that LRMC dominates the runtime comparison. In terms of visual results, LRMC and ScaledGD provide similar facial recovery, though the faces recovered with full observations are less noisy than at $30\%$ observation rate. 

\newlength{\faceimgwidth}
\setlength{\faceimgwidth}{0.10\columnwidth} 

\begin{figure}[H]
    \centering   
    \begin{minipage}[b]{\faceimgwidth}
      \centering
      \phantom{blank} \vspface
        \includegraphics[width=\columnwidth]{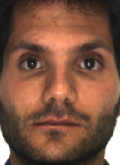} \vspface
        \includegraphics[width=\columnwidth]{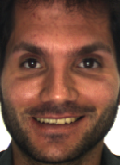} \vspface
        \includegraphics[width=\columnwidth]{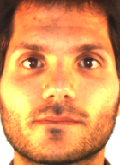} \vspface
        \includegraphics[width=\columnwidth]{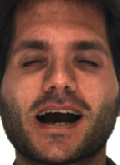} \vspface
        \includegraphics[width=\columnwidth]{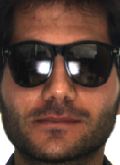} \vspface
        \includegraphics[width=\columnwidth]{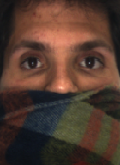} \vspface        
    \end{minipage}
    \begin{minipage}[b]{\faceimgwidth}
        \centering
        \phantom{blank} \vspface
        \includegraphics[width=\columnwidth]{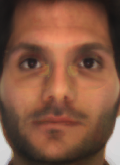} \vspface
        \includegraphics[width=\columnwidth]{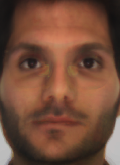} \vspface
        \includegraphics[width=\columnwidth]{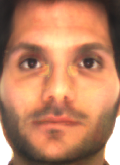} \vspface
        \includegraphics[width=\columnwidth]{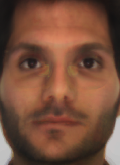} \vspface
        \includegraphics[width=\columnwidth]{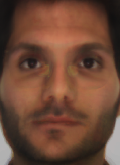} \vspface
        \includegraphics[width=\columnwidth]{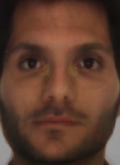} \vspface
    \end{minipage}
    \hfill
    \begin{minipage}[b]{\faceimgwidth}
        \centering
        \phantom{blank} \vspface
        \includegraphics[width=\columnwidth]{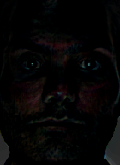} \vspface
        \includegraphics[width=\columnwidth]{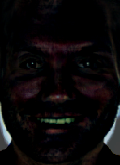} \vspface
        \includegraphics[width=\columnwidth]{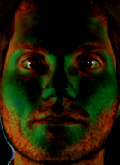} \vspface
        \includegraphics[width=\columnwidth]{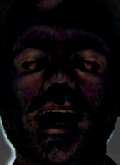} \vspface
        \includegraphics[width=\columnwidth]{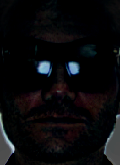} \vspface
        \includegraphics[width=\columnwidth]{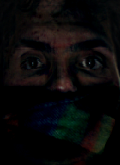} \vspface
    \end{minipage}
    \hfill
    \hfill
    \begin{minipage}[b]{\faceimgwidth}
        \centering
        \phantom{blank} \vspface
        \includegraphics[width=\columnwidth]{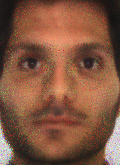} \vspface
        \includegraphics[width=\columnwidth]{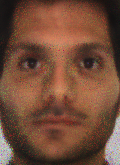} \vspface
        \includegraphics[width=\columnwidth]{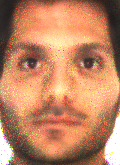} \vspface
        \includegraphics[width=\columnwidth]{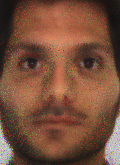} \vspface
        \includegraphics[width=\columnwidth]{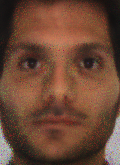} \vspface
        \includegraphics[width=\columnwidth]{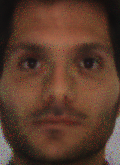} \vspface
    \end{minipage}
    \hfill
    \begin{minipage}[b]{\faceimgwidth}
        \centering
        \phantom{blank} \vspface
        \includegraphics[width=\columnwidth]{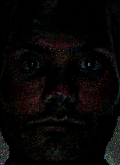} \vspface
        \includegraphics[width=\columnwidth]{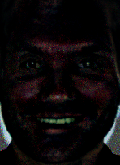} \vspface
        \includegraphics[width=\columnwidth]{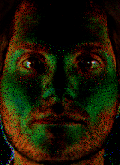} \vspface
        \includegraphics[width=\columnwidth]{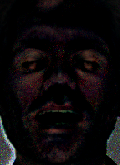} \vspface
        \includegraphics[width=\columnwidth]{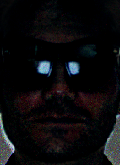} \vspface
        \includegraphics[width=\columnwidth]{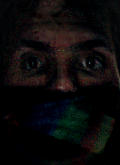} \vspface
    \end{minipage}
    \hfill
    \begin{minipage}[b]{\faceimgwidth}
        \centering
        \phantom{blank} \vspface
        \includegraphics[width=\columnwidth]{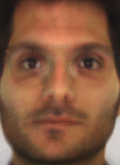} \vspface
        \includegraphics[width=\columnwidth]{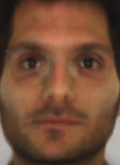} \vspface
        \includegraphics[width=\columnwidth]{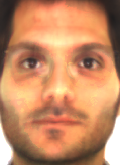} \vspface
        \includegraphics[width=\columnwidth]{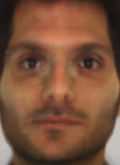} \vspface
        \includegraphics[width=\columnwidth]{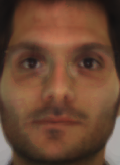} \vspface
        \includegraphics[width=\columnwidth]{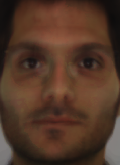} \vspface
    \end{minipage}
    \hfill
    \begin{minipage}[b]{\faceimgwidth}
        \centering
        \phantom{blank} \vspface
        \includegraphics[width=\columnwidth]{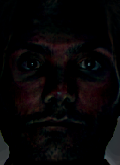} \vspface
        \includegraphics[width=\columnwidth]{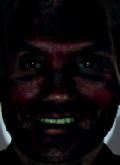} \vspface
        \includegraphics[width=\columnwidth]{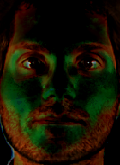} \vspface
        \includegraphics[width=\columnwidth]{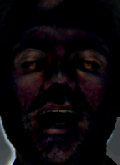} \vspface
        \includegraphics[width=\columnwidth]{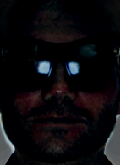} \vspface
        \includegraphics[width=\columnwidth]{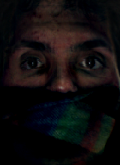} \vspface
    \end{minipage}
    \hfill
    \begin{minipage}[b]{\faceimgwidth}
        \centering
        \phantom{blank} \vspface
        \includegraphics[width=\columnwidth]{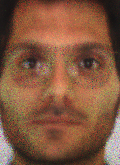} \vspface
        \includegraphics[width=\columnwidth]{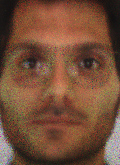} \vspface
        \includegraphics[width=\columnwidth]{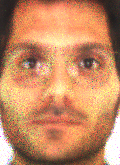} \vspface
        \includegraphics[width=\columnwidth]{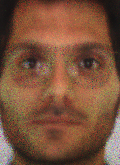} \vspface
        \includegraphics[width=\columnwidth]{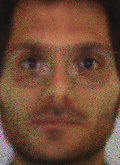} \vspface
        \includegraphics[width=\columnwidth]{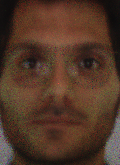} \vspface
    \end{minipage}
    \hfill
    \begin{minipage}[b]{\faceimgwidth}
        \centering
        \phantom{blank} \vspface
        \includegraphics[width=\columnwidth]{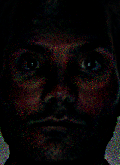} \vspface
        \includegraphics[width=\columnwidth]{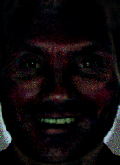} \vspface
        \includegraphics[width=\columnwidth]{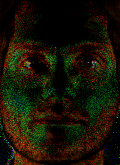} \vspface
        \includegraphics[width=\columnwidth]{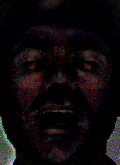} \vspface
        \includegraphics[width=\columnwidth]{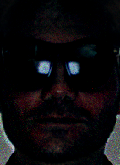} \vspface
        \includegraphics[width=\columnwidth]{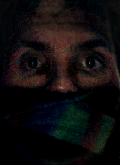} \vspface
    \end{minipage}

    \vspace{-0.2in}
    \caption{Visual results for face modeling. The first column shows the original images. The second and third columns are the recovered faces and removed outliers output by LRMC with $p = 100\%$. The fourth and fifth columns are the outputs by LRMC with $p = 30\%$. The sixth and seventh columns are the outputs by ScaledGD with $p = 100\%$, and the eighth and ninth columns are the outputs by ScaledGD with $p = 30\%$.}
\label{fig:visualization_faces}

\end{figure}

\begin{table}[h]
\centering
\caption{Runtime results for face modeling. All algorithms halt when $\|\X_{k}-\BX_{k-1}\|_\fro/\|\BX_{k-1}\|_\fro<10^{-4}$ or after 32 iterations.}  \label{tab:face}
\begin{tabular}{l|c@{\hskip 0pt}}
\toprule
\textsc{Algorithm} & \textsc{Average inference time}  \cr
\midrule
LRMC ($p = 100\%$) & \underline{0.29} secs  \cr
LRMC ($p = 30\%$)   & \textbf{0.16} secs  \cr
ScaledGD ($p = 100\%$)   & 3.26 secs   \cr
ScaledGD ($p = 30\%$)  & 1.97 secs \cr
\bottomrule
\end{tabular}
\end{table}

\newlength{\cloudimgwidth}
\setlength{\cloudimgwidth}{0.22\columnwidth} 

\begin{figure*}[ht]
    \centering
    \begin{minipage}{\cloudimgwidth}
        \includegraphics[width=\columnwidth]{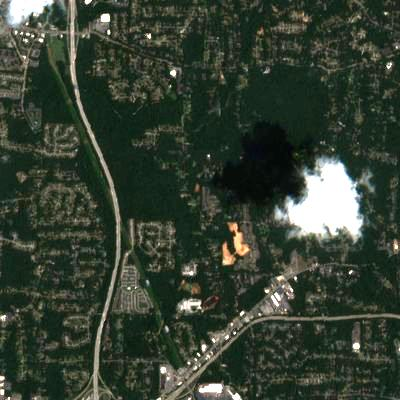}\vspace{1 mm}
        \includegraphics[width=\columnwidth]{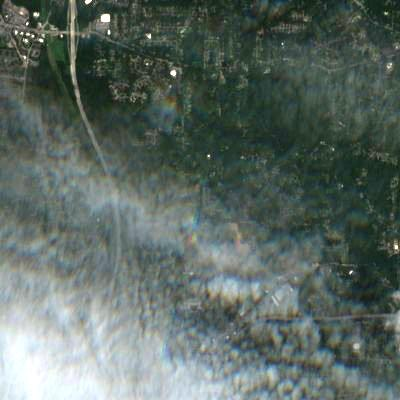}\vspace{1 mm} 
        \includegraphics[width=\columnwidth]{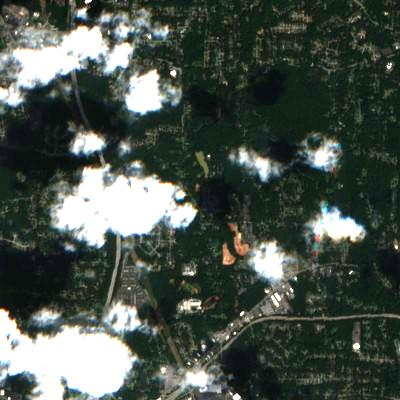}\vspace{1 mm}
    \end{minipage}%
    \hfill
    \begin{minipage}{\cloudimgwidth}
        \includegraphics[width=\columnwidth]{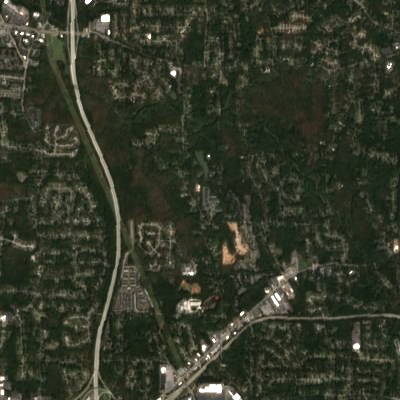}\vspace{1 mm}
        \includegraphics[width=\columnwidth]{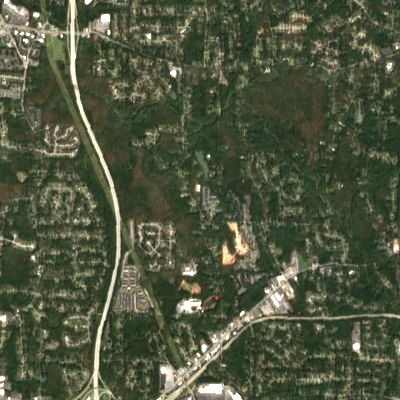}\vspace{1 mm} 
        \includegraphics[width=\columnwidth]{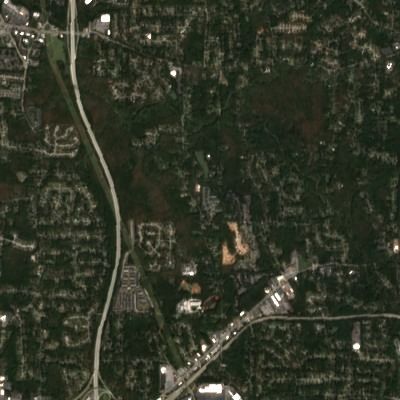}\vspace{1 mm}
    \end{minipage}%
    \hfill
    \begin{minipage}{\cloudimgwidth}
        \includegraphics[width=\columnwidth]{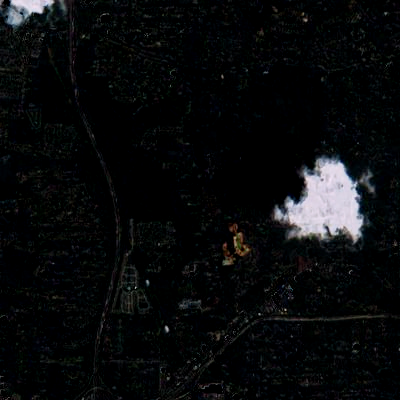}\vspace{1 mm}
        \includegraphics[width=\columnwidth]{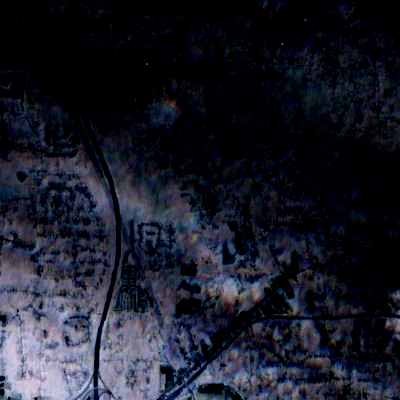}\vspace{1 mm} 
        \includegraphics[width=\columnwidth]{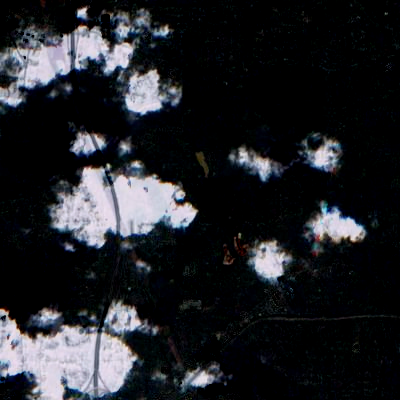}\vspace{1 mm}
    \end{minipage}%
    \hfill
    \begin{minipage}{\cloudimgwidth}
        \includegraphics[width=\columnwidth]{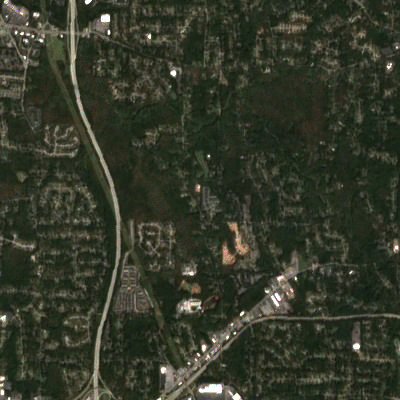}\vspace{1 mm}
        \includegraphics[width=\columnwidth]{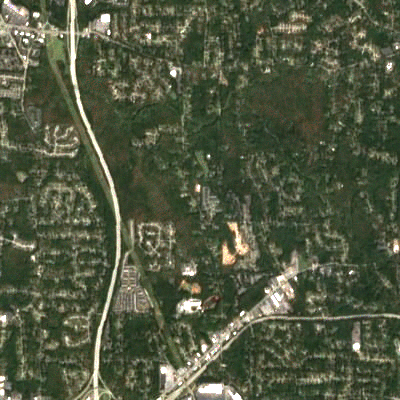}\vspace{1 mm} 
        \includegraphics[width=\columnwidth]{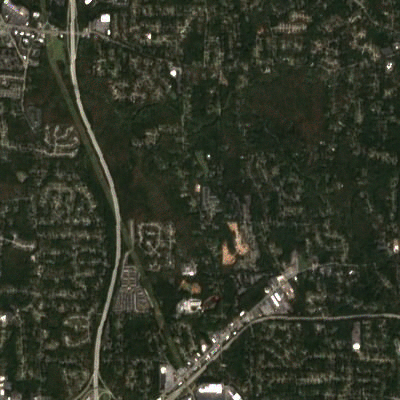}\vspace{1 mm}
    \end{minipage}%
    \hfill
    \begin{minipage}{\cloudimgwidth}
        \includegraphics[width=\columnwidth]{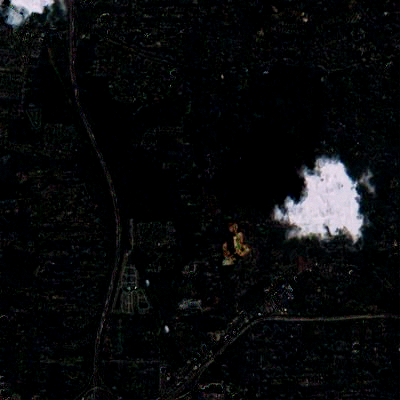}\vspace{1 mm}
        \includegraphics[width=\columnwidth]{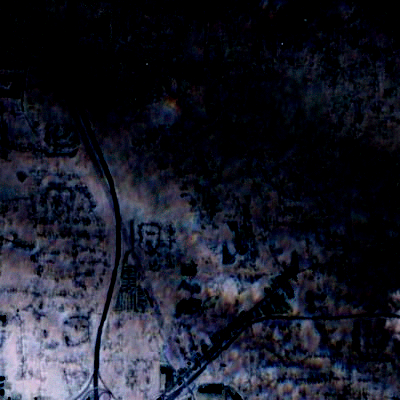}\vspace{1 mm} 
        \includegraphics[width=\columnwidth]{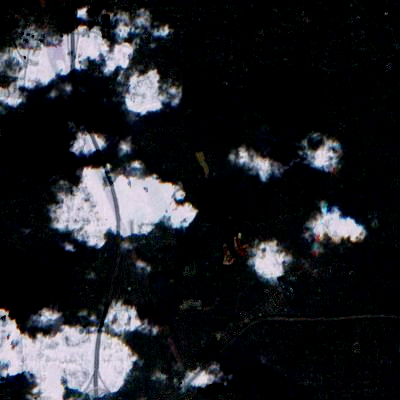}\vspace{1 mm}
    \end{minipage}%
    \hfill
    \begin{minipage}{\cloudimgwidth}
        \includegraphics[width=\columnwidth]{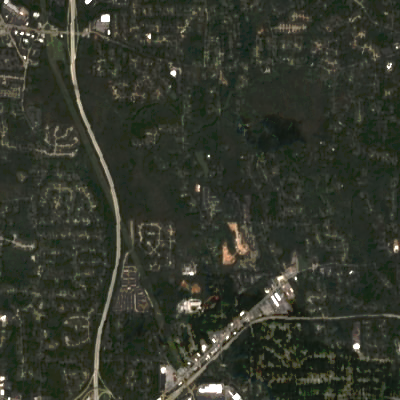}\vspace{1 mm}
        \includegraphics[width=\columnwidth]{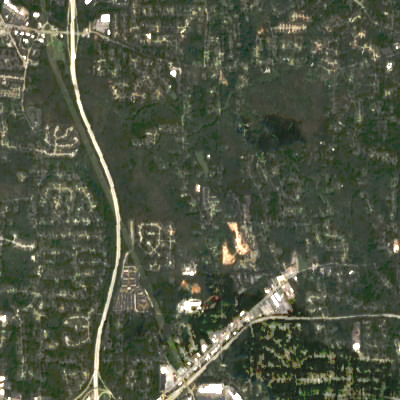}\vspace{1 mm} 
        \includegraphics[width=\columnwidth]{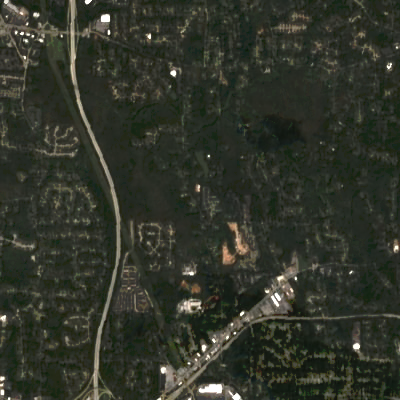}\vspace{1 mm}
    \end{minipage}%
    \hfill
    \begin{minipage}{\cloudimgwidth}
        \includegraphics[width=\columnwidth]{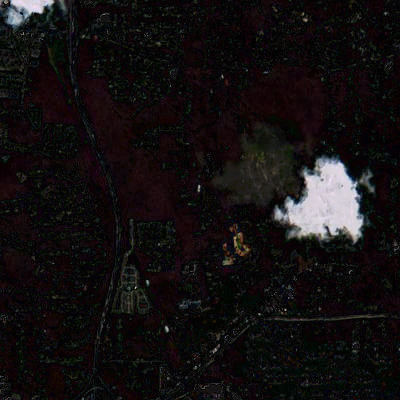}\vspace{1 mm}
        \includegraphics[width=\columnwidth]{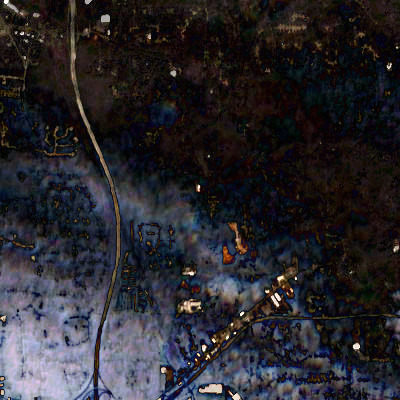}\vspace{1 mm} 
        \includegraphics[width=\columnwidth]{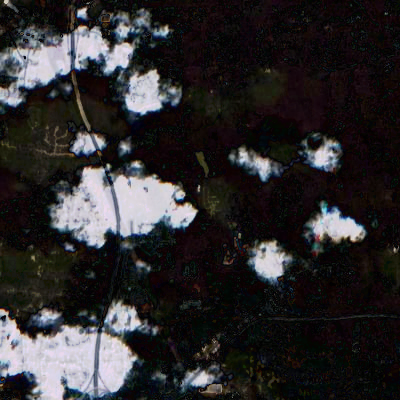}\vspace{1 mm}
    \end{minipage}%
    \hfill
    \begin{minipage}{\cloudimgwidth}
        \includegraphics[width=\columnwidth]{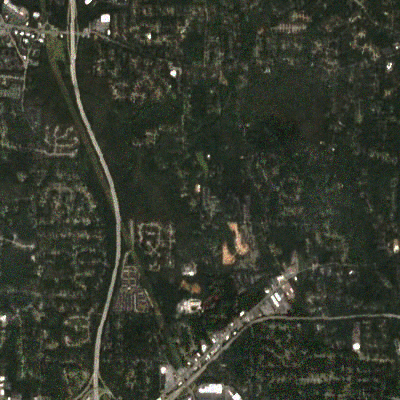}\vspace{1 mm}
        \includegraphics[width=\columnwidth]{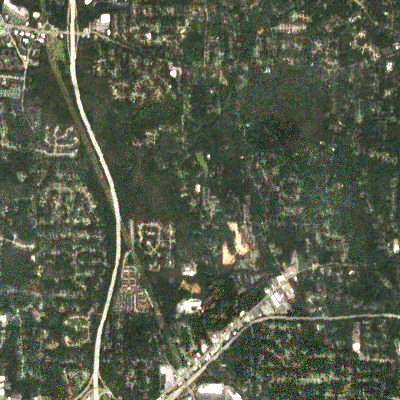}\vspace{1 mm} 
        \includegraphics[width=\columnwidth]{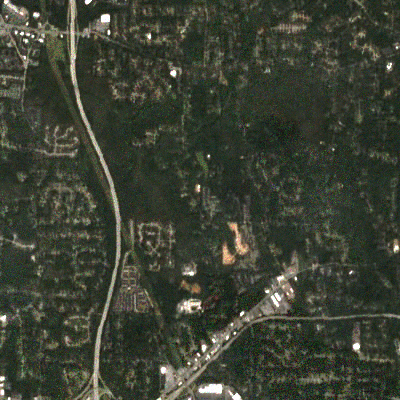}\vspace{1 mm}
    \end{minipage}%
    \hfill
    \begin{minipage}{\cloudimgwidth}
        \includegraphics[width=\columnwidth]{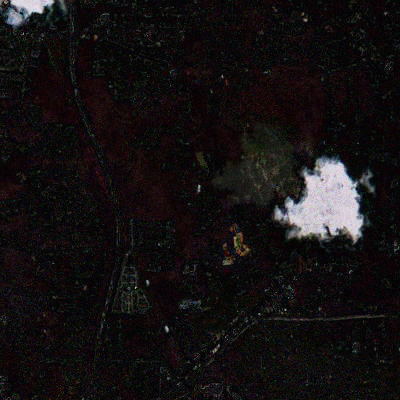}\vspace{1 mm}
        \includegraphics[width=\columnwidth]{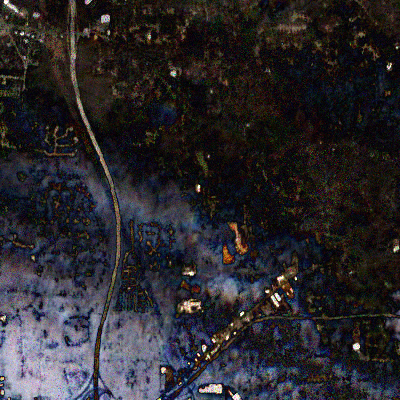}\vspace{1 mm} 
        \includegraphics[width=\columnwidth]{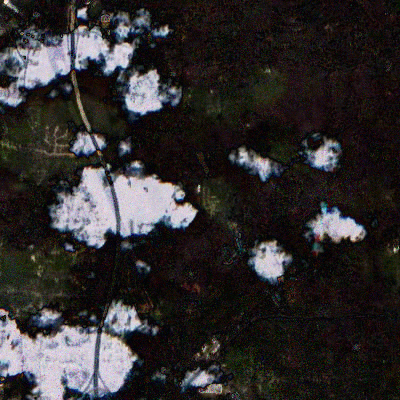}\vspace{1 mm}
    \end{minipage}%

    \vspace{-0.05in}
    \caption{Visual results for cloud removal at $400\times400$ resolution. The first column is the observed satellite image blocks from Atlanta City on different dates. The second and third columns show the recovered cloud-free ground and cloud images output by LRMC with $p = 100\%$. The fourth and fifth columns are the outputs by LRMC with $p = 30\%$. The sixth and seventh columns are the output by ScaledGD with $p = 100\%$. The eighth and ninth columns are the outputs by ScaledGD with $p = 30\%$.}
    \label{fig:cloud_removal}
\end{figure*}

\vspace{0.05in}
\noindent\textbf{Cloud removal from multi-temporal satellite imagery.} 
We evaluate the scalability of LRMC to the large-scale task of cloud removal from high-resolution multi-temporal satellite imagery. The dataset used for this experiment is curated using Sentinel-2 instruments (A level 1-A top-of-atmosphere reflectance product) from the European Space Agency's Copernicus mission. The data is acquired through the Sentinel Hub API\footnote{Data accessed through the Sentinel Hub API, provided by Sinergise Solutions d.o.o., \url{https://www.sentinel-hub.com/}.}. The dataset includes multiple blocks from each of the 100 most populous U.S. cities, comprising 60 images captured between January 2021 and June 2024. To rigorously test scalability, we perform experiments at three resolutions: $400 \times 400$ pixels ($4.8 \times 10^{5}$ rows), $1000 \times 1000$ pixels ($3 \times 10^{6}$ rows), and $2000 \times 2000$ pixels ($1.2 \times 10^7$ rows) per image.

Table~\ref{tab:cloud2} outlines the average runtime and reconstruction loss comparisons between LRMC and the baseline ScaledGD. At lower resolutions, i.e., $400\times400$ and $1000\times1000$, LRMC is significantly faster than ScaledGD while maintaining a lower loss. Moreover, at the high resolution of $2000\times2000$, the full observation cases lead to memory exhaustion for both algorithms. Thus, it is crucial to have the ability to work with partial observations for the sake of memory efficiency for the proposed learning pipeline. With 30\% observations, LRMC demonstrates superior efficiency: It recovers the background around 10$\times$ faster than ScaledGD while achieving lower reconstruction loss. This confirms that LRMC offers superior scalability for large-scale real-world tasks. Selected visual results are presented in Fig.~\ref{fig:cloud_removal}, where both algorithms provide visually crispy reconstruction, under full or partial observations.

\begin{table*}[!ht]
\centering
\caption{Comparison of average runtime and reconstruction loss for cloud removal task at varying resolutions. ``\ding{56}" denotes out-of-memory failure. All algorithms halt when $\|\X_{k}-\BX_{k-1}\|_\fro/\|\BX_{k-1}\|_\fro<10^{-4}$ or after 32 iterations.} \label{tab:cloud2}
\setlength{\tabcolsep}{6pt} 
\begin{tabular}{l|cc|cc|cc}
\toprule
& \multicolumn{2}{c|}{$\mathbf{400\times 400}$} 
& \multicolumn{2}{c|}{$\mathbf{1000\times 1000}$} 
& \multicolumn{2}{c}{$\mathbf{2000\times 2000}$} \\

\textsc{Algorithm} 
& \textsc{Runtime} & \textsc{Loss} 
& \textsc{Runtime} & \textsc{Loss} 
& \textsc{Runtime} & \textsc{Loss} \\
\midrule

LRMC ($p = 100\%$) 
& \underline{3.27} secs  & \textbf{9.42}\,$\mathbf{\times\,10^{-5}}$ 
& \underline{26.31} secs & \textbf{9.63}\,$\mathbf{\times\,10^{-5}}$ 
& \ding{56} & \ding{56} \\

LRMC ($p = 30\%$) 
& \textbf{1.65} secs & \underline{5.22 $\times 10^{-4}$} 
& \textbf{16.07} secs& \underline{5.73 $\times 10^{-4}$} 
& \textbf{68.24} secs & \textbf{9.81} $\mathbf{\times 10^{-4}}$ \\

ScaledGD ($p = 100\%$) 
& 48.61 secs & $6.88\times 10^{-4}$ 
& 198.89 secs & $7.89\times 10^{-4}$ 
& \ding{56} & \ding{56}   \\

ScaledGD ($p = 30\%$)  
& 28.54 secs & $4.46\times 10^{-3}$
& 104.39 secs &  $5.78\times 10^{-3}$ 
& \underline{536.09} secs & \underline{7.92 $\times 10^{-3}$}  \\
\bottomrule
\end{tabular}
\end{table*}

\section{Proofs} \label{sec:proofs}
In this section, we provide the mathematical proofs for the claimed theoretical results. Note that the proof of our convergence theorem follows the route established in \cite{tong2021accelerating}. However, the details of our proof are quite involved since we replaced the sparsification operator, which substantially changes the method of outlier detection.

Let $\BL_\star:=\BU_\star\BSigma_\star^{1/2}$ and $\BR_\star:=\BV_\star\BSigma_\star^{1/2}$ where $\BU_\star\BSigma_\star\BV_\star^\top$ is the compact SVD of $\BX_\star$. 
For theoretical analysis, we consider the error metric for decomposed rank-$r$ matrices:
\[
    &~ \dist^2(\BL,\BR;\BL_\star,\BR_\star) \cr 
    := & \inf_{\substack{\BQ\in\mathbb{R}^{r\times r},\\\rank(\BQ)=r} } \|(\BL\BQ-\BL_\star)\BSigma_\star^{1/2}\|_\fro^2  + \| (\BR\BQ^{-\top}-\BR_\star)\BSigma_\star^{1/2} \|_\fro^2 .
\]
By \cite[Lemma~9]{tong2021accelerating}, the optimal alignment $\BQ$ exists and is invertible if $[\BL,\BR]$ is sufficiently close to $[\BL_\star,\BR_\star]$, specifically $
    \dist(\BL,\BR;\BL_\star,\BR_\star) < c \sigma_r(\BX_\star)
$
for some $0< c<1$.

\setcounter{lemma}{5}



For ease of presentation, we take $n:=n_1=n_2$ in the rest of this section, but we emphasize that similar results can be easily drawn for the rectangular matrix setting. Furthermore, we introduce following shorthand for notational convenience: $\BQ_k$ denotes the optimal alignment matrix between $(\BL_k, \BR_k)$ and $(\BL_\star, \BR_\star)$, $\BL_\natural:=\BL_k\BQ_k$, $\BR_\natural:=\BR_k\BQ_k^{-\top}$, $\BDelta_L:=\BL_\natural-\BL_\star$, $\BDelta_R:=\BR_\natural-\BR_\star$, and $\BDelta_S:=\BS_{k+1}-\BS_\star$.

\vspace{-0.1in}
\subsection{Proof of Theorem~\ref{thm:main_theorem}}
We first present the theorems of local linear convergence and guaranteed initialization. The proofs of these two theorems can be found in Sections~\ref{sec:local conv} and \ref{sec:guaranteed initialization}, respectively. 

\begin{theorem}[Local linear convergence] \label{thm:local convergence}
Suppose that $\BX_\star=\BL_\star\BR_\star^\top$ is a rank-$r$ matrix with $\mu$-incoherence and $\BS_\star$ is an $\alpha$-sparse matrix with $\alpha\leq\frac{1}{10^4\mu r^{1.5}}$. Let $\BQ_k$ be the optimal alignment matrix between $[\BL_k,\BR_k]$ and $[\BL_\star,\BR_\star]$. If the initial guesses obey the conditions
\[
&~ \distzero \leq \varepsilon_0 \sigma_r(\BX_\star), \cr
&~ \|(\BL_0\BQ_0-\BL_\star)\BSigma_\star^{1/2}\|_{2,\infty}  \lor \|(\BR_0\BQ_0^{-\top}-\BR_\star)\BSigma_\star^{1/2}\|_{2,\infty} \\ 
\leq&~\sqrt{{\mu r}/{n}} \sigma_r(\BX_\star)
\]
with $\varepsilon_0:=0.02$, then by setting the thresholding values $\zeta_k=\|\BX_\star-\BL_{k-1}\BR_{k-1}^\top\|_\infty$ and the fixed step size $\eta_k=\eta\in[\frac{1}{4},\frac{8}{9}]$, the iterates of Algorithm~\ref{algo:LRMC} satisfy
\[
& \distk\leq\varepsilon_0 \tau^k \sigma_r(\BX_\star), \\ 
& \|(\BL_k\BQ_k-\BL_\star)\BSigma_\star^{1/2}\|_{2,\infty}  \lor \|(\BR_k\BQ_k^{-\top}-\BR_\star)\BSigma_\star^{1/2}\|_{2,\infty}  \\ 
 \leq&~\sqrt{{\mu r}/{n}} \tau^k \sigma_r(\BX_\star),
\]
where the convergence rate $\tau:=1-0.6\eta$.
\end{theorem}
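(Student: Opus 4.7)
The plan is to prove both estimates simultaneously by induction on $k$. The base case is the assumed initialization. For the inductive step, given the $\dist$ bound and the row-wise $\ell_{2,\infty}$ bound at step $k$, I first translate these into a pointwise bound on the low-rank error $\BL_\natural\BR_\natural^\top - \BX_\star$, then analyze the sparse update, and finally run a scaled-gradient analysis for the low-rank iterate along the route of \cite{tong2021accelerating} adapted to accommodate the perturbation from $\BDelta_S$.

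First I would establish the key pointwise bound. Expanding
\[
\BL_\natural\BR_\natural^\top - \BX_\star = \BDelta_L \BR_\star^\top + \BL_\star \BDelta_R^\top + \BDelta_L \BDelta_R^\top,
\]
I insert $\BSigma_\star^{\pm 1/2}$ factors and combine the $\mu$-incoherence of $\BU_\star,\BV_\star$ with the inductive $\ell_{2,\infty}$ bounds on $\BDelta_L\BSigma_\star^{1/2}$ and $\BDelta_R\BSigma_\star^{1/2}$. A row-by-row Cauchy--Schwarz yields $\|\BL_\natural\BR_\natural^\top - \BX_\star\|_\infty \lesssim (\mu r/n)\,\tau^k\sigma_r(\BX_\star)$, so the chosen threshold satisfies $\zeta_{k+1}\lesssim (\mu r/n)\,\tau^k\sigma_r(\BX_\star)$.

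Next I analyze $\BS_{k+1}=\cS_{\zeta_{k+1}}(\BX_\star+\BS_\star-\BL_k\BR_k^\top)$. Since $\zeta_{k+1}$ equals $\|\BX_\star-\BL_k\BR_k^\top\|_\infty$, every entry off $\supp(\BS_\star)$ is killed, so $\BDelta_S=\BS_{k+1}-\BS_\star$ is $\alpha$-sparse in rows and columns. A two-case check of soft-thresholding (small vs.\ large argument) gives the clean bound $\|\BDelta_S\|_\infty\leq 2\zeta_{k+1}$. Combining sparsity and entrywise control, $\|\BDelta_S\|_2\leq \alpha n\|\BDelta_S\|_\infty\lesssim \alpha\mu r\,\tau^k\sigma_r(\BX_\star)$, which the assumption $\alpha\leq 1/(10^4\mu r^{3/2})$ renders at scale $\tau^k\sigma_r(\BX_\star)/r^{1/2}$, small enough to be absorbed as a lower-order perturbation.

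I then attack the $\dist$ estimate. Working in the $\BQ_k$-aligned frame, the scaled update reads
\[
\BL_{k+1}\BQ_k-\BL_\star = \BDelta_L - \eta\,(\BL_\natural\BR_\natural^\top-\BX_\star+\BDelta_S)\,\BR_\natural(\BR_\natural^\top\BR_\natural)^{-1},
\]
and analogously for $\BR$. Since $\dist^2(\BL_{k+1},\BR_{k+1};\BL_\star,\BR_\star)$ is the infimum over alignments, plugging in $\BQ=\BQ_k$ gives an upper bound. Expanding the two squared Frobenius norms, cross terms collapse via the key identity used in the ScaledGD analysis, producing a clean quadratic in $\eta$ whose minimum at $\eta=1/2$ contracts $\dist^2$ by $(1-0.6\eta)^2$ uniformly over $\eta\in[1/4,8/9]$, plus an $O(\eta\|\BDelta_S\|_2\cdot\dist)$ perturbation term. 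The sparsity/size bound on $\BDelta_S$ established above swallows the perturbation, delivering $\dist_{k+1}\leq \tau\,\dist_k$.

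The harder half is propagating the $\ell_{2,\infty}$ bound, and this is where I expect the bulk of the technical work to lie. The approach is to apply the same update identity row-by-row: the $i$-th row of $(\BL_{k+1}\BQ_k-\BL_\star)\BSigma_\star^{1/2}$ is bounded by the $i$-th row of the current $\BDelta_L\BSigma_\star^{1/2}$ (which is controlled by the induction hypothesis) plus a perturbation whose row norm is estimated from (i) the pointwise bound on $\BL_\natural\BR_\natural^\top-\BX_\star$ paired with $\|\BR_\natural(\BR_\natural^\top\BR_\natural)^{-1}\BSigma_\star^{1/2}\|_{2,\infty}$ (which is close to $\|\BV_\star\|_{2,\infty}\leq\sqrt{\mu r/n}$ by a Lemma~9-style perturbation of $\BR_\natural^\top\BR_\natural$ around $\BSigma_\star$), and (ii) the contribution of $\BDelta_S$, whose $i$-th row has at most $\alpha n$ nonzero entries each bounded by $2\zeta_{k+1}$. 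Choosing constants so that both perturbations fit inside a $1-\tau$ slack of $\sqrt{\mu r/n}\,\tau^k\sigma_r(\BX_\star)$ closes the induction. The main obstacle compared to ScaledGD is precisely this step: ScaledGD only needed a uniform bound on $\|\BL\BR^\top-\BX_\star\|_\infty$, whereas here the bound must itself decay geometrically at the same rate $\tau$, which forces the row-wise propagation to be carried out with sharper constants.
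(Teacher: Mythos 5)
Your proposal follows essentially the same route as the paper's proof: induction on the pair of error metrics, support containment plus the entrywise bound $\|\BDelta_S\|_\infty\leq 2\zeta_{k+1}$ for the soft-thresholding step, an entrywise bound $\|\BL_\natural\BR_\natural^\top-\BX_\star\|_\infty\lesssim (\mu r/n)\,\tau^k\sigma_r(\BX_\star)$ derived from incoherence and the inductive $\ell_{2,\infty}$ hypothesis, a ScaledGD-style expansion of $\BL_{k+1}\BQ_k-\BL_\star$ with the $\BDelta_S$ terms treated as sparse perturbations for the $\dist$ contraction, and a row-wise propagation for the $\ell_{2,\infty}$ bound. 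All of these steps are sound and correspond to the paper's Lemmas.

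There is, however, one genuine gap, and it sits exactly at the point the paper flags as delicate. You close the $\ell_{2,\infty}$ induction for $\|(\BL_{k+1}\BQ_k-\BL_\star)\BSigma_\star^{1/2}\|_{2,\infty}$, i.e., with the \emph{old} alignment matrix $\BQ_k$, but the inductive hypothesis needed at step $k+1$ (and the theorem statement) is phrased with $\BQ_{k+1}$, the optimal alignment at step $k+1$. For the $\dist$ metric this discrepancy is harmless because $\dist$ is an infimum over alignments, so any particular $\BQ$ yields an upper bound; but $\BQ_{k+1}$ is optimal only in the Frobenius sense, not in $\ell_{2,\infty}$, so $\|(\BL_{k+1}\BQ_{k+1}-\BL_\star)\BSigma_\star^{1/2}\|_{2,\infty}$ is \emph{not} dominated by the $\BQ_k$-aligned quantity and must be bounded directly. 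The paper handles this by writing $\BL_{k+1}\BQ_{k+1}-\BL_\star=(\BL_{k+1}\BQ_k-\BL_\star)+\BL_{k+1}(\BQ_{k+1}-\BQ_k)$ and controlling $\|\BSigma_\star^{1/2}\BQ_k^{-1}(\BQ_{k+1}-\BQ_k)\BSigma_\star^{1/2}\|_2\leq\frac{2\varepsilon_0}{1-\varepsilon_0}\sigma_r(\BX_\star)$ via a dedicated lemma built on \cite[Lemma~14]{tong2021accelerating}, whose hypothesis is in turn supplied by the $\BQ_k$-aligned contraction you already established. This correction term consumes a nontrivial share of the $1-\tau$ slack, so it also affects your final constant bookkeeping; without it the induction does not close as written.
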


\begin{theorem}[Guaranteed initialization] \label{thm:initial}
Suppose that $\BX_\star=\BL_\star\BR_\star^\top$ is a rank-$r$ matrix with $\mu$-incoherence and $\BS_\star$ is an $\alpha$-sparse matrix with $\alpha\leq\frac{c_0}{\mu r^{1.5}\kappa}$ for some small positive constant $c_0\leq\frac{1}{35}$. Let $\BQ_0$ be the optimal alignment matrix between $[\BL_0,\BR_0]$ and $[\BL_\star,\BR_\star]$. By setting the thresholding values $\zeta_0=\|\BX_\star\|_\infty$, the initial guesses satisfy
\[
& \distzero \leq 10c_0 \sigma_r(\BX_\star), \\ 
& \|(\BL_0\BQ_0-\BL_\star)\BSigma_\star^{1/2}\|_{2,\infty}  \lor \|(\BR_0\BQ_0^{-\top}-\BR_\star)\BSigma_\star^{1/2}\|_{2,\infty} \\ 
 \leq&~\sqrt{{\mu r}/{n}} \sigma_r(\BX_\star) .
\]
\end{theorem}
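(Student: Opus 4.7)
The plan is to proceed in three stages, following the template of the initialization analysis in \cite{tong2021accelerating} but adapted to the soft-thresholding denoiser. Write $\BE:=\BS_\star-\BS_0$ for the residual sparse error.

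\emph{Stage 1: Controlling $\BE$.} The choice $\zeta_0=\|\BX_\star\|_\infty$ is tight at the scale of the clean entries, so the analysis of $\BS_0$ is elementary. For any $(i,j)\notin\supp(\BS_\star)$, $|[\BY]_{i,j}|=|[\BX_\star]_{i,j}|\leq\zeta_0$, hence $[\BS_0]_{i,j}=0=[\BS_\star]_{i,j}$, which already gives $\supp(\BS_0)\subseteq\supp(\BS_\star)$. For $(i,j)\in\supp(\BS_\star)$, a short case split on the two branches of the soft-thresholding yields $|[\BE]_{i,j}|\leq 2\zeta_0\leq 2\mu r\,\sigma_1(\BX_\star)/n$ by $\mu$-incoherence. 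Since $\BE$ inherits the $\alpha$-sparsity of $\BS_\star$ in every row and column,
\[
\|\BE\|_2 \;\leq\; \alpha n \cdot \|\BE\|_\infty \;\leq\; 2\alpha\mu r\,\sigma_1(\BX_\star) \;\leq\; \tfrac{2c_0}{\sqrt{r}}\,\sigma_r(\BX_\star),
\]
using $\alpha\leq c_0/(\mu r^{3/2}\kappa)$ and $\sigma_1=\kappa\sigma_r$.

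\emph{Stage 2: From $\BE$ to the scaled distance.} With $p=1$, the initializer is $\BL_0\BR_0^\top=\SVD_r(\BX_\star+\BE)$. Optimality of the best rank-$r$ approximation against the rank-$r$ competitor $\BX_\star$ gives $\|\BL_0\BR_0^\top-(\BX_\star+\BE)\|_2\leq\|\BE\|_2$, so $\|\BL_0\BR_0^\top-\BX_\star\|_2\leq 2\|\BE\|_2$. Since $\BL_0\BR_0^\top-\BX_\star$ has rank at most $2r$, this upgrades to $\|\BL_0\BR_0^\top-\BX_\star\|_\fro\leq\sqrt{2r}\cdot 2\|\BE\|_2\leq 4\sqrt{2}\,c_0\,\sigma_r(\BX_\star)$. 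Feeding this into the standard factor-perturbation lemma (the analogue of Lemma~23 of \cite{tong2021accelerating}) that turns a Frobenius proximity of a rank-$r$ SVD to a nearby rank-$r$ target into a scaled-distance bound on the balanced factors yields $\dist(\BL_0,\BR_0;\BL_\star,\BR_\star)\leq 10 c_0\sigma_r(\BX_\star)$, once $c_0$ is small enough that $\|\BL_0\BR_0^\top-\BX_\star\|_2\leq\sigma_r(\BX_\star)/2$, which is ensured by $c_0\leq 1/35$.

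\emph{Stage 3: The row-wise incoherence bound.} This is the main obstacle, since the aggregate Frobenius bound does not see rows individually. The plan is a Wedin-type, $(2,\infty)$-norm perturbation identity for the top-$r$ singular subspaces of $\BX_\star+\BE$. Three ingredients drive the bound: (i) the sparsity of $\BE$ combined with $\|\BU_\star\|_{2,\infty}\lor\|\BV_\star\|_{2,\infty}\leq\sqrt{\mu r/n}$, which via Cauchy--Schwarz controls $\|\BE\BV_\star\|_{2,\infty}$ and $\|\BE^\top\BU_\star\|_{2,\infty}$ row-by-row; (ii) the spectral gap $\sigma_r(\BX_\star)-\|\BE\|_2\geq\tfrac{1}{2}\sigma_r(\BX_\star)$ from Stage~2, which inverts the Sylvester-like operator in the perturbation identity; and (iii) a comparison of the optimal alignment $\BQ_0$ with the orthogonal polar factor of $\BL_0^\top\BL_\star$, so that row-wise bounds on $\BU_0-\BU_\star\BO$ (for some orthogonal $\BO$) transfer to $(\BL_0\BQ_0-\BL_\star)\BSigma_\star^{1/2}$. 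The delicate coupling is that $\BQ_0$ depends implicitly on $\BE$; the cleanest route is to first bootstrap a row-wise bound on $\BU_0\BSigma_0\BV_0^\top-\BX_\star$ itself, then push it through to the balanced factors, tuning $c_0$ to land at the claimed $\sqrt{\mu r/n}\,\sigma_r(\BX_\star)$. A symmetric argument handles $\BR_0\BQ_0^{-\top}-\BR_\star$.
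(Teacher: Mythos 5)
Your Stages 1 and 2 reproduce the paper's argument essentially verbatim and are correct: the support containment and the entrywise bound $\|\BS_\star-\BS_0\|_\infty\le 2\|\BX_\star\|_\infty\le 2\mu r\,\sigma_1(\BX_\star)/n$ follow exactly as you describe (this is the paper's Lemma on the thresholding step applied with $\BX_{-1}=\bm{0}$), the spectral bound $\|\BE\|_2\le\alpha n\|\BE\|_\infty$ is the standard sparse-matrix norm bound, the factor of $2$ from Eckart--Young and the rank-$2r$ upgrade to Frobenius norm are the same, and the final conversion to $\dist$ is \cite[Lemma~11]{tong2021accelerating} (your ``Lemma~23'' is a mislabel but the content is right); the constants close with room to spare.

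Stage 3 is where the gap is. What you offer there is a research plan, not a proof: you invoke a ``Wedin-type $(2,\infty)$-norm perturbation identity for the top-$r$ singular subspaces,'' but no such identity is off-the-shelf --- row-wise ($\ell_{2,\infty}$) perturbation bounds for singular subspaces are exactly the hard part of this literature and normally require leave-one-out constructions or Neumann-series expansions, and your sketch of inverting a ``Sylvester-like operator'' and comparing $\BQ_0$ to a polar factor does not resolve the circularity you yourself flag (that $\BQ_0$ depends on $\BE$). The paper avoids all of this with one exact algebraic identity that your proposal misses: because $\BU_0\BSigma_0\BV_0^\top$ is the truncated SVD of $\BX_\star-\BDelta_S$ and $\BR_0^\top\BR_0=\BSigma_0$, one has \emph{exactly} $\BL_0=(\BX_\star-\BDelta_S)\BR_0(\BR_0^\top\BR_0)^{-1}$, hence after aligning by $\BQ_0$,
\begin{equation*}
\BDelta_L\BSigma_\star^{1/2}
=-\BL_\star\BDelta_R^\top\BR_\natural(\BR_\natural^\top\BR_\natural)^{-1}\BSigma_\star^{1/2}
-\BDelta_S\BR_\natural(\BR_\natural^\top\BR_\natural)^{-1}\BSigma_\star^{1/2}.
\end{equation*}
The first term is controlled row-wise by the incoherence of $\BL_\star$ together with the Stage-2 spectral bound on $\BDelta_R\BSigma_\star^{1/2}$; the second by $\|\BA\BB\|_{2,\infty}\le\|\BA\|_{1,\infty}\|\BB\|_{2,\infty}$ with $\|\BDelta_S\|_{1,\infty}\le\alpha n\|\BDelta_S\|_\infty$. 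This produces a bound on $\|\BDelta_L\BSigma_\star^{1/2}\|_{2,\infty}$ in terms of $\|\BDelta_R\BSigma_\star^{1/2}\|_{2,\infty}$ and vice versa; the two inequalities form a contractive linear system that is solved for both quantities simultaneously, and $c_0\le 1/35$ makes the resulting coefficient at most $1$. Without this identity (or a fully executed $\ell_{2,\infty}$ subspace perturbation argument in its place), the incoherence claim --- which is the part of the theorem the downstream local-convergence analysis actually needs --- is not established.
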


In addition, we present a lemma that verifies our selection of thresholding values is indeed effective.

\begin{lemma} \label{lm:sparity}
At the $(k+1)$-th iteration of Algorithm~\ref{algo:LRMC}, taking the thresholding value $\zeta_{k+1}:=\|\BX_\star-\BX_{k}\|_\infty$ gives
\[
     \|\BS_\star-\BS_{k+1}\|_\infty \leq 2 \|\BX_\star-\BX_{k}\|_\infty,\, \supp(\BS_{k+1})\subseteq \supp(\BS_\star).
\]
\end{lemma}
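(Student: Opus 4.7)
The plan is to argue entrywise, splitting into cases based on whether $(i,j)$ lies in $\supp(\BS_\star)$ and, when it does, whether the soft-thresholding operator annihilates the corresponding entry. I would begin by setting $\BM:=\Po{\BY-\BL_k\BR_k^\top}$ and observing that since $\BY=\BX_\star+\BS_\star$ on $\Omega$ and $\supp(\BS_\star)\subseteq\Omega$, we have $[\BM]_{i,j}=[\BX_\star-\BL_k\BR_k^\top]_{i,j}+[\BS_\star]_{i,j}$ for $(i,j)\in\Omega$ and $[\BM]_{i,j}=0$ otherwise. Entries outside $\Omega$ satisfy $[\BS_\star]_{i,j}=[\BS_{k+1}]_{i,j}=0$ and contribute trivially to both claims.

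For $(i,j)\in\Omega\setminus\supp(\BS_\star)$, we have $[\BM]_{i,j}=[\BX_\star-\BL_k\BR_k^\top]_{i,j}$, whose magnitude is bounded by $\|\BX_\star-\BL_k\BR_k^\top\|_\infty=\zeta_{k+1}$. Since the soft-thresholding operator $\cS_{\zeta_{k+1}}$ annihilates any entry of magnitude at most $\zeta_{k+1}$, we conclude that $[\BS_{k+1}]_{i,j}=0$. This simultaneously establishes the support containment $\supp(\BS_{k+1})\subseteq\supp(\BS_\star)$ and yields $|[\BS_\star-\BS_{k+1}]_{i,j}|=0\leq 2\zeta_{k+1}$ on this set.

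For $(i,j)\in\supp(\BS_\star)$, I would further split according to whether $|[\BM]_{i,j}|\leq\zeta_{k+1}$ or not. In the first subcase, $[\BS_{k+1}]_{i,j}=0$ and the triangle inequality gives $|[\BS_\star]_{i,j}|\leq|[\BM]_{i,j}|+|[\BX_\star-\BL_k\BR_k^\top]_{i,j}|\leq 2\zeta_{k+1}$. In the second subcase, the soft-thresholding output is $[\BS_{k+1}]_{i,j}=[\BM]_{i,j}-\sign([\BM]_{i,j})\zeta_{k+1}$, so substituting the expression for $[\BM]_{i,j}$ yields $[\BS_\star-\BS_{k+1}]_{i,j}=\sign([\BM]_{i,j})\zeta_{k+1}-[\BX_\star-\BL_k\BR_k^\top]_{i,j}$, whose magnitude is at most $2\zeta_{k+1}$ by the triangle inequality.

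I do not anticipate any significant obstacle here; the essence is that the choice $\zeta_{k+1}=\|\BX_\star-\BL_k\BR_k^\top\|_\infty$ is calibrated to be exactly large enough to dominate the recovery error on all clean observed entries, ensuring no false-positive outliers, while simultaneously being small enough that the unavoidable shrinkage bias on the truly corrupted entries is controlled by the same $\ell_\infty$ quantity. These two observations together deliver both halves of the lemma.
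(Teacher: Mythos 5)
Your proposal is correct and follows essentially the same argument as the paper: the choice $\zeta_{k+1}=\|\BX_\star-\BX_k\|_\infty$ forces soft-thresholding to zero out every entry outside $\supp(\BS_\star)$ (giving the support containment), and a two-subcase triangle-inequality bound on the corrupted entries gives $\|\BS_\star-\BS_{k+1}\|_\infty\leq 2\zeta_{k+1}$. The only cosmetic difference is that you explicitly track the sampling operator $\Pi_\Omega$, whereas the paper's proof works in the fully observed setting of Theorem~\ref{thm:main_theorem}; this changes nothing of substance.
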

\begin{proof}
Denote $\Omega_\star:=\supp(\BS_\star)$ and $\Omega_{k+1}:=\supp(\BS_{k+1})$. 
Recall that $\BS_{k+1} = \cS_{\zeta_{k+1}}(\BY-\BX_{k})=\cS_{\zeta_{k+1}}(\BS_\star+\BX_\star-\BX_{k})$. Since $[\BS_\star]_{i,j} = 0$ outside its support, so $[\BY-\BX_{k}]_{i,j} = [\BX_\star-\BX_{k}]_{i,j}$ for the entries $(i,j)\in\Omega_\star^c$. Applying the chosen thresholding value $\zeta_{k+1}:=\|\BX_\star-\BX_{k}\|_\infty$, one have $[\BS_{k+1}]_{i,j}=0$ for all $(i,j)\in \Omega_\star^c$. Hence, the support of $\BS_{k+1}$ must belongs to the support of $\BS_\star$, i.e.,
\[
\supp(\BS_{k+1})=\Omega_{k+1}\subseteq\Omega_\star=\supp(\BS_\star).
\]
This proves our first claim.

Obviously, $[\BS_\star-\BS_{k+1}]_{i,j}=0$ for all $(i,j)\in\Omega_\star^c$. Moreover, we can split the entries in $\Omega_\star$ into two groups:
\[
    \Omega_{k+1} &= \{(i,j)~|~ |[\BY-\BX_k]_{i,j}|>\zeta_{k+1} \textnormal{ and } [\BS_\star]_{i,j}\neq 0\}, \cr
    \Omega_\star\backslash\Omega_{k+1} &= \{(i,j)~|~ |[\BY-\BX_k]_{i,j}|\leq\zeta_{k+1} \textnormal{ and } [\BS_\star]_{i,j}\neq 0\},
\]
and it holds
\[
|[\BS_\star-\BS_{k+1}]_{i,j}|=& 
\begin{cases}
|[\BX_{k}-\BX_\star]_{i,j} - \mathrm{sign}([\BY-\BX_{k}]_{i,j})\zeta_{k+1} |&   \cr
|[\BS_\star]_{i,j}|         & \cr
\end{cases}  \cr
\leq&
\begin{cases}
|[\BX_{k}-\BX_\star]_{i,j}|+\zeta_{k+1}      &  \cr
|[\BX_\star-\BX_{k}]_{i,j}|+\zeta_{k+1}  & \cr
\end{cases} \cr
\leq&
\begin{cases}
2\|\BX_\star-\BX_{k}\|_\infty      & \quad~~ (i,j)\in \Omega_{k+1};   \cr
2\|\BX_\star-\BX_{k}\|_\infty      & \quad~~ (i,j)\in \Omega_\star\backslash\Omega_{k+1}. \cr
\end{cases}
\]
Therefore, it concludes $\|\BS_\star-\BS_{k+1}\|_\infty\leq 2\|\BX_\star-\BX_{k}\|_\infty $.
\end{proof}

Now, we are ready to prove Theorem~\ref{thm:main_theorem}.

\begin{proof}[Proof of Theorem~~\ref{thm:main_theorem}]
Take $c_0=10^{-4}$ in Theorem~\ref{thm:initial}. Thus, the results of Theorem~\ref{thm:initial} satisfy the condition of Theorem~\ref{thm:local convergence}, and gives 
\[
    & \distk\leq 0.02 (1-0.6\eta)^k \sigma_r(\BX_\star), \\
    & \|(\BL_k\BQ_k-\BL_\star)\BSigma_\star^{1/2}\|_{2,\infty}  \lor \|(\BR_k\BQ_k^{-\top}-\BR_\star)\BSigma_\star^{1/2}\|_{2,\infty} \\
     \leq&~\sqrt{{\mu r}/{n}} (1-0.6\eta)^k \sigma_r(\BX_\star)
\]
for all $k\geq0$. \cite[Lemma~3]{tong2021accelerating} states that
\[
    \| \BL_k\BR_k^\top - \BX_\star \|_\fro \leq 1.5~\distk
\]
as long as $\|(\BL_k\BQ_k-\BL_\star)\BSigma_\star^{1/2}\|_{2,\infty}  \lor \|(\BR_k\BQ_k^{-\top}-\BR_\star)\BSigma_\star^{1/2}\|_{2,\infty} \leq\sqrt{{\mu r}/{n}} \sigma_r(\BX_\star)$.
The first claim is proved.

When $k\geq 1$, the second claim is directly followed by Lemma~\ref{lm:sparity}. When $k=0$, take $\BX_{-1}=\bm{0}$, then one can see $\BS_0=\cS_{\zeta_0}(\BY)=\cS_{\zeta_0}(\BY-\BX_{-1})$, where $\zeta_0=\|\BX_\star\|_\infty=\|\BX_\star-\BX_{-1}\|_\infty$. Applying Lemma~\ref{lm:sparity} again, we have the second claim for all $k\geq0$. 
This finishes the proof.
\end{proof}

\vspace{-0.1in}
\subsection{Auxiliary lemmas}
Before we can present the proofs for Theorems~\ref{thm:local convergence} and \ref{thm:initial}, several important auxiliary lemmas must be processed.

\begin{lemma} \label{lm:bound of sparse matrix}
If $\BS\in \mathbb{R}^{n \times n}$ is $\alpha$-sparse, then it holds
$
    \|\BS\|_2\leq\alpha n \|\BS\|_\infty$,  
$    \|\BS\|_{2,\infty}\leq\sqrt{\alpha n} \|\BS\|_\infty$, and $\|\BS\|_{1,\infty}\leq \alpha n \|\BS\|_\infty.
$
\end{lemma}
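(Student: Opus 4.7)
All three bounds reduce to the same elementary observation: since $\BS$ is $\alpha$-sparse, each row and each column contains at most $\alpha n$ nonzero entries, and every entry has magnitude at most $\|\BS\|_\infty$. The plan is to treat the three inequalities in order of increasing difficulty, i.e., $\|\BS\|_{1,\infty}$, then $\|\BS\|_{2,\infty}$, and finally $\|\BS\|_2$.

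First I would fix an arbitrary row index $i$. By the $\alpha$-sparsity assumption, the set $\{j : [\BS]_{i,j}\neq 0\}$ has cardinality at most $\alpha n$. Therefore
\[
\sum_j |[\BS]_{i,j}| \leq (\alpha n)\cdot \|\BS\|_\infty, \qquad \sum_j [\BS]_{i,j}^2 \leq (\alpha n)\cdot \|\BS\|_\infty^2.
\]
Taking the maximum over $i$ yields $\|\BS\|_{1,\infty}\leq \alpha n\|\BS\|_\infty$ and $\|\BS\|_{2,\infty}\leq \sqrt{\alpha n}\,\|\BS\|_\infty$, which are the second and third claims. This part is purely a counting argument and should be immediate.

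For the spectral norm bound I would invoke the standard interpolation inequality $\|\BS\|_2 \leq \sqrt{\|\BS\|_{1\to 1}\cdot \|\BS\|_{\infty\to\infty}}$, where $\|\BS\|_{1\to 1}$ is the largest $\ell_1$ column sum and $\|\BS\|_{\infty\to\infty}$ is the largest $\ell_1$ row sum (this is a consequence of Riesz--Thorin, or can be derived directly from $\|\BS\|_2^2 = \|\BS\BS^\top\|_2$ and Gershgorin). By the same counting argument applied to rows and columns separately, each of these two induced norms is bounded by $\alpha n\|\BS\|_\infty$, so
\[
\|\BS\|_2 \leq \sqrt{\alpha n\|\BS\|_\infty\cdot\alpha n\|\BS\|_\infty} = \alpha n\|\BS\|_\infty,
\]
proving the first claim.

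There is no real obstacle here; the only place where one has to be slightly careful is in citing (or briefly justifying) the $\|\BS\|_2 \leq \sqrt{\|\BS\|_{1\to 1}\,\|\BS\|_{\infty\to\infty}}$ inequality, since it is the one step that is not a direct counting argument. If the authors prefer self-containment, one can replace this step with the cruder bound $\|\BS\|_2 \leq \|\BS\|_\fro \leq \sqrt{n}\,\|\BS\|_{2,\infty} \leq n\sqrt{\alpha}\,\|\BS\|_\infty$, but the sharper form $\alpha n\|\BS\|_\infty$ is what is needed downstream in the proofs of Theorems~\ref{thm:local convergence} and \ref{thm:initial}, so the interpolation route is the right one.
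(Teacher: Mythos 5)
Your proof is correct and matches the paper's treatment: the second and third bounds are obtained by exactly the same row/column counting argument, and for the spectral norm the paper simply cites \cite[Lemma~4]{netrapalli2014non}, whose underlying argument is precisely the $\|\BS\|_2 \leq \sqrt{\|\BS\|_{1\to 1}\,\|\BS\|_{\infty\to\infty}}$ interpolation step you spell out. So you have essentially reproduced the cited lemma's proof in a self-contained way; nothing is missing.
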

\begin{proof}
The first claim has been shown as {\cite[Lemma~4]{netrapalli2014non}}. The rest two claims are directly followed by the fact $\BS$ has at most $\alpha n$ non-zero elements in each row and each column.
\end{proof}

\begin{lemma} \label{lm:Delta_F_norm}
If 
$
    \distk \leq \varepsilon_0\tau^k \sigma_r(\BX_\star),
$ 
then it holds 
$
\| \BDelta_L\BSigma_\star^{1/2} \|_\fro \lor \| \BDelta_R\BSigma_\star^{1/2} \|_\fro \leq \varepsilon_0\tau^{k}\sigma_r(\BX_\star) $ and $
\| \BDelta_L\BSigma_\star^{1/2} \|_2 \lor \| \BDelta_R\BSigma_\star^{1/2} \|_2 \leq \varepsilon_0\tau^{k}\sigma_r(\BX_\star).
$
\end{lemma}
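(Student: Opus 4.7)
The proof is essentially an unpacking of the definition of $\dist$ combined with the standard norm inequality $\|\cdot\|_2 \le \|\cdot\|_\fro$. The plan is as follows.

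First, I would recall that $\BQ_k$ is the optimal alignment matrix, so by definition
\[
\dist^2(\BL_k,\BR_k;\BL_\star,\BR_\star) = \|(\BL_k\BQ_k-\BL_\star)\BSigma_\star^{1/2}\|_\fro^2 + \|(\BR_k\BQ_k^{-\top}-\BR_\star)\BSigma_\star^{1/2}\|_\fro^2,
\]
which using the shorthand $\BDelta_L = \BL_k\BQ_k - \BL_\star$ and $\BDelta_R = \BR_k\BQ_k^{-\top} - \BR_\star$ becomes $\|\BDelta_L \BSigma_\star^{1/2}\|_\fro^2 + \|\BDelta_R \BSigma_\star^{1/2}\|_\fro^2$. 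Substituting the hypothesis $\distk \leq \varepsilon_0 \tau^k \sigma_r(\BX_\star)$ immediately yields
\[
\|\BDelta_L \BSigma_\star^{1/2}\|_\fro^2 + \|\BDelta_R \BSigma_\star^{1/2}\|_\fro^2 \leq \varepsilon_0^2 \tau^{2k} \sigma_r^2(\BX_\star).
\]
Since each term on the left is nonnegative, each individually is bounded by the right-hand side, and taking square roots yields the Frobenius bound for each of $\BDelta_L \BSigma_\star^{1/2}$ and $\BDelta_R \BSigma_\star^{1/2}$.

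Second, to obtain the spectral-norm bound, I would simply invoke the fact that for any matrix $\BM$, $\|\BM\|_2 \le \|\BM\|_\fro$. Applying this to $\BDelta_L \BSigma_\star^{1/2}$ and $\BDelta_R \BSigma_\star^{1/2}$ and chaining with the Frobenius bound from the previous step finishes the spectral claim. There is no real obstacle here; both conclusions follow directly from the definitions and a single elementary norm inequality, which is why this lemma appears as an auxiliary preparation rather than a substantive result. The genuine work of the paper is in bounding $\dist$ itself (i.e., the local convergence theorem), not in this unpacking step.
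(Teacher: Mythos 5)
Your proposal is correct and matches the paper's own argument: the paper likewise notes the Frobenius bounds follow directly from the definition of $\dist$ (each squared term being dominated by $\dist^2$ under the optimal alignment $\BQ_k$), and obtains the spectral bounds from $\|\bm{A}\|_2 \le \|\bm{A}\|_\fro$. No gaps; your write-up just spells out the same steps in more detail.
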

\begin{proof}
The first claim is directly followed by the definition of $\dist$. 
By the fact that $\|\bm{A}\|_2\leq\|\bm{A}\|_\fro$ for any matrix, we deduce the second claim from the first claim.
\end{proof}


\begin{lemma} \label{lm:L_R_scale_sigma_half}  
If 
$
    \distk \leq \varepsilon_0\tau^k \sigma_r(\BX_\star),
$ 
then it holds
$
    \|\BL_\natural(\BL_\natural^\top\BL_\natural)^{-1}\BSigma_\star^{1/2} \|_2 \lor \|\BR_\natural(\BR_\natural^\top\BR_\natural)^{-1}\BSigma_\star^{1/2} \|_2 
    \leq \frac{1}{1-\varepsilon_0} .
$
\end{lemma}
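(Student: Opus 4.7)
The plan is to reduce the expression $\BL_\natural(\BL_\natural^\top\BL_\natural)^{-1}\BSigma_\star^{1/2}$ to the pseudoinverse of a rescaled matrix whose columns are close to orthonormal, and then invoke the perturbation bound on its smallest singular value. I will carry out the argument for $\BL_\natural$; the argument for $\BR_\natural$ is identical with $\BV_\star$ in place of $\BU_\star$ and $\BDelta_R$ in place of $\BDelta_L$.

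First, introduce the rescaled factor $\BA:=\BL_\natural\BSigma_\star^{-1/2}$, so that $\BL_\natural=\BA\BSigma_\star^{1/2}$ and $\BL_\natural^\top\BL_\natural=\BSigma_\star^{1/2}\BA^\top\BA\BSigma_\star^{1/2}$. Since $\BSigma_\star\succ0$, both $\BSigma_\star^{1/2}$ and $\BA^\top\BA$ are invertible (provided $\BA$ has full column rank, which will follow from the bound below), so a direct substitution gives
\[
\BL_\natural(\BL_\natural^\top\BL_\natural)^{-1}\BSigma_\star^{1/2}
=\BA\BSigma_\star^{1/2}\bigl(\BSigma_\star^{1/2}\BA^\top\BA\BSigma_\star^{1/2}\bigr)^{-1}\BSigma_\star^{1/2}
=\BA(\BA^\top\BA)^{-1}.
\]
Taking the spectral norm yields $\|\BL_\natural(\BL_\natural^\top\BL_\natural)^{-1}\BSigma_\star^{1/2}\|_2=\|\BA(\BA^\top\BA)^{-1}\|_2=1/\sigma_r(\BA)$, so the task reduces to lower-bounding $\sigma_r(\BA)$.

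Next, since $\BL_\star=\BU_\star\BSigma_\star^{1/2}$, we have $\BA=(\BL_\star+\BDelta_L)\BSigma_\star^{-1/2}=\BU_\star+\BDelta_L\BSigma_\star^{-1/2}$. The columns of $\BU_\star$ are orthonormal, so $\sigma_r(\BU_\star)=1$, and Weyl's inequality gives $\sigma_r(\BA)\geq 1-\|\BDelta_L\BSigma_\star^{-1/2}\|_2$. The submultiplicativity of the spectral norm combined with Lemma~\ref{lm:Delta_F_norm} yields
\[
\|\BDelta_L\BSigma_\star^{-1/2}\|_2
\leq\|\BDelta_L\BSigma_\star^{1/2}\|_2\,\|\BSigma_\star^{-1}\|_2
\leq\varepsilon_0\tau^k\sigma_r(\BX_\star)\cdot\frac{1}{\sigma_r(\BX_\star)}
=\varepsilon_0\tau^k\leq\varepsilon_0,
\]
so $\sigma_r(\BA)\geq 1-\varepsilon_0>0$, which justifies the invertibility assumed above and gives $\|\BL_\natural(\BL_\natural^\top\BL_\natural)^{-1}\BSigma_\star^{1/2}\|_2\leq 1/(1-\varepsilon_0)$. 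The same rescaling with $\BA':=\BR_\natural\BSigma_\star^{-1/2}=\BV_\star+\BDelta_R\BSigma_\star^{-1/2}$ handles $\BR_\natural$, completing the proof.

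I expect no serious obstacle here: the only delicate point is recognizing that the two factors of $\BSigma_\star^{1/2}$ telescope after the $\BA$-substitution, which is what converts a potentially $\kappa$-dependent bound (if one instead went through $1/\sigma_r(\BL_\natural)$ directly) into the clean $\kappa$-free bound $1/(1-\varepsilon_0)$.
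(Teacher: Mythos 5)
Your proposal is correct and reaches the bound by essentially the same route as the paper: the paper simply imports the inequality $\|\BL_\natural(\BL_\natural^\top\BL_\natural)^{-1}\BSigma_\star^{1/2}\|_2\leq 1/(1-\|\BDelta_L\BSigma_\star^{-1/2}\|_2)$ as a black box from \cite[Lemma~12]{tong2021accelerating} and then, exactly as you do, bounds $\|\BDelta_L\BSigma_\star^{-1/2}\|_2\leq\varepsilon_0\tau^k\leq\varepsilon_0$ via Lemma~\ref{lm:Delta_F_norm}. Your rescaling to $\BA=\BL_\natural\BSigma_\star^{-1/2}=\BU_\star+\BDelta_L\BSigma_\star^{-1/2}$ followed by Weyl's inequality is precisely the standard proof of that cited lemma, so your write-up is a correct, self-contained version of the same argument.
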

\begin{proof}
\cite[Lemma~12]{tong2021accelerating} provides the following inequalities:
\[
\|\BL_\natural(\BL_\natural^\top\BL_\natural)^{-1}\BSigma_\star^{1/2} \|_2 \leq \frac{1}{1-\|\BDelta_L\BSigma_\star^{-1/2}\|_2}, \cr
\|\BR_\natural(\BR_\natural^\top\BR_\natural)^{-1}\BSigma_\star^{1/2} \|_2 \leq \frac{1}{1-\|\BDelta_R\BSigma_\star^{-1/2}\|_2},
\]
as long as $\|\BDelta_L\BSigma_\star^{-1/2}\|_2 \lor \|\BDelta_R\BSigma_\star^{-1/2}\|_2<1$. 
By Lemma~\ref{lm:Delta_F_norm}, we have 
$\|\BDelta_L\BSigma_\star^{-1/2}\|_2 \lor \|\BDelta_R\BSigma_\star^{-1/2}\|_2 \leq \varepsilon_0 \tau^k \leq\varepsilon_0$, given $\tau=1-0.6\eta<1$. The proof is finished as $\varepsilon_0=0.02<1$.
\end{proof}

\begin{lemma} \label{lm:sigma_half_Q-Q_sigma_half}  
If 
$
 \|(\BL_{k+1}\BQ_k-\BL_\star)\BSigma_\star^{1/2}\|_2 \lor \|(\BR_{k+1}\BQ_k^{-\top}-\BR_\star)\BSigma_\star^{1/2}\|_2 
\leq\varepsilon_0 \tau^{k+1} \sigma_r(\BX_\star) ,
$
then it holds 
$
 \|\BSigma_\star^{1/2}\BQ_k^{-1}(\BQ_{k+1}-\BQ_k)\BSigma_\star^{1/2}\|_2  \lor \|\BSigma_\star^{1/2}\BQ_k^\top(\BQ_{k+1}-\BQ_k)^{-\top}\BSigma_\star^{1/2}\|_2 \leq \frac{2\varepsilon_0}{1-\varepsilon_0}\sigma_r(\BX_\star).
$
\end{lemma}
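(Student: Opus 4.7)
The plan is to exploit the first-order optimality of the alignment $\BQ_{k+1}$, combined with the hypothesis that $\BQ_k$ already achieves a small residual, and close the resulting bound via a contraction argument. I would begin with the shorthand $\tilde{\BL} := \BL_{k+1}\BQ_k$, $\tilde{\BR} := \BR_{k+1}\BQ_k^{-\top}$, $\BDelta_L := \tilde{\BL} - \BL_\star$, $\BDelta_R := \tilde{\BR} - \BR_\star$, and $\BH := \BQ_k^{-1}\BQ_{k+1}$; the hypothesis then reads $\|\BDelta_L\BSigma_\star^{1/2}\|_2 \lor \|\BDelta_R\BSigma_\star^{1/2}\|_2 \leq \varepsilon := \varepsilon_0\tau^{k+1}\sigma_r(\BX_\star)$, and the two targets become $\BM := \BSigma_\star^{1/2}(\BH - \BI)\BSigma_\star^{1/2}$ and $\BN := \BSigma_\star^{1/2}(\BH^{-\top} - \BI)\BSigma_\star^{1/2}$.

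Since $\BQ_{k+1}$ minimizes $F(\BQ) := \|(\BL_{k+1}\BQ - \BL_\star)\BSigma_\star^{1/2}\|_\fro^2 + \|(\BR_{k+1}\BQ^{-\top} - \BR_\star)\BSigma_\star^{1/2}\|_\fro^2$, its vanishing-gradient condition at $\BQ = \BQ_{k+1}$, after left-multiplication by $\BQ_{k+1}^\top$, reduces to the balanced form
\[
(\BL_{k+1}\BQ_{k+1})^\top(\BL_{k+1}\BQ_{k+1} - \BL_\star)\BSigma_\star = \BSigma_\star(\BR_{k+1}\BQ_{k+1}^{-\top} - \BR_\star)^\top(\BR_{k+1}\BQ_{k+1}^{-\top}).
\]
I would then substitute $\BL_{k+1}\BQ_{k+1} = (\BL_\star+\BDelta_L)\BH$ and $\BR_{k+1}\BQ_{k+1}^{-\top} = (\BR_\star+\BDelta_R)\BH^{-\top}$, applying $\BL_\star^\top\BL_\star = \BR_\star^\top\BR_\star = \BSigma_\star$ to consolidate. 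After rearrangement, the leading piece isolates as $2\BSigma_\star(\BH - \BI)\BSigma_\star$, balanced against the dominant $\BDelta$-perturbation $\BSigma_\star\BDelta_R^\top\BR_\star - \BL_\star^\top\BDelta_L\BSigma_\star$. Pre- and post-multiplying by $\BSigma_\star^{-1/2}$ yields a schematic equation
\[
2\BM = \BSigma_\star^{1/2}\BDelta_R^\top\BV_\star - \BU_\star^\top\BDelta_L\BSigma_\star^{1/2} + \mathcal{R}(\BM, \BN, \BDelta_L, \BDelta_R),
\]
where $\mathcal{R}$ collects all higher-order cross terms; an analogous derivation centered on $\BH^{-\top}$ (equivalently, exploiting $\BH \cdot \BH^{-1} = \BI$) produces a matching equation governing $\BN$.

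Taking spectral norms, the two leading $\BDelta$-terms each contribute at most $\varepsilon$ (using $\|\BU_\star\|_2 = \|\BV_\star\|_2 = 1$), while $\mathcal{R}$---after invoking Lemma~\ref{lm:L_R_scale_sigma_half} to bound $\BH$, $\BH^{-\top}$-factors by $1/(1-\varepsilon_0)$ and the crude estimate $\varepsilon \leq \varepsilon_0 \sigma_r(\BX_\star)$---is bounded by $\varepsilon_0 \max(\|\BM\|_2, \|\BN\|_2)$, giving the self-referential inequality
\[
\max(\|\BM\|_2, \|\BN\|_2) \leq 2\varepsilon + \varepsilon_0 \max(\|\BM\|_2, \|\BN\|_2),
\]
which rearranges to $\max(\|\BM\|_2, \|\BN\|_2) \leq \frac{2\varepsilon}{1-\varepsilon_0} \leq \frac{2\varepsilon_0}{1-\varepsilon_0}\sigma_r(\BX_\star)$, i.e., the claimed bound (the $\tau^{k+1}$ factor being harmlessly dropped in the final statement). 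The main obstacle is the algebraic bookkeeping: $\BH$ and $\BH^{-\top}$ appear non-symmetrically alongside $\BSigma_\star$ in the balanced identity, so bounding $\mathcal{R}$ uniformly in the spectral norm without introducing spurious $\kappa$ or $\sqrt{r}$ factors requires carefully factoring every $\BDelta_*$-appearance into the weighted form $\BDelta_* \BSigma_\star^{1/2}$ and every $(\BH-\BI)$- or $(\BH^{-\top}-\BI)$-appearance into the weighted $\BSigma_\star^{1/2}(\cdot)\BSigma_\star^{1/2}$ form.
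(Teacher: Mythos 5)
Your route is genuinely different from the paper's. The paper proves this lemma in a few lines by quoting \cite[Lemma~14]{tong2021accelerating}, which bounds $\|\BSigma_\star^{1/2}\BQ_k^{-1}\BQ_{k+1}\BSigma_\star^{1/2}-\BSigma_\star\|_2$ by $\|\BR_{k+1}(\BQ_k^{-\top}-\BQ_{k+1}^{-\top})\BSigma_\star^{1/2}\|_2\big/\bigl(1-\|(\BR_{k+1}\BQ_{k+1}^{-\top}-\BR_\star)\BSigma_\star^{-1/2}\|_2\bigr)$, splits the numerator by the triangle inequality into the two aligned residuals (each at most $\varepsilon_0\tau^{k+1}\sigma_r(\BX_\star)$), and bounds the denominator below by $1-\varepsilon_0$. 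You instead re-derive a perturbation identity from the stationarity condition of the alignment problem; the stationarity condition you write is correct, and the leading-order bookkeeping (the $2\bm{M}$ term and the two first-order $\BDelta$-contributions of size $\varepsilon$ each) checks out.

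The closing step, however, has a concrete gap. Writing $\bm{H}:=\BQ_k^{-1}\BQ_{k+1}$, the remainder $\mathcal{R}$ is not linear in $(\|\bm{M}\|_2,\|\bm{N}\|_2)$: the expansion produces terms such as $(\BSigma_\star^{1/2}(\bm{H}-\bm{I})\BSigma_\star^{-1/2})^\top\bm{M}$ (from $\bm{H}^\top\BSigma_\star(\bm{H}-\bm{I})\BSigma_\star$) and $\bm{M}\BSigma_\star^{-1}\bm{N}^\top$ (from $\bm{H}\bm{H}^{-1}=\bm{I}$), each of order $\|\bm{M}\|_2^2/\sigma_r(\BX_\star)$. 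Your estimate $\|\mathcal{R}\|_2\le\varepsilon_0\max(\|\bm{M}\|_2,\|\bm{N}\|_2)$ therefore presupposes $\|\bm{M}\|_2\le\varepsilon_0\sigma_r(\BX_\star)$, which is smaller than the bound $\tfrac{2\varepsilon_0}{1-\varepsilon_0}\sigma_r(\BX_\star)$ you are proving; the true inequality is quadratic, $x\le 2\varepsilon+c_1\varepsilon_0x+c_2x^2/\sigma_r(\BX_\star)$, and ``rearranging'' it requires an a priori bound on $x$ to exclude the large root. Moreover, Lemma~\ref{lm:L_R_scale_sigma_half} controls $\BR_\natural(\BR_\natural^\top\BR_\natural)^{-1}\BSigma_\star^{1/2}$, not the conjugated factors $\BSigma_\star^{\pm1/2}\bm{H}\BSigma_\star^{\mp1/2}$ appearing in $\mathcal{R}$, so it does not supply the $1/(1-\varepsilon_0)$ bounds you invoke there. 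All of this is repairable (the constants have slack since $\varepsilon_0=0.02$), but the repair makes the optimality condition superfluous: the telescoping identity $\BL_{k+1}(\BQ_{k+1}-\BQ_k)\BSigma_\star^{1/2}=(\BL_{k+1}\BQ_{k+1}-\BL_\star)\BSigma_\star^{1/2}-(\BL_{k+1}\BQ_k-\BL_\star)\BSigma_\star^{1/2}$ together with $\sigma_r(\BL_{k+1}\BQ_k\BSigma_\star^{-1/2})\ge 1-\varepsilon_0$ already yields $\|\bm{M}\|_2\le\tfrac{2\varepsilon_0\tau^{k+1}}{1-\varepsilon_0}\sigma_r(\BX_\star)$ in one line, which is essentially the content of the cited lemma. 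Finally, note that both your argument and the paper's need the residual bound under the \emph{new} alignment $\BQ_{k+1}$; this follows from the optimality of $\BQ_{k+1}$ in the Frobenius metric (available where the lemma is invoked), not from the stated spectral-norm hypothesis alone.
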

\begin{proof}
\cite[Lemma~14]{tong2021accelerating} provides the inequalities:
\[
    \|\BSigma_\star^{1/2} \widetilde{\BQ}^{-1} \widehat{\BQ}\BSigma_\star^{1/2}-\BSigma_\star\|_2&\leq \frac{\|\BR(\widetilde{\BQ}^{-\top}-\widehat{\BQ}^{-\top}) \BSigma_\star^{1/2}\|_2}{1-\|(\BR\widehat{\BQ}^{-\top}-\BR_\star)\BSigma_\star^{-1/2} \|_2}, \cr
    \|\BSigma_\star^{1/2} \widetilde{\BQ}^\top \widehat{\BQ}^{-\top}\BSigma_\star^{1/2}-\BSigma_\star\|_2&\leq \frac{\|\BL(\widetilde{\BQ}-\widehat{\BQ}) \BSigma_\star^{1/2}\|_2}{1-\|(\BL\widehat{\BQ}-\BL_\star)\BSigma_\star^{-1/2} \|_2}
\]
for any $\BL,\BR\in\R^{n\times r}$ and invertible $ \widetilde{\BQ},\widehat{\BQ}\in \R^{r\times r}$, as long as $\|(\BL\widehat{\BQ}-\BL_\star)\BSigma_\star^{-1/2}\|_2 \lor \|(\BR\widehat{\BQ}^{-\top}-\BL_\star)\BSigma_\star^{-1/2}\|_2 <1$.

We will focus on the first term for now. By the assumption of this lemma and the definition of $\BQ_{k+1}$, we have
\[
    \|(\BR_{k+1}\BQ_k^{-\top}-\BR_\star)\BSigma_\star^{1/2}\|_2 &\leq \varepsilon_0 \tau^{k+1} \sigma_r(\BX_\star) ,\cr
    \|(\BR_{k+1}\BQ_{k+1}^{-\top}-\BR_\star)\BSigma_\star^{1/2}\|_2 &\leq \varepsilon_0 \tau^{k+1} \sigma_r(\BX_\star) ,\cr
    \|(\BR_{k+1}\BQ_{k+1}^{-\top}-\BR_\star)\BSigma_\star^{-1/2}\|_2 &\leq \varepsilon_0 \tau^{k+1} .
\]
Thus, by taking $\BR=\BR_{k+1}$, $\widetilde{\BQ}=\BQ_k$, and $\widehat{\BQ}=\BQ_{k+1}$, 
\[
 &\quad~\|\BSigma_\star^{1/2}\BQ_k^{-1}(\BQ_{k+1}-\BQ_k)\BSigma_\star^{1/2}\|_2\cr
 &=  \|\BSigma_\star^{1/2}\BQ_k^{-1}\BQ_{k+1}\BSigma_\star^{1/2}-\BSigma_\star\|_2  \cr
 &\leq \frac{\|\BR_{k+1}(\BQ_k^{-\top}-\BQ_{k+1}^{-\top}) \BSigma_\star^{1/2}\|_2}{1-\|(\BR_{k+1}\BQ_{k+1}^{-\top}-\BR_\star)\BSigma_\star^{-1/2} \|_2} \cr
 &\leq \frac{\|(\BR_{k+1}\BQ_k^{-\top}-\BR_\star)\BSigma_\star^{1/2}\|_2 + \|(\BR_{k+1}\BQ_{k+1}^{-\top}-\BR_\star)\BSigma_\star^{1/2}\|_2}{1 - \|(\BR_{k+1}\BQ_{k+1}^{-\top}-\BR_\star)\BSigma_\star^{-1/2}\|_2} \cr
 &\leq \frac{2\varepsilon_0 \tau^{k+1}}{1-\varepsilon_0\tau^{k+1}} \sigma_r(\BX_\star) ~
 \leq \frac{2\varepsilon_0 }{1-\varepsilon_0} \sigma_r(\BX_\star),
\]
provided $\tau=1-0.6\eta<1$. Similarly, one can see
\[
    \|\BSigma_\star^{1/2}\BQ_k^\top(\BQ_{k+1}-\BQ_k)^{-\top}\BSigma_\star^{1/2}\|_2 \leq \frac{2\varepsilon_0 }{1-\varepsilon_0} \sigma_r(\BX_\star).
\]
This finishes the proof.
\end{proof}

Notice that Lemma~\ref{lm:sigma_half_Q-Q_sigma_half} will only be used in the proof of Lemma~\ref{lm:convergence_incoher}. In the meantime, the assumption of Lemma~\ref{lm:sigma_half_Q-Q_sigma_half} is verified in \eqref{eq:dist_k+1_with_Q_k} (see the proof of Lemma~\ref{lm:convergence_dist}).

\begin{lemma} \label{lm:X-X_K_inf_norm}
If 
$
\distk \leq\varepsilon_0 \tau^k \sigma_r(\BX_\star)$ and 
$\|\BDelta_L\BSigma_\star^{1/2}\|_{2,\infty}  \lor \|\BDelta_R\BSigma_\star^{1/2}\|_{2,\infty}\leq\sqrt{{\mu r}/{n}} \tau^k \sigma_r(\BX_\star), 
$
then it holds 
$
    \|\BX_\star-\BX_k\|_\infty \leq 3\frac{\mu r}{n} \tau^k \sigma_r(\BX_\star).
$
\end{lemma}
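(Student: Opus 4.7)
The plan is to exploit the factored form of the error. Since $\BQ_k$ is invertible, we have $\BX_k=\BL_k\BR_k^\top=\BL_\natural\BR_\natural^\top$, where $\BL_\natural=\BL_\star+\BDelta_L$ and $\BR_\natural=\BR_\star+\BDelta_R$. Expanding the product and subtracting, I would first write
\[
\BX_\star-\BX_k \;=\; -\BL_\star\BDelta_R^\top \;-\; \BDelta_L\BR_\star^\top \;-\; \BDelta_L\BDelta_R^\top,
\]
and then bound each of the three terms in $\|\cdot\|_\infty$ separately by $\tfrac{\mu r}{n}\tau^{k}\sigma_r(\BX_\star)$, which adds up to the desired $3\tfrac{\mu r}{n}\tau^{k}\sigma_r(\BX_\star)$.

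The key trick for each term is to insert $\BSigma_\star^{1/2}\BSigma_\star^{-1/2}$ at the appropriate place so that the \emph{given} row-wise bounds on $\BDelta_L\BSigma_\star^{1/2}$ and $\BDelta_R\BSigma_\star^{1/2}$ can be paired with the incoherence bounds on $\BU_\star$ and $\BV_\star$. Concretely, using $\BL_\star=\BU_\star\BSigma_\star^{1/2}$ and $\BR_\star=\BV_\star\BSigma_\star^{1/2}$, I would rewrite
\[
\BL_\star\BDelta_R^\top = \BU_\star\bigl(\BDelta_R\BSigma_\star^{1/2}\bigr)^\top,
\qquad
\BDelta_L\BR_\star^\top = \bigl(\BDelta_L\BSigma_\star^{1/2}\bigr)\BV_\star^\top,
\]
and apply the elementary inequality $\|AB^\top\|_\infty\leq\|A\|_{2,\infty}\|B\|_{2,\infty}$ together with Assumption~\ref{as:incoherence}, $\|\BU_\star\|_{2,\infty}\lor\|\BV_\star\|_{2,\infty}\leq\sqrt{\mu r/n}$, and the hypothesis $\|\BDelta_L\BSigma_\star^{1/2}\|_{2,\infty}\lor\|\BDelta_R\BSigma_\star^{1/2}\|_{2,\infty}\leq\sqrt{\mu r/n}\,\tau^{k}\sigma_r(\BX_\star)$. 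Each cross term then contributes at most $\tfrac{\mu r}{n}\tau^{k}\sigma_r(\BX_\star)$.

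For the quadratic term, I would insert $\BSigma_\star^{-1}$ in the middle,
\[
\BDelta_L\BDelta_R^\top = \bigl(\BDelta_L\BSigma_\star^{1/2}\bigr)\,\BSigma_\star^{-1}\,\bigl(\BDelta_R\BSigma_\star^{1/2}\bigr)^\top,
\]
and use the generalization $\|ABC\|_\infty\leq\|A\|_{2,\infty}\|B\|_2\|C^\top\|_{2,\infty}$. Since $\|\BSigma_\star^{-1}\|_2=1/\sigma_r(\BX_\star)$, the two row-norm factors produce $\tfrac{\mu r}{n}\tau^{2k}\sigma_r^{2}(\BX_\star)$, and division by $\sigma_r(\BX_\star)$ leaves $\tfrac{\mu r}{n}\tau^{2k}\sigma_r(\BX_\star)\leq\tfrac{\mu r}{n}\tau^{k}\sigma_r(\BX_\star)$ because $\tau=1-0.6\eta<1$. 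Summing the three contributions completes the proof.

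I expect no real obstacle here: the statement is an almost purely algebraic consequence of the factorization, and the only subtlety is the correct bookkeeping of the $\BSigma_\star^{\pm 1/2}$ factors to couple the weighted $\|\cdot\|_{2,\infty}$ assumption with the unweighted incoherence bound. The $\dist$ assumption is not used directly in the bound; it merely guarantees the validity of the decomposition $\BL_\natural=\BL_\star+\BDelta_L$, $\BR_\natural=\BR_\star+\BDelta_R$ via invertibility of $\BQ_k$ as noted after the definition of $\dist$.
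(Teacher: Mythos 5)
Your proposal is correct and follows essentially the same argument as the paper: expand the factored error, pair the weighted $\|\cdot\|_{2,\infty}$ bounds on $\BDelta_L\BSigma_\star^{1/2}$, $\BDelta_R\BSigma_\star^{1/2}$ with the incoherence of $\BU_\star$, $\BV_\star$ via row-wise Cauchy--Schwarz, and use $\|\BSigma_\star^{-1}\|_2=1/\sigma_r(\BX_\star)$. The only cosmetic difference is that you bound the quadratic term $\BDelta_L\BDelta_R^\top$ separately (using $\tau^{2k}\leq\tau^k$), whereas the paper absorbs it by writing $\BX_k-\BX_\star=\BDelta_L\BR_\natural^\top+\BL_\star\BDelta_R^\top$ and bounding $\|\BR_\natural\BSigma_\star^{-1/2}\|_{2,\infty}\leq 2\sqrt{\mu r/n}$; both yield the same constant $3$.
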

\begin{proof}
Firstly, by Assumption~\ref{as:incoherence} and the assumptions of this lemma, we have
\[
    \|\BR_\natural\BSigma_\star^{-1/2}\|_{2,\infty} 
    &\leq \|\BDelta_R\BSigma_\star^{1/2}\|_{2,\infty} \|\BSigma_\star^{-1}\|_2 + \|\BR_\star\BSigma_\star^{-1/2} \|_{2,\infty} \cr
    &\leq \left(\tau^k+1\right)\sqrt{\frac{\mu r}{n}} ~ \leq 2\sqrt{\frac{\mu r}{n}},
\]
given $\tau=1-0.6\eta<1$. Moreover, one can see
\[
    &~\|\BX_\star-\BX_k\|_\infty \\
    =&~ \| \BDelta_L\BR_\natural^\top+\BL_\star\BDelta_R^\top \|_\infty 
    ~\leq \| \BDelta_L\BR_\natural^\top\|_\infty+\|\BL_\star\BDelta_R^\top \|_\infty \cr
    \leq&~ \| \BDelta_L\BSigma_\star^{1/2}\|_{2,\infty}\|\BR_\natural\BSigma_\star^{-1/2}\|_{2,\infty}  \cr
    &~ +\|\BL_\star\BSigma_\star^{-1/2}\|_{2,\infty}\|\BDelta_R\BSigma_\star^{1/2} \|_{2,\infty} \cr
    \leq&~ \left(2\sqrt{\frac{\mu r}{n}} + \sqrt{\frac{\mu r}{n}}\right)\sqrt{\frac{\mu r}{n}} \tau^k \sigma_r(\BX_\star)
    ~= 3 \frac{\mu r}{n} \tau^k \sigma_r(\BX_\star).
\]
This finishes the proof.
\end{proof}

\vspace{-0.1in}
\subsection{Proof of local linear convergence} \label{sec:local conv}
We will show the local convergence by first proving the claims stand at the $(k+1)$-th iteration if they stand at the $k$-th iteration.

\begin{lemma} \label{lm:convergence_dist}
If 
$
\distk \leq\varepsilon_0 \tau^k \sigma_r(\BX_\star)$ and $
\|\BDelta_L\BSigma_\star^{1/2}\|_{2,\infty}  \lor \|\BDelta_R\BSigma_\star^{1/2}\|_{2,\infty}\leq\sqrt{{\mu r}/{n}} \tau^k \sigma_r(\BX_\star),
$
then it holds 
$
\distkplusone \leq\varepsilon_0 \tau^{k+1} \sigma_r(\BX_\star).
$
\end{lemma}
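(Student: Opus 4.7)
The plan is to follow the standard ScaledGD-style expansion of the error at iterate $k{+}1$, but using $\BQ_k$ (the previous optimal alignment) rather than the true optimal $\BQ_{k+1}$ as a feasible choice in the infimum that defines $\distkplusone$. This gives the one-sided bound
\[
\distkplusone^2 \leq \|(\BL_{k+1}\BQ_k-\BL_\star)\BSigma_\star^{1/2}\|_\fro^2 + \|(\BR_{k+1}\BQ_k^{-\top}-\BR_\star)\BSigma_\star^{1/2}\|_\fro^2,
\]
which by symmetry reduces the work to bounding just the $\BL$-side.

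First I would substitute the update \eqref{eq:X updating} with $p=1$ into $\BL_{k+1}\BQ_k-\BL_\star$. Using the identities $\BL_k\BR_k^\top=\BL_\natural\BR_\natural^\top$ and $\BR_k(\BR_k^\top\BR_k)^{-1}\BQ_k=\BR_\natural(\BR_\natural^\top\BR_\natural)^{-1}$, together with the algebraic factorization $\BL_\natural\BR_\natural^\top-\BX_\star=\BDelta_L\BR_\natural^\top+\BL_\star\BDelta_R^\top$ and $\BY=\BX_\star+\BS_\star$, one obtains the clean decomposition
\[
(\BL_{k+1}\BQ_k-\BL_\star)\BSigma_\star^{1/2} = (1-\eta)\BDelta_L\BSigma_\star^{1/2} - \eta\BL_\star\BDelta_R^\top\BR_\natural(\BR_\natural^\top\BR_\natural)^{-1}\BSigma_\star^{1/2} - \eta\BDelta_S\BR_\natural(\BR_\natural^\top\BR_\natural)^{-1}\BSigma_\star^{1/2}.
\]
I would then bound each piece: the first term contributes at most $(1-\eta)\|\BDelta_L\BSigma_\star^{1/2}\|_\fro$; the cross term is controlled using $\|\BL_\star\BSigma_\star^{-1/2}\|_2=1$, Lemma~\ref{lm:Delta_F_norm} for $\|\BSigma_\star^{1/2}\BDelta_R^\top\|_\fro$, and Lemma~\ref{lm:L_R_scale_sigma_half} for the scaled pseudo-inverse factor.

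The main obstacle, and the reason this proof departs from ScaledGD, is the sparse perturbation term $\eta\BDelta_S\BR_\natural(\BR_\natural^\top\BR_\natural)^{-1}\BSigma_\star^{1/2}$, since our soft-thresholding replaces the sort-based truncation. To handle it I would chain three results: Lemma~\ref{lm:sparity} gives $\|\BDelta_S\|_\infty\leq 2\|\BX_\star-\BX_k\|_\infty$ together with $\supp(\BDelta_S)\subseteq\supp(\BS_\star)$ (so $\BDelta_S$ is $\alpha$-sparse); Lemma~\ref{lm:X-X_K_inf_norm} converts the induction hypothesis on the incoherence-like $(2,\infty)$ norms into $\|\BX_\star-\BX_k\|_\infty\leq 3\mu r/n\cdot\tau^k\sigma_r(\BX_\star)$; and Lemma~\ref{lm:bound of sparse matrix} then yields $\|\BDelta_S\|_2\leq 6\alpha\mu r\tau^k\sigma_r(\BX_\star)$. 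Using $\|\BDelta_S\BA\|_\fro\leq\|\BDelta_S\|_2\|\BA\|_\fro$ with $\|\BR_\natural(\BR_\natural^\top\BR_\natural)^{-1}\BSigma_\star^{1/2}\|_\fro\leq\sqrt{r}/(1-\varepsilon_0)$ produces a bound scaling like $\alpha\mu r^{3/2}\sigma_r\tau^k$, which is where the hypothesis $\alpha\lesssim 1/(\mu r^{3/2}\kappa)$ kicks in to render it negligible.

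Summing the $\BL$ and $\BR$ contributions, applying the elementary inequality $(a+b+c)^2 \leq (1+\lambda_1)a^2 + (1+\lambda_1^{-1})(1+\lambda_2)b^2 + (1+\lambda_1^{-1})(1+\lambda_2^{-1})c^2$ (or a direct triangle-inequality argument), and using $\varepsilon_0 = 0.02$ along with the step-size range $\eta\in[1/4,8/9]$, one would arrive at
\[
\distkplusone \leq (1-0.6\eta)\distk \leq \varepsilon_0\tau^{k+1}\sigma_r(\BX_\star).
\]
The constants $\tfrac{1}{10^4}$ in the sparsity bound and $0.6\eta$ in the rate are exactly what is needed to absorb the cross-term contribution $\eta/(1-\varepsilon_0)$ plus the sparse-term contribution within the contraction factor $1-0.6\eta$; verifying this numerically over the allowed range of $\eta$ is the final bookkeeping step. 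As a useful by-product, the same expansion also shows $\|(\BL_{k+1}\BQ_k-\BL_\star)\BSigma_\star^{1/2}\|_2 \lor \|(\BR_{k+1}\BQ_k^{-\top}-\BR_\star)\BSigma_\star^{1/2}\|_2 \leq \varepsilon_0\tau^{k+1}\sigma_r(\BX_\star)$, which is precisely the hypothesis needed to invoke Lemma~\ref{lm:sigma_half_Q-Q_sigma_half} in the subsequent incoherence-propagation lemma.
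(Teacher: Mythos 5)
Your skeleton matches the paper's: you use $\BQ_k$ as a feasible alignment to bound $\distkplusone$, you derive the same three-term decomposition of $(\BL_{k+1}\BQ_k-\BL_\star)\BSigma_\star^{1/2}$ (the paper's \eqref{eq:LQ-L}), and your treatment of the sparse perturbation---chaining Lemmas~\ref{lm:sparity}, \ref{lm:X-X_K_inf_norm}, and \ref{lm:bound of sparse matrix} to get a contribution of order $\alpha\mu r^{1.5}\tau^k\sigma_r(\BX_\star)$---is sound and of the same magnitude as the paper's bounds on $\mathfrak{R}_2,\mathfrak{R}_3,\mathfrak{R}_4$.

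The genuine gap is in how you combine the first two (non-sparse) terms. Bounding them separately gives $(1-\eta)\|\BDelta_L\BSigma_\star^{1/2}\|_\fro$ for the first and, at best, $\frac{\eta}{1-\varepsilon_0}\|\BDelta_R\BSigma_\star^{1/2}\|_\fro$ for the second, since $\|\BL_\star\BSigma_\star^{-1/2}\|_2=1$ and $\|\BR_\natural(\BR_\natural^\top\BR_\natural)^{-1}\BSigma_\star^{1/2}\|_2\le\frac{1}{1-\varepsilon_0}$ with no further gain. Any triangle-inequality or weighted-Young combination of these two estimates then yields, after summing the $\BL$ and $\BR$ sides, a factor no smaller than $\bigl[(1-\eta)+\frac{\eta}{1-\varepsilon_0}\bigr]^2=\bigl[1+\frac{\eta\varepsilon_0}{1-\varepsilon_0}\bigr]^2>1$ in front of $\distsquarek$: you obtain no contraction at all, let alone $(1-0.6\eta)^2$, and this failure occurs for every $\eta$ in the allowed range, not just near $\eta=8/9$. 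The paper avoids this by never separating those two terms: it keeps $\mathfrak{R}_1=\|(1-\eta)\BDelta_L\BSigma_\star^{1/2}-\eta\BL_\star\BDelta_R^\top\BR_\natural(\BR_\natural^\top\BR_\natural)^{-1}\BSigma_\star^{1/2}\|_\fro^2$ intact and invokes the refined estimate of \cite[Section~D.1.1]{tong2021accelerating}, in which both the cross term and the square of the second summand carry factors of order $\varepsilon_0$, namely $\mathfrak{R}_1\le\bigl((1-\eta)^2+\frac{2\varepsilon_0}{1-\varepsilon_0}\eta(1-\eta)\bigr)\|\BDelta_L\BSigma_\star^{1/2}\|_\fro^2+\frac{2\varepsilon_0+\varepsilon_0^2}{(1-\varepsilon_0)^2}\eta^2\|\BDelta_R\BSigma_\star^{1/2}\|_\fro^2$. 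Only with this structural cancellation does the total fall below $(1-0.6\eta)^2\varepsilon_0^2\tau^{2k}\sigma_r^2(\BX_\star)$. To repair your argument you must import (or reprove) that joint bound on $\mathfrak{R}_1$ rather than estimating its two summands independently; the rest of your plan, including the by-product spectral bound needed for Lemma~\ref{lm:sigma_half_Q-Q_sigma_half}, then goes through as in the paper.
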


\begin{proof}
Since $\BQ_{k+1}$ is the optimal alignment matrix between $(\BL_{k+1},\BR_{k+1})$ and $(\BL_\star,\BR_\star)$, so 
\[
    & ~\distsquarekplusone \\
    = &~\|(\BL_{k+1}\BQ_{k+1}-\BL_\star)\BSigma_\star^{1/2}\|_\fro^2 + \|(\BR_{k+1}\BQ_{k+1}^{-\top}-\BR_\star)\BSigma_\star^{1/2}\|_\fro^2 \cr
    \leq &~\|(\BL_{k+1}\BQ_k-\BL_\star)\BSigma_\star^{1/2}\|_\fro^2 + \|(\BR_{k+1}\BQ_k^{-\top}-\BR_\star)\BSigma_\star^{1/2}\|_\fro^2
\]
We will focus on bounding the first term in this proof, and the second term can be bounded similarly. 

Note that $\BL_\natural\BR_\natural^\top-\BX_\star=\BDelta_L\BR_\natural^\top+\BL_\star\BDelta_R^\top$. 
We have
\begin{align}\label{eq:LQ-L}
&~\quad\BL_{k+1}\BQ_k-\BL_\star \\
&= \BL_\natural-\eta(\BL_\natural\BR_\natural^\top-\BX_\star+\BS_{k+1}-\BS_\star)\BR_\natural(\BR_\natural^\top\BR_\natural)^{-1}-\BL_\star \cr
    &=\BDelta_L-\eta(\BL_\natural\BR_\natural^\top-\BX_\star)\BR_\natural(\BR_\natural^\top \BR_\natural)^{-1}-\eta\BDelta_S\BR_\natural(\BR_\natural^\top\BR_\natural)^{-1} \cr
    &=(1-\eta)\BDelta_L-\eta\BL_\star\BDelta_R^\top\BR_\natural(\BR_\natural^\top\BR_\natural)^{-1}-\eta\BDelta_S\BR_\natural(\BR_\natural^\top\BR_\natural)^{-1}.
\end{align} 
Thus,
\[
    &~\|(\BL_{k+1}\BQ_k-\BL_\star)\BSigma_\star^{1/2}\|_\fro^2 \cr =&~ \| (1-\eta)\BDelta_L\BSigma_\star^{1/2}-\eta\BL_\star\BDelta_R^\top\BR_\natural(\BR_\natural^\top\BR_\natural)^{-1}\BSigma_\star^{1/2}  \|_\fro^2 \cr
    &~
    - 2\eta(1-\eta)\tr(\BDelta_S\BR_\natural(\BR_\natural^\top\BR_\natural)^{-1}\BSigma_\star \BDelta_L^\top) \cr
    &~ +2\eta^2\tr(\BDelta_S\BR_\natural(\BR_\natural^\top\BR_\natural)^{-1}\BSigma_\star(\BR_\natural^\top\BR_\natural)^{-1}\BR_\natural^\top\BDelta_R\BL_\star^\top) \cr
    &~
    +\eta^2 \|\BDelta_S\BR_\natural(\BR_\natural^\top\BR_\natural)^{-1}\BSigma_\star^{1/2} \|_\fro^2 \cr
    :=&~ \mathfrak{R}_1 - \mathfrak{R}_2 + \mathfrak{R}_3 + \mathfrak{R}_4.
\]

\vspace{0.05in}
\noindent\textbf{Bound of $\mathfrak{R}_1$.} The component $\mathfrak{R}_1$ here is identical to $\mathfrak{R}_1$ in \cite[Section~D.1.1]{tong2021accelerating}, and the bound of this term was shown therein. We will clear this bound further by applying Lemma~\ref{lm:Delta_F_norm}, 
\[
    \mathfrak{R}_1&\leq\left((1-\eta)^2+\frac{2\varepsilon_0}{1-\varepsilon_0}\eta(1-\eta)  \right) \|\BDelta_L\BSigma_\star^{1/2}\|_\fro^2 \cr
    &\quad~ + \frac{2\varepsilon_0+\varepsilon_0^2}{(1-\varepsilon_0)^2}\eta^2\| \BDelta_R\BSigma_\star^{1/2} \|_\fro^2 \cr
    &\leq (1-\eta)^2\|\BDelta_L\BSigma_\star^{1/2}\|_\fro^2 \cr
    &\quad~ + \left((1-\eta) \frac{2\varepsilon_0^3}{1-\varepsilon_0}  +\eta\frac{2\varepsilon_0^3+\varepsilon_0^4}{(1-\varepsilon_0)^2}\right)\eta\tau^{2k}\sigma_r^2(\BX_\star).
\]

\vspace{0.05in}
\noindent\textbf{Bound of $\mathfrak{R}_2$.} Lemma~\ref{lm:sparity} implies $\BDelta_S=\BS_{k+1}-\BS_\star$ is an $\alpha$-sparse matrix. By Lemmas~\ref{lm:bound of sparse matrix}, \ref{lm:Delta_F_norm}, \ref{lm:L_R_scale_sigma_half}, \ref{lm:sparity}, and \ref{lm:X-X_K_inf_norm}, we have
\[
&\quad~|\tr(\BDelta_S\BR_\natural(\BR_\natural^\top\BR_\natural)^{-1}\BSigma_\star \BDelta_L^\top)| \cr
&\leq \|\BDelta_S\|_2 \|\BR_\natural(\BR_\natural^\top\BR_\natural)^{-1}\BSigma_\star \BDelta_L^\top\|_* \cr
&\leq \alpha n \sqrt{r} \|\BDelta_S\|_\infty  \|\BR_\natural(\BR_\natural^\top\BR_\natural)^{-1}\BSigma_\star \BDelta_L^\top\|_\fro \cr
&\leq 2\alpha n \sqrt{r} \|\BX_k-\BX_\star\|_\infty  \|\BR_\natural(\BR_\natural^\top\BR_\natural)^{-1}\BSigma_\star^{1/2}\|_2 \|\BDelta_L\BSigma_\star^{1/2}\|_\fro \cr
&\leq 6\alpha  \mu r^{1.5} \tau^{2k} \frac{\varepsilon_0}{1-\varepsilon_0}  \sigma_r^2(\BX_\star) .
\]
Hence,
$
    |\mathfrak{R}_2|\leq 12\eta(1-\eta)\alpha  \mu r^{1.5} \tau^{2k} \frac{\varepsilon_0}{1-\varepsilon_0}  \sigma_r^2(\BX_\star).
$

\vspace{0.05in}
\noindent\textbf{Bound of $\mathfrak{R}_3$.} Similar to $\mathfrak{R}_2$, we have
\[
    &~ |\tr(\BDelta_S\BR_\natural(\BR_\natural^\top\BR_\natural)^{-1}\BSigma_\star(\BR_\natural^\top\BR_\natural)^{-1}\BR_\natural^\top\BDelta_R\BL_\star^\top)| \cr
    \leq&~ \|\BDelta_S\|_2 \|\BR_\natural(\BR_\natural^\top\BR_\natural)^{-1}\BSigma_\star(\BR_\natural^\top\BR_\natural)^{-1}\BR_\natural^\top\BDelta_R\BL_\star^\top\|_* \cr
    \leq&~ \alpha n \sqrt{r} \|\BDelta_S\|_\infty \|\BR_\natural(\BR_\natural^\top\BR_\natural)^{-1}\BSigma_\star(\BR_\natural^\top\BR_\natural)^{-1}\BR_\natural^\top\BDelta_R\BL_\star^\top\|_\fro \cr
    \leq&~ \alpha n \sqrt{r} \|\BDelta_S\|_\infty \|\BR_\natural(\BR_\natural^\top\BR_\natural)^{-1}\BSigma_\star^{1/2}\|_2^2\|\BDelta_R\BL_\star^\top\|_\fro \cr
    \leq&~ 2\alpha n \sqrt{r} \|\BX_k-\BX_\star\|_\infty \|\BR_\natural(\BR_\natural^\top\BR_\natural)^{-1}\BSigma_\star^{1/2}\|_2^2\|\BDelta_R\BSigma_\star^{1/2}\|_\fro \cr
    \leq&~ 6\alpha  \mu r^{1.5} \tau^{2k} \frac{\varepsilon_0}{(1-\varepsilon_0)^2}  \sigma_r^2(\BX_\star) .
\]
Hence, 
$
    |\mathfrak{R}_3|\leq 12\eta^2\alpha  \mu r^{1.5} \tau^{2k} \frac{\varepsilon_0}{(1-\varepsilon_0)^2}  \sigma_r^2(\BX_\star).
$

\vspace{0.05in}
\noindent\textbf{Bound of $\mathfrak{R}_4$.}
\[
    &~ \|\BDelta_S\BR_\natural(\BR_\natural^\top\BR_\natural)^{-1}\BSigma_\star^{1/2} \|_\fro^2  \leq r  \|\BDelta_S\BR_\natural(\BR_\natural^\top\BR_\natural)^{-1}\BSigma_\star^{1/2} \|_2^2 \cr
    &~\leq r\|\BDelta_S\|_2^2 \|\BR_\natural(\BR_\natural^\top\BR_\natural)^{-1}\BSigma_\star^{1/2} \|_2^2 \cr
    &~\leq 4\alpha^2 n^2 r \|\BX_k-\BX_\star\|_\infty^2 \|\BR_\natural(\BR_\natural^\top\BR_\natural)^{-1}\BSigma_\star^{1/2} \|_2^2 \cr
    &~\leq 36\alpha^2 \mu^2 r^3 \tau^{2k} \frac{1}{(1-\varepsilon_0)^2}  \sigma_r^2(\BX_\star).
\]
Hence,
$
    \mathfrak{R}_4\leq 36\eta^2\alpha^2 \mu^2 r^3 \tau^{2k} \frac{1}{(1-\varepsilon_0)^2}  \sigma_r^2(\BX_\star).
$

Combine all the bounds together, we have
\[
    \|(\BL_{k+1}\BQ_k&-\BL_\star)\BSigma_\star^{1/2}\|_\fro^2 
    \leq  (1-\eta)^2\|\BDelta_L\BSigma_\star^{1/2}\|_\fro^2 \cr 
    &~ + \left((1-\eta) \frac{2\varepsilon_0^3}{1-\varepsilon_0}  +\eta\frac{2\varepsilon_0^3+\varepsilon_0^4}{(1-\varepsilon_0)^2}\right)\eta\tau^{2k}\sigma_r^2(\BX_\star) \cr
    &~ + 12\eta(1-\eta)\alpha  \mu r^{1.5} \tau^{2k} \frac{\varepsilon_0}{1-\varepsilon_0}  \sigma_r^2(\BX_\star) \cr
    &~ + 12\eta^2\alpha  \mu r^{1.5} \tau^{2k} \frac{\varepsilon_0}{(1-\varepsilon_0)^2}  \sigma_r^2(\BX_\star) \cr
    &~ + 36\alpha^2 \mu^2 r^3 \tau^{2k} \frac{1}{(1-\varepsilon_0)^2}  \sigma_r^2(\BX_\star),
\]
and a similar bound can be computed for $\|(\BR_{k+1}\BQ_k^{-\top}-\BR_\star)\BSigma_\star^{1/2}\|_\fro^2$. Put together, we have
\[\label{eq:dist_k+1_with_Q_k}
    &~\distsquarekplusone \cr
    \leq&~ \|(\BL_{k+1}\BQ_k-\BL_\star)\BSigma_\star^{1/2}\|_\fro^2 + \|(\BR_{k+1}\BQ_k^{-\top}-\BR_\star)\BSigma_\star^{1/2}\|_\fro^2 \cr
    \leq&~  (1-\eta)^2\left(\|\BDelta_L\BSigma_\star^{1/2}\|_\fro^2+\|\BDelta_R\BSigma_\star^{1/2}\|_\fro^2\right) \cr
    &~ + 2\left((1-\eta) \frac{2\varepsilon_0^3}{1-\varepsilon_0}  +\eta\frac{2\varepsilon_0^3+\varepsilon_0^4}{(1-\varepsilon_0)^2}\right)\eta\tau^{2k}\sigma_r^2(\BX_\star) \cr
    &~ + 24\eta(1-\eta)\alpha  \mu r^{1.5} \tau^{2k} \frac{\varepsilon_0}{1-\varepsilon_0}  \sigma_r^2(\BX_\star) \cr
    &~ + 24\eta^2\alpha  \mu r^{1.5} \tau^{2k} \frac{\varepsilon_0}{(1-\varepsilon_0)^2}  \sigma_r^2(\BX_\star) \cr
    &~ + 72\alpha^2 \mu^2 r^3 \tau^{2k} \frac{1}{(1-\varepsilon_0)^2}  \sigma_r^2(\BX_\star)\cr
    \leq&~  \Bigg( (1-\eta)^2 + 2\left((1-\eta) \frac{2\varepsilon_0}{1-\varepsilon_0}  +\eta\frac{2\varepsilon_0+\varepsilon_0^2}{(1-\varepsilon_0)^2}\right)\eta \cr
    &~ + 24\eta(1-\eta)\alpha  \mu r^{1.5}  \frac{1}{\varepsilon_0(1-\varepsilon_0)} + 24\eta^2\alpha  \mu r^{1.5}  \frac{1}{\varepsilon_0(1-\varepsilon_0)^2}  \cr
    &~ + 72\alpha^2 \mu^2 r^3  \frac{1}{\varepsilon_0^2(1-\varepsilon_0)^2}  \Bigg)\varepsilon_0^2\tau^{2k} \sigma_r^2(\BX_\star) \cr
    \leq&~ (1-0.6\eta)^2 \varepsilon_0^2\tau^{2k} \sigma_r^2(\BX_\star) ,
\]
where we use the fact $\|\BDelta_L\BSigma_\star^{1/2}\|_\fro^2+\|\BDelta_R\BSigma_\star^{1/2}\|_\fro^2 =: \distsquarek \leq \varepsilon_0^2\tau^{2k}\sigma_r^2(\BX_\star)$ in the second step, and the last step use $\varepsilon_0=0.02$, $\alpha\leq\frac{1}{10^4\mu r^{1.5}}$, and $\frac{1}{4}\leq\eta\leq \frac{8}{9}$. The proof is finished by substituting $\tau=1-0.6\eta$. 
%
\end{proof}

\begin{lemma} \label{lm:convergence_incoher}
If 
$
\distk \leq\varepsilon_0 \tau^k \sigma_r(\BX_\star)$ and $
\|\BDelta_L\BSigma_\star^{1/2}\|_{2,\infty}  \lor \|\BDelta_R\BSigma_\star^{1/2}\|_{2,\infty}\leq\sqrt{{\mu r}/{n}} \tau^k \sigma_r(\BX_\star),
$
then it holds
$
\|(\BL_{k+1}\BQ_{k+1}-\BL_\star)\BSigma_\star^{1/2}\|_{2,\infty}
\lor \|(\BR_{k+1}\BQ_{k+1}^{-\top}-\BR_\star)\BSigma_\star^{1/2}\|_{2,\infty} \leq  \sqrt{{\mu r}/{n}} \tau^{k+1} \sigma_r(\BX_\star).
$
\end{lemma}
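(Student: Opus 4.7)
The plan is to mirror the structure of the proof of Lemma~\ref{lm:convergence_dist}, but with the Frobenius norm replaced by the row-wise $\ell_{2,\infty}$ norm, and with a careful ``pull out'' of $\BL_\star$ (respectively $\BR_\star$) whenever possible so that we can exploit the $\mu$-incoherence bound $\|\BL_\star\BSigma_\star^{-1/2}\|_{2,\infty}=\|\BU_\star\|_{2,\infty}\leq\sqrt{\mu r/n}$. Throughout the argument I will focus on the $\BL$-side; the $\BR$-side is symmetric.

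First, I would use decomposition \eqref{eq:LQ-L} to write
\[
(\BL_{k+1}\BQ_k-\BL_\star)\BSigma_\star^{1/2} = \underbrace{(1-\eta)\BDelta_L\BSigma_\star^{1/2}}_{\mathfrak{T}_1} - \underbrace{\eta\BL_\star\BDelta_R^\top\BR_\natural(\BR_\natural^\top\BR_\natural)^{-1}\BSigma_\star^{1/2}}_{\mathfrak{T}_2} - \underbrace{\eta\BDelta_S\BR_\natural(\BR_\natural^\top\BR_\natural)^{-1}\BSigma_\star^{1/2}}_{\mathfrak{T}_3},
\]
and bound each piece in $\|\cdot\|_{2,\infty}$ by repeatedly using $\|AB\|_{2,\infty}\leq\|A\|_{2,\infty}\|B\|_2$. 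The term $\mathfrak{T}_1$ is handled directly by the induction hypothesis and contributes $(1-\eta)\sqrt{\mu r/n}\,\tau^k\sigma_r(\BX_\star)$. For $\mathfrak{T}_2$, I factor $\BL_\star = (\BL_\star\BSigma_\star^{-1/2})\BSigma_\star^{1/2}$ so that the $\ell_{2,\infty}$ bound of $\BU_\star$ carries the row-norm, while the remaining operator norm $\|\BSigma_\star^{1/2}\BDelta_R^\top\BR_\natural(\BR_\natural^\top\BR_\natural)^{-1}\BSigma_\star^{1/2}\|_2$ is controlled by Lemmas~\ref{lm:Delta_F_norm} and \ref{lm:L_R_scale_sigma_half}, yielding a bound of order $\frac{\eta\varepsilon_0}{1-\varepsilon_0}\sqrt{\mu r/n}\,\tau^k\sigma_r(\BX_\star)$. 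For the outlier term $\mathfrak{T}_3$, I use $\|\BDelta_S\|_{2,\infty}\leq\sqrt{\alpha n}\,\|\BDelta_S\|_\infty$ from Lemma~\ref{lm:bound of sparse matrix}, together with Lemma~\ref{lm:sparity} ($\|\BDelta_S\|_\infty\leq 2\|\BX_\star-\BX_k\|_\infty$) and Lemma~\ref{lm:X-X_K_inf_norm} ($\|\BX_\star-\BX_k\|_\infty\leq 3\mu r/n \cdot\tau^k\sigma_r(\BX_\star)$), producing a contribution of order $\frac{\eta\sqrt{\alpha\mu r}}{1-\varepsilon_0}\sqrt{\mu r/n}\,\tau^k\sigma_r(\BX_\star)$ that becomes negligible under $\alpha\leq 1/(10^4\mu r^{1.5})$.

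Next, I would translate the bound from $\BQ_k$-alignment to the optimal $\BQ_{k+1}$-alignment by writing
\[
(\BL_{k+1}\BQ_{k+1}-\BL_\star)\BSigma_\star^{1/2} = (\BL_{k+1}\BQ_k-\BL_\star)\BSigma_\star^{1/2} + \BL_{k+1}(\BQ_{k+1}-\BQ_k)\BSigma_\star^{1/2},
\]
and then further decomposing $\BL_{k+1} = \BL_\star\BQ_k^{-1} + (\BL_{k+1}\BQ_k-\BL_\star)\BQ_k^{-1}$ so that
\[
\BL_{k+1}(\BQ_{k+1}-\BQ_k)\BSigma_\star^{1/2} = \BL_\star\BSigma_\star^{-1/2}\cdot\BSigma_\star^{1/2}\BQ_k^{-1}(\BQ_{k+1}-\BQ_k)\BSigma_\star^{1/2} + (\BL_{k+1}\BQ_k-\BL_\star)\BSigma_\star^{-1/2}\cdot\BSigma_\star^{1/2}\BQ_k^{-1}(\BQ_{k+1}-\BQ_k)\BSigma_\star^{1/2}.
\]
The first piece is again row-controlled by $\|\BU_\star\|_{2,\infty}\leq\sqrt{\mu r/n}$ and Lemma~\ref{lm:sigma_half_Q-Q_sigma_half} gives $\|\BSigma_\star^{1/2}\BQ_k^{-1}(\BQ_{k+1}-\BQ_k)\BSigma_\star^{1/2}\|_2\leq\frac{2\varepsilon_0}{1-\varepsilon_0}\sigma_r(\BX_\star)$; to apply Lemma~\ref{lm:sigma_half_Q-Q_sigma_half} I first need to verify its hypothesis, which is exactly $\distkplusone\leq\varepsilon_0\tau^{k+1}\sigma_r(\BX_\star)$ provided by Lemma~\ref{lm:convergence_dist} (so this lemma chains on top of the previous one). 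The second piece is a higher-order term in $\varepsilon_0$ that is easily absorbed.

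Collecting everything, I obtain a bound of the form $\big(1-\eta + C_1\varepsilon_0 + C_2\sqrt{\alpha\mu r}\big)\sqrt{\mu r/n}\,\tau^k\sigma_r(\BX_\star)$ for explicit constants $C_1,C_2$. The main obstacle—and also the real content of the argument beyond bookkeeping—is choosing the splittings so that every row-wise factor becomes one that either (a) equals $\BU_\star$ or $\BV_\star$ and is controlled by incoherence, or (b) equals an already-controlled quantity via the induction hypothesis, while the remaining operator-norm factors fall into the regime covered by Lemmas~\ref{lm:bound of sparse matrix}--\ref{lm:X-X_K_inf_norm}. Finally, a numerical check with $\varepsilon_0=0.02$, $\alpha\leq 1/(10^4\mu r^{1.5})$, and $\eta\in[1/4,8/9]$ should force the bracketed factor below $\tau = 1-0.6\eta$, completing the induction.
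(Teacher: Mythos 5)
Your plan matches the paper's proof essentially step for step: the same decomposition \eqref{eq:LQ-L} into $\mathfrak{T}_1,\mathfrak{T}_2,\mathfrak{T}_3$ bounded via incoherence, Lemmas~\ref{lm:bound of sparse matrix}, \ref{lm:Delta_F_norm}, \ref{lm:L_R_scale_sigma_half}, \ref{lm:sparity}, and \ref{lm:X-X_K_inf_norm}, followed by the identical realignment from $\BQ_k$ to $\BQ_{k+1}$ using Lemma~\ref{lm:sigma_half_Q-Q_sigma_half} (whose hypothesis is, as you note, supplied by the $\BQ_k$-aligned bound established in the proof of Lemma~\ref{lm:convergence_dist}) and the same final numerical check. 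The only nit is that the hypothesis of Lemma~\ref{lm:sigma_half_Q-Q_sigma_half} is the $\BQ_k$-aligned spectral bound from \eqref{eq:dist_k+1_with_Q_k} rather than $\distkplusone$ itself, but this does not change the argument.
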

\begin{proof}
Using \eqref{eq:LQ-L} again, we have
\[
    &~\|(\BL_{k+1}\BQ_k-\BL_\star)\BSigma_\star^{1/2}\|_{2,\infty} \cr
    \leq &~ (1-\eta)\|\BDelta_L\BSigma_\star^{1/2}\|_{2,\infty}
    +\eta\|\BL_\star\BDelta_R^\top\BR_\natural(\BR_\natural^\top\BR_\natural)^{-1}\BSigma_\star^{1/2}\|_{2,\infty} \cr
    &~ +\eta\|\BDelta_S\BR_\natural(\BR_\natural^\top\BR_\natural)^{-1}\BSigma_\star^{1/2}\|_{2,\infty} \cr
    :=&~ \mathfrak{T}_1 + \mathfrak{T}_2 +\mathfrak{T}_3.
\]

\vspace{0.05in}
\noindent\textbf{Bound of $\mathfrak{T}_1$.} $\mathfrak{T}_1\leq(1-\eta)\sqrt{{\mu r}/{n}}\tau^k\sigma_r(\BX_\star)$ is directly followed by the assumption of this lemma.

\vspace{0.05in}
\noindent\textbf{Bound of $\mathfrak{T}_2$.} Assumption~\ref{as:incoherence} implies $\|\BL_\star\BSigma_\star^{-1/2}\|_{2,\infty}\leq\sqrt{{\mu r}/{n}}$ and Lemma~\ref{lm:Delta_F_norm} implies $\|\BDelta_R\BSigma_\star^{1/2}\|_2\leq \tau^k \varepsilon_0$. 
Together with  Lemma~\ref{lm:L_R_scale_sigma_half} 
, we have
\[
    \mathfrak{T}_2
    &\leq \eta\|\BL_\star\BSigma_\star^{-1/2}\|_{2,\infty} \|\BDelta_R\BSigma_\star^{1/2}\|_2\|\BR_\natural(\BR_\natural^\top\BR_\natural)^{-1}\BSigma_\star^{1/2}\|_2 \cr
    &\leq \eta \frac{\varepsilon_0}{1-\varepsilon_0}\sqrt{\frac{\mu r}{n}}\tau^k\sigma_r(\BX_\star) .
\]

\vspace{0.05in}
\noindent\textbf{Bound of $\mathfrak{T}_3$.} By Lemma~\ref{lm:sparity}, $\supp(\BDelta_S)\subseteq\supp(\BS_\star)$, which implies that $\BDelta_S$ is an $\alpha$-sparse matrix. 
By Lemma~\ref{lm:bound of sparse matrix}, we have
\[
    \mathfrak{T}_3 
    &\leq \eta\|\BDelta_S\|_{2,\infty} \|\BR_\natural(\BR_\natural^\top\BR_\natural)^{-1}\BSigma_\star^{1/2}\|_2 \cr
    &\leq \eta\frac{\sqrt{\alpha n}}{1-\varepsilon_0}\|\BDelta_S\|_\infty 
    ~\leq 2\eta\frac{\sqrt{\alpha n}}{1-\varepsilon_0}\|\BX_\star-\BX_k\|_\infty \cr
    &\leq 6\eta\frac{\sqrt{\alpha \mu r}}{1-\varepsilon_0} \sqrt{\frac{\mu r}{n}} \tau^k \sigma_r(\BX_\star),
\]
where the last two steps use Lemmas~\ref{lm:sparity} and \ref{lm:X-X_K_inf_norm}. Together, 
\[ 
    &~\|(\BL_{k+1}\BQ_k-\BL_\star)\BSigma_\star^{1/2}\|_{2,\infty}
    ~\leq \mathfrak{T}_1 + \mathfrak{T}_2 +\mathfrak{T}_3 \cr
    \leq& \left(  
    1-\eta
    + \eta \frac{\varepsilon_0}{1-\varepsilon_0}
    + 6\eta\frac{\sqrt{\alpha \mu r}}{1-\varepsilon_0} \right)
    \sqrt{\frac{\mu r}{n}} \tau^k \sigma_r(\BX_\star) \cr
    \leq& \left(  
    1-\eta\left(1
    - \frac{\varepsilon_0}{1-\varepsilon_0}
    - 6\frac{\sqrt{\alpha \mu r}}{1-\varepsilon_0}\right) \right)
    \sqrt{\frac{\mu r}{n}} \tau^k \sigma_r(\BX_\star). \label{eq:LQ-L_sigma_half_2_inf}
\]
In addition, we also have 
\[\label{eq:LQ-L_sigma_-half_2_inf}
&~
     \|(\BL_{k+1}\BQ_k-\BL_\star)\BSigma_\star^{-1/2}\|_{2,\infty} \cr
     \leq& \left(  
    1-\eta\left(1
    - \frac{\varepsilon_0}{1-\varepsilon_0}
    - 6\frac{\sqrt{\alpha \mu r}}{1-\varepsilon_0}\right) \right)
    \sqrt{\frac{\mu r}{n}} \tau^k.
\]

\vspace{0.05in}
\noindent\textbf{Bound with $\BQ_{k+1}$.} Note that $\BQ$'s are the best align matrices under Frobenius norm but this is not necessarily true under $\ell_{2,\infty}$ norm. So we must show the bound of $\|(\BL_{k+1}\BQ_{k+1}-\BL_\star)\BSigma_\star^{1/2}\|_{2,\infty}$ directly. 
Note that $\BQ_{k+1}$ does exist, according to \cite[Lemma~9]{tong2021accelerating}.
Applying \eqref{eq:LQ-L_sigma_half_2_inf}, \eqref{eq:LQ-L_sigma_-half_2_inf} and Lemma~\ref{lm:sigma_half_Q-Q_sigma_half} gives
\[
   &~ \|(\BL_{k+1}\BQ_{k+1}-\BL_\star)\BSigma_\star^{1/2}\|_{2,\infty} \cr 
   \leq&~ \|(\BL_{k+1}\BQ_k-\BL_\star)\BSigma_\star^{1/2}\|_{2,\infty} \cr 
   &~+ \|\BL_{k+1}(\BQ_{k+1}-\BQ_k)\BSigma_\star^{1/2}\|_{2,\infty} \cr
   =&~ \|(\BL_{k+1}\BQ_k-\BL_\star)\BSigma_\star^{1/2}\|_{2,\infty} \cr
   &~ + \|\BL_{k+1}\BQ_k\BSigma_\star^{-1/2}\BSigma_\star^{1/2}\BQ_k^{-1}(\BQ_{k+1}-\BQ_k)\BSigma_\star^{1/2}\|_{2,\infty}  \cr
   \leq&~ \|(\BL_{k+1}\BQ_k-\BL_\star)\BSigma_\star^{1/2}\|_{2,\infty} \cr 
   &~+ \|\BL_{k+1}\BQ_k\BSigma_\star^{-1/2}\|_{2,\infty} \|\BSigma_\star^{1/2}\BQ_k^{-1}(\BQ_{k+1}-\BQ_k)\BSigma_\star^{1/2}\|_2 \cr
   \leq&~ \|(\BL_{k+1}\BQ_k-\BL_\star)\BSigma_\star^{1/2}\|_{2,\infty} \cr
   &~ +(\|(\BL_{k+1}\BQ_k-\BL_\star)\BSigma_\star^{-1/2}\|_{2,\infty}  \cr
   &~ +\|\BL_\star\BSigma_\star^{-1/2} \|_{2,\infty}) \|\BSigma_\star^{1/2}\BQ_k^{-1}(\BQ_{k+1}-\BQ_k)\BSigma_\star^{1/2}\|_2 
   \cr
   \leq&~ \sqrt{\frac{\mu r}{n}} \tau^k \sigma_r(\BX_\star)\Bigg(  1-\eta\left(1
    - \frac{\varepsilon_0}{1-\varepsilon_0}
    - 6\frac{\sqrt{\alpha \mu r}}{1-\varepsilon_0}\right) \cr 
    &~ + \frac{2\varepsilon_0}{1-\varepsilon_0} \left(  2-\eta\left(1
    - \frac{\varepsilon_0}{1-\varepsilon_0}
    - 6\frac{\sqrt{\alpha \mu r}}{1-\varepsilon_0}\right) \right)\Bigg) \cr
    \leq&~ (1-0.6\eta)\sqrt{\frac{\mu r}{n}} \tau^k \sigma_r(\BX_\star) ,
\]
where the last step use $\varepsilon_0=0.02$, $\alpha\leq\frac{1}{10^4\mu r^{1.5}}$, and $\frac{1}{4}\leq\eta\leq \frac{8}{9}$. Similar result for $\|(\BR_{k+1}\BQ_{k+1}^{-\top}-\BR_\star)\BSigma_\star^{1/2}\|_{2,\infty}$. 
The proof is finished by substituting $\tau=1-0.6\eta$. 
\end{proof}

Now we have all the ingredients for proving the theorem of local linear convergence, i.e., Theorem~\ref{thm:local convergence}.
\begin{proof}[Proof of Theorem~\ref{thm:local convergence}]
This proof is done by induction.
\noindent\textbf{Base case.} Since $\tau^0=1$, the assumed initial conditions satisfy the base case at $k=0$.
\noindent\textbf{Induction step.} At the $k$-th iteration, we assume the conditions
\[
&~ 
\distk \leq \varepsilon_0 \tau^k \sigma_r(\BX_\star), \cr
&~ \|(\BL_k\BQ_k-\BL_\star)\BSigma_\star^{1/2}\|_{2,\infty}  \lor \|(\BR_k\BQ_k^{-\top}-\BR_\star)\BSigma_\star^{1/2}\|_{2,\infty} \cr 
\leq &~ \sqrt{{\mu r}/{n}} \tau^k \sigma_r(\BX_\star)
\]
hold, then by Lemmas~\ref{lm:convergence_dist} and \ref{lm:convergence_incoher},
\[
 &\qquad\distkplusone \leq\varepsilon_0 \tau^{k+1} \sigma_r(\BX_\star), \cr
&\|(\BL_{k+1}\BQ_{k+1}-\BL_\star)\BSigma_\star^{1/2}\|_{2,\infty} \cr 
&\lor \|(\BR_{k+1}\BQ_{k+1}^{-\top}-\BR_\star)\BSigma_\star^{1/2}\|_{2,\infty}  
\leq  \sqrt{{\mu r}/{n}} \tau^{k+1} \sigma_r(\BX_\star)
\]
also hold. This finishes the proof.
\end{proof}

\vspace{-0.1in}
\subsection{Proof of guaranteed initialization} \label{sec:guaranteed initialization}
Now we show that the outputs of the initialization step in Algorithm~\ref{algo:LRMC} satisfy the initial conditions required by Theorem~\ref{thm:local convergence}.
\begin{proof}[Proof of Theorem~\ref{thm:initial}]
Firstly, by Assumption~\ref{as:incoherence}, we obtain
\[
    \|\BX_\star\|_\infty\leq\|\BU_\star\|_{2,\infty}\|\BSigma_\star\|_2\|\BV_\star\|_{2,\infty}\leq\frac{\mu r}{n}\sigma_1(\BX_\star).
\]
Invoking Lemma~\ref{lm:sparity} with $\BX_{-1}=\bm{0}$, we have 
\[ 
    \|\BS_\star-\BS_0\|_\infty\leq 2\frac{\mu r}{n}\sigma_1(\BX_\star) \quad\textnormal{and}\quad \supp(\BS_0)\subseteq\supp(\BS_\star),
\]
which implies $\BS_\star-\BS_0$ is $\alpha$-sparse. Applying Lemma~\ref{lm:bound of sparse matrix} to get 
\[
    \|\BS_\star-\BS_0\|_2\leq \alpha n\|\BS_\star-\BS_0\|_\infty
    \leq2 \alpha \mu r \kappa \sigma_r(\BX_\star).
\]
Since $\BX_0=\BL_0\BR_0^\top$ is the best rank-$r$ approximation of $\BY-\BS_0$, 
\[
    \|\BX_\star-\BX_0\|_2
    &\leq \|\BX_\star-(\BY-\BS_0)\|_2 + \|(\BY-\BS_0) -\BX_0 \|_2 \cr
    &\leq 2\|\BX_\star-(\BY-\BS_0)\|_2 \cr
    &= 2\|\BS_\star-\BS_0\|_2 ~
    \leq 4 \alpha \mu r \kappa \sigma_r(\BX_\star),
\]
where the equality uses the definition $\BY=\BX_\star+\BS_\star$. By \cite[Lemma~11]{tong2021accelerating}, we obtain
\[
    \distzero &\leq \sqrt{\sqrt{2}+1}\|\BX_\star-\BX_0\|_\fro \cr
    &\leq \sqrt{(\sqrt{2}+1)2r}\|\BX_\star-\BX_0\|_2 \cr
    &\leq 10  \alpha \mu r^{1.5} \kappa \sigma_r(\BX_\star),
\]
where we use the fact that $\BX_\star-\BX_0$ has at most rank-$2r$. Given $\alpha\leq\frac{c_0}{\mu r^{1.5}\kappa}$, we prove the first claim:
\[ \label{eq:init_dist}
    \distzero \leq 10 c_0 \sigma_r(\BX_\star).
\]

Let $\varepsilon_0:=10c_0$. Now, we will prove the second claim:
\[
\|\BDelta_L\BSigma_\star^{1/2}\|_{2,\infty} \lor \|\BDelta_R\BSigma_\star^{1/2}\|_{2,\infty} \leq \sqrt{{\mu r}/{n}} \sigma_r(\BX_\star),
\]
where $\BDelta_L:=\BL_0\BQ_0-\BL_\star$ and $\BDelta_R:=\BR_0\BQ_0^{-\top}-\BR_\star$. For ease of notation, we also denote $\BL_\natural=\BL_0\BQ_0$, $\BR_\natural=\BR_0\BQ_0^{-\top}$, and $\BDelta_S=\BS_0-\BS_\star$ in the rest of this proof. 

We will work on $\|\BDelta\BSigma_\star^{1/2}\|_{2,\infty}$ first, and $\|\BDelta\BSigma_\star^{1/2}\|_{2,\infty}$ can be bounded similarly. 

Since $\BU_0\BSigma_0\BV_0^{\top}=\SVD_r(\BY-\BS_0)=\SVD_r(\BX_\star-\BDelta_S)$,  
\[
    \BL_0 = \BU_0\BSigma_0^{1/2} 
    &= (\BX_\star-\BDelta_S)\BV_0\BSigma_0^{-1/2} 
    = (\BX_\star-\BDelta_S)\BR_0\BSigma_0^{-1} \cr
    &= (\BX_\star-\BDelta_S)\BR_0(\BR_0^\top\BR_0)^{-1}.
\]
Multiplying $\BQ_0\BSigma_\star^{1/2}$ on both sides, we have
\[
    \BL_\natural\BSigma_\star^{1/2} = \BL_0\BQ_0\BSigma_\star^{1/2} 
    &= (\BX_\star-\BDelta_S)\BR_0(\BR_0^\top\BR_0)^{-1} \BQ_0\BSigma_\star^{1/2} \cr
    &= (\BX_\star-\BDelta_S)\BR_\natural(\BR_\natural^\top\BR_\natural)^{-1} \BSigma_\star^{1/2}.
\]
Subtracting $\BX_\star\BR_\natural(\BR_\natural^\top\BR_\natural)^{-1}\BSigma_\star^{1/2}$ on both sides, we have
\[
    &~ \BL_\natural\BSigma_\star^{1/2} - \BL_\star\BR_\star^\top\BR_\natural(\BR_\natural^\top\BR_\natural)^{-1}\BSigma_\star^{1/2} \cr 
    = &~ (\BX_\star-\BDelta_S)\BR_\natural(\BR_\natural^\top\BR_\natural)^{-1} \BSigma_\star^{1/2} -\BX_\star\BR_\natural(\BR_\natural^\top\BR_\natural)^{-1}\BSigma_\star^{1/2} \cr
    &~ \BDelta_L\BSigma_\star^{1/2}+\BL_\star\BDelta_R^\top  \BR_\natural(\BR_\natural^\top\BR_\natural)^{-1} \BSigma_\star^{1/2} \cr
    = &~ -\BDelta_S\BR_\natural(\BR_\natural^\top\BR_\natural)^{-1} \BSigma_\star^{1/2} ,
\]
where the left operand of last step uses the fact  $\BL_\star\BSigma_\star^{1/2} =\BL_\star\BR_\natural^\top\BR_\natural(\BR_\natural^\top\BR_\natural)^{-1}\BSigma^{1/2}$.
Thus, 
\[
\| \BDelta_L\BSigma_\star^{1/2} \|_{2,\infty} 
\leq&~ \|\BL_\star\BDelta_R^\top  \BR_\natural(\BR_\natural^\top\BR_\natural)^{-1} \BSigma_\star^{1/2}\|_{2,\infty} \cr 
 &~ + \|\BDelta_S\BR_\natural(\BR_\natural^\top\BR_\natural)^{-1} \BSigma_\star^{1/2}\|_{2,\infty} \cr
    :=&~ \mathfrak{J}_1 + \mathfrak{J}_2.
\]

\vspace{0.05in}
\noindent\textbf{Bound of $\mathfrak{J}_1$.} By Assumption~\ref{as:incoherence}, we get
\[
    \mathfrak{J}_1 &\leq \|\BL_\star\BSigma_\star^{-1/2}\|_{2,\infty} \|\BDelta_R\BSigma_\star^{1/2}\|_2  \|\BR_\natural(\BR_\natural^\top\BR_\natural)^{-1} \BSigma_\star^{1/2}\|_2 \cr
    &\leq \sqrt{\frac{\mu r}{n}}\frac{\varepsilon_0}{1-\varepsilon_0}\sigma_r(\BX_\star),
\]
where Lemma~\ref{lm:Delta_F_norm} implies $\|\BDelta_R\BSigma_\star^{1/2}\|_2 \leq \varepsilon_0 \sigma_r(\BX_\star)$, and Lemma~\ref{lm:L_R_scale_sigma_half} implies $\|\BR_\natural(\BR_\natural^\top\BR_\natural)^{-1} \BSigma_\star^{1/2}\|_2 \leq\frac{1}{1-\varepsilon_0}$, given \eqref{eq:init_dist}.

\vspace{0.05in}
\noindent\textbf{Bound of $\mathfrak{J}_2$.} 
Notice that $\BDelta_S$ is $\alpha$-sparse. Moreover, by \eqref{eq:init_dist}, Lemmas~\ref{lm:bound of sparse matrix} and \ref{lm:L_R_scale_sigma_half}, we have
\[
    \mathfrak{J}_2 &\leq \|\BDelta_S\|_{1,\infty}\|\BR_\natural\BSigma_\star^{-1/2}\|_{2,\infty}\|\BSigma_\star^{1/2}(\BR_\natural^\top\BR_\natural)^{-1} \BSigma_\star^{1/2}\|_2 \cr
    &\leq \alpha n\|\BDelta_S\|_\infty\|\BR_\natural\BSigma_\star^{-1/2}\|_{2,\infty}\|\BR_\natural(\BR_\natural^\top\BR_\natural)^{-1} \BSigma_\star^{1/2}\|_2^2 \cr
    &\leq \alpha n \frac{2 \mu r}{n}\sigma_1(\BX_\star)\frac{1}{(1-\varepsilon_0)^2}\|\BR_\natural\BSigma_\star^{-1/2}\|_{2,\infty} \cr
    &\leq \frac{2 \alpha \mu r \kappa}{(1-\varepsilon_0)^2}\left(\sqrt{\frac{\mu r}{n}}+\|\BDelta_R\BSigma_\star^{-1/2}\|_{2,\infty}\right) \sigma_r(\BX_\star),
\]
where the first step uses  $\|\bm{A}\bm{B}\|_{2,\infty}\leq\|\bm{A}\|_{1,\infty}\|\bm{B}\|_{2,\infty}$. 
Note that $\|\BDelta_R\BSigma_\star^{-1/2}\|_{2,\infty}\leq \frac{\|\BDelta_R\BSigma_\star^{1/2}\|_{2,\infty}}{\sigma_r(\BX_\star)}$. 
Hence, 
\[
    \| \BDelta_L\BSigma_\star^{1/2} \|_{2,\infty} 
    \leq &~ \left(\frac{\varepsilon_0}{1-\varepsilon_0}+\frac{2 \alpha \mu r \kappa}{(1-\varepsilon_0)^2}\right) \sqrt{\frac{\mu r}{n}} \sigma_r(\BX_\star) \cr 
    &~ + \frac{2 \alpha \mu r \kappa}{(1-\varepsilon_0)^2}\|\BDelta_R\BSigma_\star^{1/2}\|_{2,\infty} ,
\]
and similarly, one can see
\[
    \| \BDelta_R\BSigma_\star^{1/2} \|_{2,\infty} 
    \leq &~ \left(\frac{\varepsilon_0}{1-\varepsilon_0}+\frac{2 \alpha \mu r \kappa}{(1-\varepsilon_0)^2}\right) \sqrt{\frac{\mu r }{n}} \sigma_r(\BX_\star) \cr 
    &~ + \frac{2 \alpha \mu r \kappa}{(1-\varepsilon_0)^2}\|\BDelta_L\BSigma_\star^{1/2}\|_{2,\infty}  .
\]
Therefore, substituting $\varepsilon_0=10 c_0$ gives
\[
    &~ \| \BDelta_L\BSigma_\star^{1/2} \|_{2,\infty} \lor \| \BDelta_R\BSigma_\star^{1/2} \|_{2,\infty} 
    ~\leq \mathfrak{J}_1 + \mathfrak{J}_2 \cr
    \leq &~ \frac{(1-\varepsilon_0)^2}{(1-\varepsilon_0)^2-2\alpha \mu r \kappa}\left(\frac{\varepsilon_0}{1-\varepsilon_0}+\frac{2 \alpha \mu r \kappa}{(1-\varepsilon_0)^2}\right) \sqrt{\frac{\mu r }{n}} \sigma_r(\BX_\star) \cr
    \leq &~ \sqrt{{\mu r}/{n}} \sigma_r(\BX_\star),
\]
as long as $c_0\leq \frac{1}{35}$. 
This finishes the proof.
\end{proof}


\section{Conclusion}
This paper introduces a novel scalable and learnable approach to large-scale Robust Matrix Completion problems, coined Learned Robust Matrix Completion (LRMC). It is paired with a novel feedforward-recurrent-mixed neural network model that flexibly learns the algorithm for potentially infinite iterations without retraining or compromising performance. 
We theoretically reveal the learning potential of the proposed LRMC. Through extensive numerical experiments, we demonstrate the superiority of LRMC over existing state-of-the-art approaches.

\begin{small}
\bibliographystyle{IEEEtran}
\bibliography{IEEEabrv,ref}
\end{small}

\end{document}